\newcommand*{\inlineequation}[2][]{%
  \begingroup
    \refstepcounter{equation}%
    \ifx\\#1\\%
    \else
      \label{#1}%
    \fi
    \relpenalty=10000 %
    \binoppenalty=10000 %
    \ensuremath{%
      #2%
    }%
    ~\@eqnnum
  \endgroup
}
\newtheorem{theorem}{Theorem}
\newtheorem{corollary}{Corollary}
\newtheorem{proposition}{Proposition}
\newtheorem{remark}{Remark}
\newtheorem{assumption}{Assumption}
\newtheorem{appxlem}{Lemma}[section]
\newtheorem{appxcor}{Corollary}[section]
\begin{document}
\newcommand{\E}{\mathbb{E}}
\newcommand{\rs}[2]{\binom{[#2]}{#1}}  
\newcommand{\R}{\mathbb{R}}
\newcommand{\sus}{s_2}  
\newcommand{\bma}{\bm{a}}
\newcommand{\bmb}{\bm{b}}
\newcommand{\x}{\bm{x}}
\newcommand{\X}{\bm{X}}
\newcommand{\y}{\bm{y}}
\newcommand{\z}{\bm{z}}
\newcommand{\w}{\bm{w}}
\newcommand{\uu}{\bm{u}}
\newcommand{\A}{\bm{A}}
\newcommand{\e}{\bm{\varepsilon}}
\newcommand{\stoch}{\bm{\xi}}
\newcommand{\OO}{\mathcal{O}}

\title{Zeroth-Order Hard-Thresholding:\\Gradient Error vs. Expansivity}

\author{%
  William de Vazelhes$^{1}$, Hualin Zhang$^{2}$, Huimin Wu$^{2}$, Xiao-Tong Yuan$^{2}$, Bin Gu$^{1}$ \\
  $^{1}$Mohamed bin Zayed University of Artificial Intelligence \\$^{2}$Nanjing University of Information Science \& Technology\\
 \texttt{\{wdevazelhes,zhanghualin98,xtyuan1980,jsgubin\}@gmail.com}, \\ \texttt{wuhuimin@nuist.edu.cn} \\
}

\date{2022}

\maketitle

\begin{abstract}
$\ell_0$ constrained optimization is prevalent in machine learning, particularly for high-dimensional problems, because it is a fundamental approach to achieve sparse learning. Hard-thresholding gradient descent is a dominant technique to solve this problem. However, first-order gradients of the objective function may be either unavailable or expensive to calculate in a lot of real-world problems, where zeroth-order (ZO) gradients could be a good surrogate. Unfortunately, whether ZO gradients can work with the hard-thresholding operator is still an unsolved problem.
To solve this puzzle, in this paper, we focus on the $\ell_0$ constrained black-box stochastic optimization problems, and propose a new stochastic zeroth-order gradient hard-thresholding (SZOHT) algorithm with  a general ZO gradient estimator powered by a novel random support sampling. We provide the convergence analysis of SZOHT under standard assumptions.   Importantly, we   reveal a conflict between  the deviation of  ZO estimators and  the expansivity of the hard-thresholding operator,  and provide a theoretical   minimal value of the number of random directions in ZO gradients. In addition,  we find that the query complexity of SZOHT is independent or weakly dependent on the dimensionality under different settings.  Finally, we illustrate the utility of our method on a portfolio optimization problem as well as black-box adversarial attacks.
  \end{abstract}

\section{Introduction}
\label{sec:introduction}

$\ell_0$ constrained optimization is prevalent in machine learning, particularly for high-dimensional problems, because it is a fundamental approach to achieve sparse learning. In addition to improving the memory, computational and environmental footprint of the models, these sparse constraints help reduce overfitting and obtain consistent statistical estimation \cite{yuan2021stability,buhlmann2011statistics,raskutti2011minimax,negahban2012unified}.
We formulate the problem as follows:
\begin{equation}
  \min \limits_{\x\in \R^d} \left\{ f(\x) := \E_{\stoch} f(\x,\stoch) \right\},  \quad  \text{ s.t. }  \quad   \|\x\|_0 \leq k
  \label{eq:opt_pb}  
\end{equation}
where $f(\cdot, \stoch): \mathbb{R}^d \rightarrow \mathbb{R}$ is a differentiable function and $\stoch$ is a noise term, for instance related to an underlying finite sum structure in $f$, of the form: $\E_{\stoch} f(\x, \stoch) = \frac{1}{n}\sum_{i=1}^n f_i(\x)$.
Hard-thresholding gradient algorithm \cite{Jain14,nguyen2017linear,yuan2017gradient} is a dominant technique to solve this problem. It generally consists in alternating between a gradient step, and a hard-thresholding operation which only keeps the $k$-largest components (in absolute value) of the current iterate. The advantage of hard-thresholding over its convex relaxations (\cite{tibshirani1996regression,van2008high}) is that it can often attain similar precision, but is more computationally efficient, since it can directly ensure a desired sparsity level instead of tuning an $\ell_1$ penalty or constraint. The only expensive computation in hard-thresholding is the hard-thresholding step itself, which requires finding the top $k$ elements of the current iterate.  Hard-thresholding was originally developed in its full gradient form \cite{Jain14}, but has been later on extended to the stochastic setting by \citet{nguyen2017linear}, which developed a stochastic gradient descent (SGD) version of hard thresholding (StoIHT), and further more with \citet{Zhou18}, \citet{shen2017tight} and \citet{li2016nonconvex}, which used variance reduction technique to improve upon StoIHT.

\begin{table}[t]
  \caption{{Complexity of sparsity-enforcing algorithms. We give the query complexity for a precision $\varepsilon$, up to the system error (see section \ref{sec:glob}). For first-order algorithms (FO), we give it in terms of number of first order oracle calls (\#IFO), that is, calls to $\nabla f(x, \stoch)$, and for ZO algorithms,  in terms of calls of $f(\stoch, \cdot)$. Here $\kappa$ denotes the condition number $\frac{L}{\nu}$, with $L$ is the smoothness (or RSS) constant and $\nu$ is the strong-convexity (or RSC) constant.
}}
  \label{table:sota}
  \begin{threeparttable}
  \begin{tabular}[c]{l l l l l}
    Type& Name &  Assumptions & \#IZO/\#IFO  & \#HT ops.\\
    \hhline{= = = = =}
    FO/$\ell_0$ &    StoIHT \cite{nguyen2017linear} & RSS, RSC & $\OO(\kappa \log(\frac{1}{\varepsilon }))$ & $\OO(\kappa \log (\frac{1}{\varepsilon}))$\\
    \midrule
    ZO/$\ell_1$ & RSPGF \cite{ghadimi2016mini} & smooth\tnote{3}  & $\OO(\frac{d}{\varepsilon^2})$ & --- \\
    \midrule
        ZO/$\ell_1$ & ZSCG\tnote{2} \cite{Balasubramanian18} & \makecell[l]{convex, smooth} & $\OO(\frac{d}{\varepsilon^2})$ & --- \\
    \midrule
    ZO/$\ell_1$ & ZORO \cite{Cai20} & \makecell[l]{$s$-sparse gradient,\\
                                      weakly sparse hessian,\\
    smooth\tnote{3},  $\text{RSC}_{\text{bis}}$\tnote{1}} & $\OO(s \log(d) \log(\frac{1}{\varepsilon}))$ & --- \\
    \midrule
    ZO/$\ell_0$ & \textbf{SZOHT} & RSS, RSC&  $ \OO((k + \frac{d}{\sus}) \kappa^2 \log(\frac{1}{\varepsilon}))$ & $ \OO(\kappa^2\log(\frac{1}{\varepsilon}))$ \\
    \midrule
    ZO/$\ell_0$  & \textbf{SZOHT} &smooth, RSC&  $\OO(k \kappa^2\log(\frac{1}{\varepsilon}))$ & $\OO(\kappa^2 \log (\frac{1}{\varepsilon}))$ \\
    \bottomrule
  \end{tabular}
  \begin{tablenotes}\footnotesize
  \item[1] The definition of Restricted Strong Convexity from \cite{Cai20} is different from ours and that of \cite{nguyen2017linear}, hence the $\text{bis}$ subscript. \item[2] We refer to the modified version of ZSCG (Algorithm 3 in \cite{Balasubramanian18}).  \item[3] RSPGF and ZORO minimize $f(x) + \lambda \|x\|_1$: only $f$ needs to be smooth.
      \end{tablenotes}
  \end{threeparttable}
  \vspace{-16pt}
\end{table}
However, the first-order gradients used in the above methods may be either unavailable or expensive to calculate in a lot of real-world problems. For example, in certain graphical modeling tasks \cite{wainwright2008graphical}, obtaining the gradient of the objective function is computationally hard. Even worse, in some settings, the gradient is inaccessible by nature, for instance in bandit problems \cite{Shamir17}, black-box adversarial attacks \cite{Tu19,chen2017zoo,chen2019zo}, or reinforcement learning \cite{salimans2017evolution,mania2018simple,choromanski2020provably}. To tackle those problems, ZO  optimization methods have been developed \cite{Nesterov17}. Those methods usually replace the inaccessible gradient by its finite difference approximation which can be computed only from function evaluations, following the idea that for a differentiable function $f: \mathbb{R}\rightarrow \mathbb{R}$, we have: $f'(x) = \lim_{h\rightarrow 0}\frac{f(x+h) - f(x)}{h}$. Later on, ZO methods have been adapted to deal with a convex constraints set, and can therefore be used to solve the $\ell_1$ convex relaxation of problem \eqref{eq:opt_pb}. To that end, \citet{ghadimi2016mini}, and \citet{Cai20} introduce  proximal ZO algorithms, \citet{liu2018zeroth} introduce a  ZO projected gradient algorithm and \citet{Balasubramanian18} introduce a ZO conditional gradient \cite{levitin1966constrained} algorithm.  We provide a review of those results in Table \ref{table:sota}. As can be seen from the table, their query complexity is high (linear in $d$), except \cite{Cai20} that has a complexity of $\OO(s \log(d) \log(\frac{1}{\varepsilon}))$, but assumes that gradients are sparse. In addition, those methods must introduce a hyperparameter $\lambda$ (the strength of the $\ell_1$ penalty) or $R$ (the radius of the $\ell_1$ ball), which need to be tuned to find which value ensures the right sparsity level. Therefore, it would be interesting to use the hard-thresholding techniques described in the previous paragraph, instead of those convex relaxations.

Unfortunately, ZO hard-thresholding gradient algorithms have not been exploited formally. Even more, whether ZO gradients can work with the hard-thresholding operator is still an unknown problem. Although there was one related algorithm proposed recently by \citet{Balasubramanian18}, they did not target $\ell_0$ constrained optimization problem and importantly have strong assumptions in their convergence analysis. Indeed, they assume that the gradients,  as well as the solution of the unconstrained problem, are $s$-sparse: $\|\nabla f(\x)\|_0\leq s $ and $\|\x^*\|_0\leq s^* \approx s$, where $\x^* = \arg\min_{\x} f(\x)$. In addition, it was recently shown by \citet{Cai20} that they must in fact assume that the support of the gradient is fixed for all $\x\in \mathcal{X}$, for their convergence result to hold, which is a hard limitation, since that amounts to say that the function $f$  depends on $s$ fixed variables. 

To fill this gap, in this paper, we focus on the $\ell_0$ constrained black-box stochastic optimization problems, and propose a novel stochastic zeroth-order gradient hard-thresholding (SZOHT) algorithm. Specifically, we propose a dimension friendly ZO gradient estimator powered by a novel random support sampling technique, and then embed it into the standard hard-thresholding operator. 

We then provide the convergence and complexity analysis of SZOHT under the standard assumptions of sparse learning, which are restricted strong smoothness (RSS), and restricted strong convexity (RSC)  \cite{nguyen2017linear,shen2017tight}, to retain generality, therefore providing a positive answer to the question of whether ZO gradients can work with the hard-thresholding operator. Crucial to our analysis is to provide carefully tuned requirements on the parameters $q$ (the number of random directions used to estimate the gradient, further defined in Section \ref{sec:definition-algorithm}) and $k$. Finally, we illustrate the utility of our method on a portfolio optimization problem as well as black-box adversarial attacks, by showing that our method can achieve competitive performance in comparison to state of the art methods for sparsity-enforcing zeroth-order algorithm described in Table \ref{table:sota}, such as  \cite{ghadimi2016mini,Balasubramanian18,Cai20}.

Importantly, we also show that in the smooth case, the query complexity of SZOHT is independent of the dimensionality, which is significantly different to the dimensionality dependent results for most existing ZO algorithms. Indeed, it is known from  \citet{Jamieson12} that the worst case query complexity of ZO optimization over the class $\mathcal{F}_{\nu, L}$ of $\nu$-strongly convex and $L$-smooth functions defined over a convex set $\mathcal{X}$ is linear in $d$. Our work is thus in line with other works achieving dimension-insensitive query complexity in zeroth-order optimization such as \cite{Golovin19,sokolov2018sparse,Wang18,Cai20,cai2021zeroth,Balasubramanian18,Cai20,Liu21,Jamieson12}, but contrary to those, instead of making further assumptions (i.e. restricting the class $\mathcal{F}_{\nu, L}$ to a smaller class), we bypass the impossibility result by replacing the convex feasible set $\mathcal{X}$ by a \textit{non-convex} set (the $\ell_0$ ball), which is how we can avoid making stringent assumptions on the class of functions $f$.

\vspace{-4pt}
\paragraph{Contributions.}
\label{sec:contributions}
We summarize the main contributions of our paper as follows:
\begin{enumerate}[leftmargin=0.2in]
\vspace{-5pt}
\item We propose a new algorithm SZOHT that is, up to our knowledge, the first zeroth-order sparsity constrained algorithm that is dimension independent under the smoothness assumption, without assuming any gradient sparsity.
\vspace{-3pt}
\item We reveal an interesting conflict between the error from zeroth-order estimates and the hard-thresholding operator, which results in a minimal value for the number of random directions $q$ that is necessary to ensure at each iteration. 
\vspace{-3pt}  \item We also provide the convergence analysis of our algorithm in the more general RSS setting, providing, up to our knowledge, the first zeroth-order algorithm that can work with the usual assumptions of RSS/RSC from the hard-thresholding literature.
\vspace{-3pt}
  \end{enumerate}

\section{Preliminaries}\label{sec:prelim}
Throughout this paper, we denote by  $\|\x \|$ the Euclidean norm for a vector $\x \in \R^d$, by $\|\x \|_{\infty}$ the maximum absolute component of that vector, and by $\|\x\|_0$ the $\ell_0$ norm (which is not a proper norm). For simplicity, we denote $f_{\stoch}(\cdot):= f(\cdot, \stoch)$. We call $\uu_F$ (resp. $\nabla_F f(\x)$) the vector which  sets all  coordinates $i\not\in F$ of  $\uu$ (resp. $\nabla f(\x)$) to $0$. We also denote by $\x^*$ the solution of problem \eqref{eq:opt_pb} defined in the introduction, for some target sparsity $k^*$ which could be smaller than $k$. To derive our result, we will need the following assumptions on $f$.

\begin{assumption}[$(\nu_s, s)$-RSC, \cite{Jain14,negahban2009unified,loh2013regularized,yuan2017gradient,li2016nonconvex,shen2017tight,nguyen2017linear}]\label{ass:RSC}
$f$ is  said to be $\nu_s$ restricted strongly convex with sparsity parameter $s$ if it is differentiable, and there exist a generic constant $\nu_s$ such that for all $(\x, \y) \in \R^d$ with $\|\x - \y\|_0 \leq s$:
  $$ f(\y) \geq  f(\x) +\langle \nabla f(\x), \y-\x \rangle + \frac{\nu_s}{2} \| \x- \y\|^2 $$ 
\end{assumption}

\begin{assumption}[$(L_{s}, s)$-RSS, \cite{shen2017tight,nguyen2017linear}]
\label{ass:RSS}
  For almost any $\stoch$, $f_{\stoch}$ is said to be $L_{s}$ restricted smooth with sparsity level $s$, if it is differentiable, and there exist a generic constant $L_{s}$ such that for all $(\x, \y) \in \R^d$ with $\|\x-\y\|_0\leq s$:
  $$\|\nabla f_{\stoch}(\x) - \nabla f_{\stoch}(\y)\| \leq L_{s} \| \x-\y\|$$
\end{assumption}
\begin{assumption}[$\sigma^2$-FGN \cite{Gower19}, Assumption 2.3 (Finite Gradient Noise)]
  $f$ is said to have $\sigma$-finite gradient noise if for almost any $\stoch$, $f_{\stoch}$ is differentiable  and the gradient noise $\sigma = \sigma(f, \stoch)$ defined below is finite:
  $$\sigma^2 = \E_{\stoch} [\|\nabla f_{\stoch}(\x^*)\|_{\infty}^2]$$
\end{assumption}
\begin{remark}
Even though the original version of \cite{Gower19} uses the $\ell_2$ norm, we use the $\ell_{\infty}$ norm here, in order to give more insightful results in terms of $k$ and $d$, as is done classically in $\ell_0$ optimization, similarly to \cite{Zhou18}. We also note that in \cite{Gower19}, $\x^*$ denotes an unconstrained minimum when in our case it denotes the constrained minimum for some sparsity $k^*$.
\end{remark}

For Corollary \ref{cor:s2d}, we will also need the more usual  smoothness assumption: 
\begin{assumption}[$L$-smooth]
\label{ass:smooth}
  For almost any $\stoch$, $f_{\stoch}$ is said to be $L$ smooth, if it is differentiable, and for all $(\x, \y) \in \R^d$ :
  $$\|\nabla f_{\stoch}(\x) - \nabla f_{\stoch}(\y)\| \leq L \| \x-\y\|$$
\end{assumption}

\section{Algorithm}
\subsection{Random support Zeroth-Order estimate}
\label{sec:definition-algorithm}

In this section, we describe our zeroth-order gradient estimator. It is basically composed of a random support sampling step, followed by a random direction with uniform smoothing on the sphere supported by this support. We also use the technique of averaging our estimator over $q$ dimensions, as described in \cite{Liu20}. More formally, our gradient estimator is described below:
\begin{equation}
\hat{\nabla}f_{\stoch}(\x) = \frac{d}{q\mu}  \sum_{i=1}^q\left(f_{\stoch}(\x+\mu \uu_i) - f_{\stoch}(\x)\right) \uu_i  \label{eq:zoest}
\end{equation}
where each random direction $\uu_i$  is a unit vector sampled uniformly from the set $\mathcal{S}_{\sus}^d:= \{\uu \in \mathbb{R}^d: \|\uu\|_0\leq \sus, \|\uu\| = 1 \} $. We can obtain such vectors $\uu$ by sampling first a random support $S$ (i.e. a set of coordinates) of size $\sus$ from $[d]$, (denoted as $S\sim \mathcal{U}(\binom{[d]}{\sus})$ in Algorithm \ref{alg:SZOHT}) and then  by sampling a random unit vector $\uu$ supported on that support $S$, that is, uniformly sampled from the set $\mathcal{S}_S^{d} :=\{\uu\in \mathbb{R}^d: \uu_{[d]-S}=\bm{0}, \|\uu\| = 1\}$, (denoted as $\uu \sim \mathcal{U}(\mathcal{S}_S^{d}$) in algorithm \ref{alg:SZOHT}). The original uniform smoothing technique on the sphere is described in more detail in   \cite{gao2018information}. However, in our case, the sphere along which we sample is restricted to a random support of size $\sus$.
Our general estimator, through the setting of the variable $\sus$, can take several forms, which are similar to pre-existing gradient estimators from the literature described below:
\begin{itemize}[leftmargin=0.2in]
\vspace{-2pt}
\item If $\sus=d$, $\hat{\nabla} f_{\stoch}(\x)$ is the \textit{usual vanilla estimator with uniform smoothing on the sphere} \cite{gao2018information}.
\vspace{-2pt}
\item If $1 \leq \sus \leq d$, our estimator is similar to the Random Block-Coordinate gradient estimator from  \citet{lian2016comprehensive}, except that the blocks are not fixed at initialization but chosen randomly, and that we use a uniform smoothing with forward difference on the given block instead of a coordinate-wise estimation with central difference. This random support technique allows us to give a convergence analysis under the classical assumptions of the hard-thresholding literature (see Remark \ref{rem:generality}), and to deal with huge scale optimization, when sampling uniformly from a unit $d$-sphere is costly  \cite{Cai20,cai2021zeroth}: in the distributed setting for instance, each worker would just need to sample an $s_2$-sparse random vector, and only the centralized server would materialize the full gradient approximation containing up to $q s_2$ non-zero entries.
\vspace{-2pt}
\end{itemize}

\textbf{Error Bounds of the  Zeroth-Order Estimator.} \
We now derive  error bounds on the gradient estimator, that will be useful in the convergence rate proof, except that we consider \textit{only the restriction to some support $F$} (that is, we consider a subset of components of the gradient/estimator). Indeed, proofs in the hard-thresholding literature (see for instance \cite{yuan2017gradient}), are usually written only on that support. That is the key idea which explains how the dimensionality dependence is reduced when doing SZOHT compared to vanilla ZO optimization. We give  more insight on the shape of the original distribution of gradient estimators, and the distribution of their projection onto a hyperplane $F$ in Figure \ref{fig:fig} in Appendix \ref{sec:appfig}. We can observe that even if the original gradient estimator is poor, in the projected space, the estimation error is reduced, which we quantify in the proposition below.

\begin{proposition}(Proof in Appendix \ref{sec:proof_final_zo} )\label{prop:zograd} Let us consider any support $F\subset [d]$ of size $s$ ($|F|=s$). For the Z0 gradient estimator in \eqref{eq:zoest}, with $q$ random directions, and random supports of size $\sus$, and assuming that each $f_{\stoch}$ is $(L_{\sus}, \sus)$-RSS, we have, with $\hat{\nabla}_Ff_{\stoch}(\x)$ denoting the hard thresholding of the gradient $\nabla f_{\stoch}(\x)$ on $F$ (that is, we set all coordinates not in $F$ to $0$):
  \begin{enumerate}[(a)]
  \item  $\|\E \hat{\nabla}_F f_{\stoch}(\x) - \nabla_F f_{\stoch}(\x)\|^2\leq \varepsilon_{\mu} \mu^2 $ 
  \item $\E\|\hat{\nabla}_{F} f_{\stoch}(\x)\|^{2} \leq \varepsilon_F \|\nabla_F f_{\stoch}(\x)\|^{2} +  \varepsilon_{F^c}  \|\nabla_{F^c} f_{\stoch}(\x)\|^{2}+   \varepsilon_{abs} \mu ^2$
  \item $\E \|\hat{\nabla}_F f_{\stoch}(\x) -  \nabla_F f_{\stoch}(\x)\|^2 \leq  2(\varepsilon_F +1)\|\nabla_F f_{\stoch}(\x)\|^{2} +  2 \varepsilon_{F^c}  \|\nabla_{F^c} f_{\stoch}(\x)\|^{2}+  2 \varepsilon_{abs} \mu ^2 $
    \end{enumerate}

    \begin{equation}\label{eq:constants}
      \begin{aligned}
       &\text{with} \quad \varepsilon_{\mu} = L_{\sus}^2sd, \quad \varepsilon_F = \frac{2d}{q(\sus + 2)}   \left(\frac{(s-1)(\sus-1)}{d-1} + 3\right) + 2, \\
        \quad \varepsilon_{F^c} &=    \frac{2d}{q(\sus + 2)}  \left( \frac{s(\sus-1)}{d-1}\right)
      \text{ and }~\varepsilon_{\text{abs}} =   \frac{2d L_{\sus}^2 s \sus}{q}\left(\frac{(s-1)(\sus-1)}{d-1}+1\right) +  L_{\sus}^2sd      \end{aligned}
    \end{equation}

\end{proposition}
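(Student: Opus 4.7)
The plan is to decompose each single-sample estimator $G(\uu) := \frac{d}{\mu}(f_{\stoch}(\x+\mu\uu) - f_{\stoch}(\x))\uu$ into a ``linear'' part $d\langle \nabla f_{\stoch}(\x),\uu\rangle \uu$ and a Taylor-remainder part, then average the $q$ iid copies. The bookkeeping relies on two distinct ingredients: (i) moments of $\uu$ under the ``random support then uniform sphere'' law, and (ii) a uniform bound on the remainder coming from the RSS assumption. Throughout I abbreviate $\g := \nabla f_{\stoch}(\x)$ and $\psi(\uu) := f_{\stoch}(\x+\mu\uu) - f_{\stoch}(\x) - \mu\langle \g,\uu\rangle$; by $(L_{\sus},\sus)$-RSS and $\|\uu\|_0\leq \sus$, $\|\uu\|=1$, we have $|\psi(\uu)|\leq \frac{L_{\sus}\mu^2}{2}$.

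\textbf{Moment computations for $\uu$.} Conditioning on the random support $S$ of size $\sus$ (sampled uniformly from $\binom{[d]}{\sus}$) and then using the fact that $\uu\mid S$ is uniform on the unit sphere in $\mathbb{R}^S$, I get $\E[\uu\uu^\top]=\frac{1}{d}I_d$ (by symmetry and since $\Pr(i\in S)=\sus/d$, $\E[u_i^2\mid i\in S]=1/\sus$). For the fourth-order moment $\E[u_iu_{i'}u_j^2]$ needed in part (b), I combine the standard uniform-sphere fourth moments $\E[u_i^4\mid S]=\tfrac{3}{\sus(\sus+2)}$, $\E[u_i^2u_j^2\mid S]=\tfrac{1}{\sus(\sus+2)}$ (for $i\neq j$ in $S$) with $\Pr(i,j\in S)=\sus(\sus-1)/(d(d-1))$. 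The factorial structure of these probabilities is exactly what produces the $(s-1)(\sus-1)/(d-1)$ and $s(\sus-1)/(d-1)$ terms appearing in $\varepsilon_F$ and $\varepsilon_{F^c}$.

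\textbf{Part (a).} Using $\E[\uu\uu^\top]=\frac{1}{d}I$,
\[
\E[\hat{\nabla}f_{\stoch}(\x)] - \g \;=\; \frac{d}{\mu}\,\E[\psi(\uu)\,\uu],
\]
so restricting to $F$ and applying Jensen with $\psi^2\leq L_{\sus}^2\mu^4/4$ and $\E\|\uu_F\|^2=s/d$ gives $\|\E\hat{\nabla}_F f_{\stoch}(\x) - \g_F\|^2 \leq \tfrac{d^2}{\mu^2}\cdot\tfrac{L_{\sus}^2\mu^4}{4}\cdot\tfrac{s}{d}$, which matches $\varepsilon_\mu\mu^2$ up to the stated constant.

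\textbf{Part (b).} Since the $q$ directions are iid,
\[
\E\|\hat{\nabla}_F f_{\stoch}(\x)\|^2 \;=\; \tfrac{1}{q}\E\|G(\uu)_F\|^2 + \tfrac{q-1}{q}\|\E G(\uu)_F\|^2.
\]
For the first term, Young's inequality on $(f(\x+\mu\uu)-f(\x))^2 \leq 2\mu^2\langle\g,\uu\rangle^2 + 2\psi(\uu)^2$ yields
\[
\E\|G(\uu)_F\|^2 \leq 2d^2\,\E[\langle\g,\uu\rangle^2\|\uu_F\|^2] \;+\; \tfrac{d^2 L_{\sus}^2\mu^2}{2}\cdot\tfrac{s}{d}.
\]
The core of the proof is the explicit expansion of $\E[\langle\g,\uu\rangle^2\|\uu_F\|^2]=\sum_{i,i',j}g_ig_{i'}\E[u_iu_{i'}u_j^2\mathbf{1}_{j\in F}]$. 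Using the fourth-moment identities above, only the patterns $i=i'=j$ and $i=i'\neq j$ survive, and summing over $S$ (splitting $i\in F$ vs.\ $i\notin F$ in the hypergeometric probability) produces exactly
\[
\E[\langle\g,\uu\rangle^2\|\uu_F\|^2] \;=\; \tfrac{\|\g_F\|^2}{d(\sus+2)}\!\left(3 + \tfrac{(s-1)(\sus-1)}{d-1}\right) + \tfrac{\|\g_{F^c}\|^2}{d(\sus+2)}\cdot\tfrac{s(\sus-1)}{d-1}.
\]
Multiplying by $2d^2/q$ recovers the $\varepsilon_F,\varepsilon_{F^c}$ coefficients, and the remaining $\psi$-contribution plus $\|\E G(\uu)_F\|^2 \leq 2\|\g_F\|^2 + 2\varepsilon_\mu\mu^2$ (from part (a)) adds the extra ``$+2$'' inside $\varepsilon_F$ and absorbs into $\varepsilon_{\text{abs}}\mu^2$.

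\textbf{Part (c).} This follows from part (b) and the elementary identity $\|\hat{\nabla}_F f_{\stoch}(\x) - \g_F\|^2 \leq 2\|\hat{\nabla}_F f_{\stoch}(\x)\|^2 + 2\|\g_F\|^2$, which turns $\varepsilon_F$ into $\varepsilon_F+1$ after doubling.

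\textbf{Expected main obstacle.} The genuinely delicate step is the exact bookkeeping of the second moment $\E[\langle\g,\uu\rangle^2\|\uu_F\|^2]$: one has to simultaneously track (i) which of the four sphere indices coincide, (ii) whether each index lies in $F$, and (iii) the hypergeometric joint law of these indices belonging to the random support $S$. It is this interaction between the uniform-sphere fourth moments and the two-point hypergeometric probabilities that yields the specific dimension-friendly factors $(s-1)(\sus-1)/(d-1)$ and $s(\sus-1)/(d-1)$; everything else reduces to standard Taylor/RSS and Young-type manipulations.
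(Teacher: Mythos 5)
Your proposal is correct and follows essentially the same route as the paper's proof: decompose the finite difference into the linear term $\mu\langle\nabla f_{\stoch}(\x),\uu\rangle$ plus an RSS-controlled remainder, compute $\E[\langle\nabla f_{\stoch}(\x),\uu\rangle^2\|\uu_F\|^2]$ from the sphere fourth moments combined with the support-sampling probabilities (you sum index-by-index with $\Pr(i,j\in S)$ where the paper conditions on the hypergeometric overlap $|S\cap F|$, but this is the same calculation and yields the identical closed form), and then use the i.i.d. averaging identity $\E\|\hat{\nabla}_F\|^2=\tfrac1q\E\|G(\uu)_F\|^2+\tfrac{q-1}{q}\|\E G(\uu)_F\|^2$ exactly as in the paper's batching lemma. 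Your constants are in fact slightly tighter (e.g.\ a $\mu^2$-remainder of $\varepsilon_\mu/4$ in part (a), which lets the $\|\E G(\uu)_F\|^2$ contribution fit under the stated $+L_{\sus}^2 sd$ term in $\varepsilon_{\text{abs}}$), so the claimed bounds follow.
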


\subsection{SZOHT Algorithm}

We now present our full algorithm to optimize problem \ref{eq:opt_pb}, which we name SZOHT (Stochastic Zeroth-Order Hard Thresholding). Each iteration of our algorithm is composed of two steps: (i) the gradient estimation step, and (ii) the hard thresholding step, where the gradient estimation step is the one described in the section above, and the hard-thresholding is described in more detail in the following paragraph. We give the full formal description of our algorithm in Algorithm \ref{alg:SZOHT}.

In the hard thresholding step, we only keep the $k$ largest (in magnitude) components of the current iterate $x^t$. This ensures that all our iterates (including the last one) are $k$-sparse. This hard-thresholding operator has been studied for instance in \cite{shen2017tight}, and possesses several interesting properties. Firstly, it can be seen as a projection on the $\ell_0$ ball. Second, importantly, it is not non-expansive, contrary to other operators like the soft-thresholding operator \cite{shen2017tight}. That expansivity plays an important role in the analysis of our algorithm, as we will see later.

Compared to previous works, our algorithm can be seen as a variant of Stochastic Hard Thresholding (StoIHT from \cite{nguyen2017linear}) 
, where we replaced the true gradient of $f_{\stoch}$ by the estimator $\hat{\nabla}f_{\stoch}(\x) $. It is also very close to Algorithm 5 from \citet{Balasubramanian18} (Truncated-ZSGD), with just a different zeroth-order gradient estimator: we use a uniform smoothing, random-block estimator, instead of their gaussian smoothing, full support vanilla estimator. This allows us to deal with very large dimensionalities, in the order of millions, similarly to \citet{cai2021zeroth}. Furthermore, as described in the Introduction, contrary to \citet{Balasubramanian18}, we provide the analysis of our algorithm without any gradient sparsity assumption.

The key challenge arising in our analysis is described in Figure \ref{fig:conflict}: the hard-thresholding operator being expansive \cite{shen2017tight}, each approximate gradient step must approach the solution enough  to stay close to it even after  hard-thresholding. Therefore, it is \textit{a priori} unclear whether the zeroth-order estimate can be accurate enough to guarantee the convergence of SZOHT. Hopefully, as we will see in the next section, we can indeed ensure convergence, as long as we carefully choose the value of $q$.
\begin{figure}[h]
  \centering
  \includegraphics[scale=0.4]{./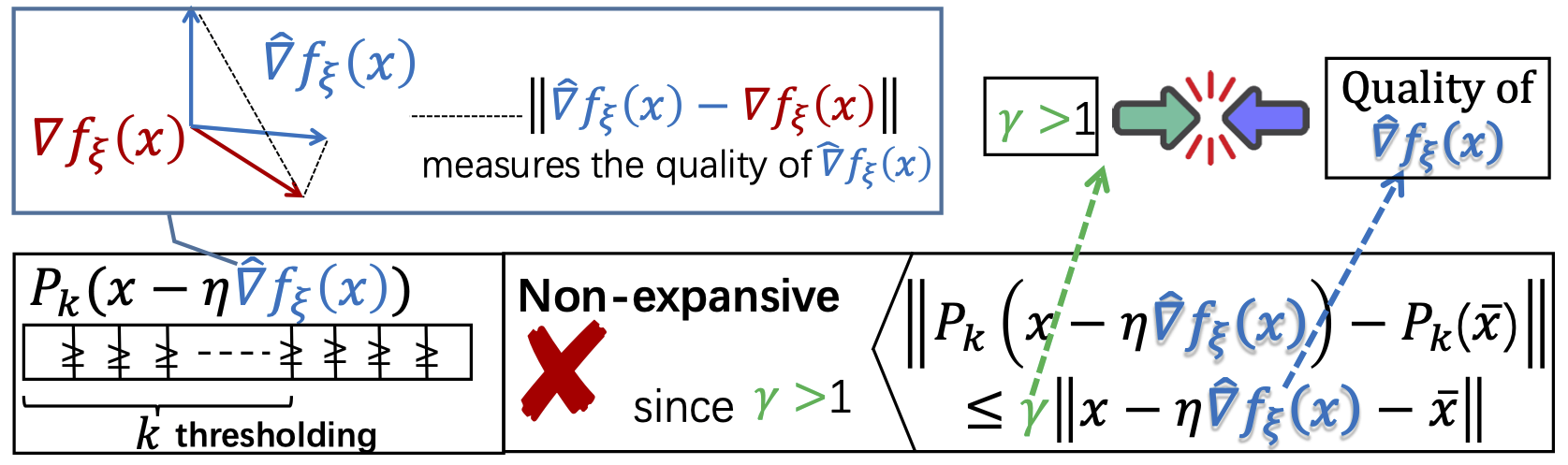}
  \caption{Conflict between the hard-thresholding operator and the zeroth-order estimate.}
  \label{fig:conflict}
  \vspace{-12pt}
\end{figure}

\begin{algorithm}
\SetKwInOut{Input}{Input}\SetKwInOut{Output}{Output}\SetKw{Initialization}{Initialization:}
\Initialization{Learning rate $\eta$, maximum number of iterations $T$, size of the random directions support $\sus$, number of random directions $q$, number of coordinates to keep at each iteration $k= \OO(\kappa^4 k^*)$, initial point $\x^{(0)}$ with $\|\x^{(0)}\|_0\le k^*$ (typically
$\x^{(0)}=0$), .}

\Output{$\x^{T}$.}

\For{$t=1, ..., T$} {
  Sample $\stoch$ (for instance sample a train sample $i$) \\
  \For{$i=1, ..., q$} {
    Sample a random support $S \sim  \mathcal{U}(\binom{[d]}{\sus})$\\
  Sample a random direction $\uu_i$ from the unit sphere supported on $S$:  $\uu_i\sim \mathcal{U}\left(\mathcal{S}^{d}_S\right)$

  Compute $\hat{\nabla}f_{\stoch}(\x^{t-1}; \uu_i)=\frac{d}{\mu}  \left(f_{\stoch}(\x+\mu \uu_i) - f_{\stoch}(\x)\right) \uu_i $;\\
}
Compute $\hat{\nabla}f_{\stoch}(\x^{t-1}) =\frac{1}{q}  \sum_{i=1}^q  \hat{\nabla}f_{\stoch}(\x^{t-1}; \uu_j)$

Compute $\tilde \x^{t} = \x^{t-1} - \eta \hat{\nabla} f_{\stoch}(\x^{t-1})$;

Compute $\x^{t} = \tilde \x^{t}_k$ as the truncation of $ \tilde
\x^{t}$ with top $k$ entries preserved;

}
 \caption{Stochastic Zeroth-Order Hard-Thresholding (SZOHT)}
 \label{alg:SZOHT}
\end{algorithm}
\section{Convergence analysis}
\vspace{-8pt}
\label{sec:glob}

In this section, we provide the convergence analysis of SZOHT, using the assumptions from section \ref{sec:prelim}, and discuss an interesting property of the combination of the zeroth-order gradient estimate and the hard-thresholding operator, providing a positive answer to the question from the previous section.

\begin{theorem}(Proof in Appendix \ref{sec:proof_cvrate})\label{thrm:cvrate}
 Assume that that each $f_{\stoch}$ is $(L_{s'}, s':=\max(\sus, s))$-RSS, and that $f$ is $(\nu_s, s)$-RSC and $\sigma$-FGN, with $s = 2k + k^* \leq d$, with $\frac{d - k^*}{2} \geq k\geq \rho^2 k^*/(1-\rho^2)^2$, with $\rho$ defined as below. Suppose that we run SZOHT with random supports of size $\sus$, $q$ random directions,  a learning rate of $\eta = \frac{\nu_s}{(4 \varepsilon_{F} + 1)L_{s'}^2} $, and   $k$ coordinates kept at each iterations. Then, we have a geometric convergence rate, of the following form, with $\x^{(t)}$ denoting the $t$-iterate of SZOHT:
  $$   \E  \|\x^{(t)}-\x^{*}\| \leq \left( \gamma \rho\right)^{t}\|\x^{(0)}-\x^*\|+\left(\frac{\gamma a}{1 - \gamma \rho} \right) \sigma +  \left(\frac{\gamma b}{1 - \gamma \rho}\right)  \mu$$ 
  \begin{equation}
    \label{eq:constantsbis}
    \begin{aligned}
      \text{with}\quad a &= \eta \left(\sqrt{ (4 \varepsilon_{F} s+2) +  \varepsilon_{F^c} (d-k) }  +\sqrt{s}\right), ~ b=\left(\frac{\sqrt{\varepsilon_{\mu}}}{L_{s'}} + \eta \sqrt{2 \varepsilon_{\text{abs}} }\right), \\
      \rho^2&=  1 - \frac{\nu_s^2}{(4  \varepsilon_F+1)L_{s'}^2}, ~\text{and}~  \gamma = \sqrt{1 + \left( k^*/k + \sqrt{\left(4 + k^*/k\right)k^*/k}\right)/2}\\
                   \text{and} ~\quad &\varepsilon_{F}, \varepsilon_{abs}, ~\text{and}~ \varepsilon_{\mu} ~ \text{are defined in \eqref{eq:constants}.}
    \end{aligned}
  \end{equation}
\end{theorem}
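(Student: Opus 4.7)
The plan is to follow the standard hard-thresholding convergence template (as used in StoIHT \cite{nguyen2017linear}), with the zeroth-order deviations quantified by Proposition \ref{prop:zograd} replacing the usual stochastic-gradient variance. Set $F := \mathrm{supp}(\x^{(t-1)}) \cup \mathrm{supp}(\x^{(t)}) \cup \mathrm{supp}(\x^*)$, so $|F| \leq s = 2k + k^*$. The first step is to invoke a known expansivity bound for the top-$k$ hard-thresholding operator (see, e.g., \cite{Jain14,shen2017tight}), which yields
$$\|\x^{(t)} - \x^*\| \leq \gamma\, \|\tilde{\x}^{(t)}_F - \x^*\|,$$
with exactly the constant $\gamma$ of the theorem. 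This factor multiplies every term in the final bound and is the source of the ``HT-expansivity'' side of the conflict highlighted by the paper.

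Next I would expand the right-hand side: write $\tilde{\x}^{(t)}_F - \x^* = (\x^{(t-1)} - \x^*) - \eta\,\hat{\nabla}_F f_\stoch(\x^{(t-1)})$, square it, and take expectation over both $\stoch$ and the random directions $\uu_i$. For the cross (inner-product) term, Proposition \ref{prop:zograd}(a) together with Cauchy--Schwarz contributes a bias of order $\mu\sqrt{\varepsilon_\mu}\|\x^{(t-1)}-\x^*\|$, while $(\nu_s,s)$-RSC lower-bounds $\langle \nabla f(\x^{(t-1)}), \x^{(t-1)}-\x^* \rangle$ by $\nu_s\|\x^{(t-1)}-\x^*\|^2 + \langle \nabla f(\x^*),\x^{(t-1)}-\x^*\rangle$; controlling this last term via the $\ell_\infty$-FGN assumption (using $\|\x^{(t-1)}-\x^*\|_1\le\sqrt{s}\|\x^{(t-1)}-\x^*\|$) yields the $\sqrt{s}\,\sigma$ summand of $a$. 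For the quadratic term, Proposition \ref{prop:zograd}(b) splits the bound into three pieces which I would then control via $(L_{s'},s')$-RSS relative to $\x^*$ (producing $L_{s'}^2\|\x^{(t-1)}-\x^*\|^2$ and an $s\sigma^2$ contribution), FGN on $\nabla_{F^c}f_\stoch(\x^*)$ (giving the $(d-k)\sigma^2$ piece), and the $\varepsilon_{\mathrm{abs}}\mu^2$ residual; this accounts for the coefficients $\varepsilon_F$, $4\varepsilon_F s + 2$, $\varepsilon_{F^c}(d-k)$, and $2\varepsilon_{\mathrm{abs}}$ appearing in the statement.

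The specific choice $\eta = \nu_s/((4\varepsilon_F+1)L_{s'}^2)$ is picked precisely to annihilate the quadratic cross-term between the RSC and RSS contributions, collapsing the one-step analysis into
$$\sqrt{\E\|\tilde{\x}^{(t)}_F - \x^*\|^2} \leq \rho\,\|\x^{(t-1)}-\x^*\| + a\sigma + b\mu,$$
with $\rho^2 = 1 - \nu_s^2/((4\varepsilon_F+1)L_{s'}^2)$, via $\sqrt{u^2+v^2+\cdots}\leq u+v+\cdots$ on non-negative summands. Combining this with Step 1, applying Jensen's inequality $\E\|X\|\leq\sqrt{\E\|X\|^2}$, and iterating the affine recursion $\E\|\x^{(t)}-\x^*\| \leq \gamma\rho\,\E\|\x^{(t-1)}-\x^*\| + \gamma(a\sigma+b\mu)$ produces the geometric convergence plus saturation terms claimed in the theorem, provided $\gamma\rho<1$. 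The delicate point is verifying this last contraction condition: since HT is expansive ($\gamma>1$) and zeroth-order noise keeps $\rho$ away from its noiseless value through $\varepsilon_F \propto d/(q\sus)$, ensuring $\gamma\rho<1$ forces the joint constraints $k \geq \rho^2 k^*/(1-\rho^2)^2$ together with a sufficiently large $q$. This threshold is precisely the quantitative incarnation of the gradient-error / HT-expansivity conflict and is where essentially all of the novelty relative to StoIHT resides.
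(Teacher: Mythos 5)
Your proposal follows essentially the same route as the paper's proof: the StoIHT template restricted to $F=\mathrm{supp}(\x^{(t-1)})\cup\mathrm{supp}(\x^{(t)})\cup\mathrm{supp}(\x^*)$, the expansivity bound of \cite{shen2017tight,yuan2017gradient} supplying $\gamma$, Proposition \ref{prop:zograd} supplying the ZO bias and second-moment bounds, the step size chosen to minimize the quadratic $1-2\eta\nu_s+(4\varepsilon_F+1)L_{s'}^2\eta^2$, and the affine recursion unrolled under $\gamma\rho<1$ (which, as in the paper, follows from $k\geq\rho^2k^*/(1-\rho^2)^2$ alone via $\gamma^2\leq1/\rho^2$). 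The only deviations are bookkeeping ones: the paper adds and subtracts $\eta\nabla_F f_{\stoch}(\x^*)$ inside the squared norm and absorbs the ZO bias via a weighted Young's inequality (which is where the exact constants $(4\varepsilon_F+1)$, $\sqrt{s}\,\sigma$, and $\sqrt{\varepsilon_\mu}/L_{s'}$ come from), whereas your direct Cauchy--Schwarz/H\"older treatment of those cross terms would yield the same structure with slightly different constants.
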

\begin{remark}[System error] The format of our result is similar to the ones in \cite{yuan2017gradient} and \cite{nguyen2017linear}, in that it contains a linear convergence term, and a system error which depends on the expected norm of the gradient at $\x^*$ (through the variable $\sigma$). We note that if $f$ has a $k^*$-sparse unconstrained minimizer, which could happen in sparse reconstruction, or with overparameterized deep networks (see for instance \cite[Assumption (2)]{peste2021ac}), then we would have $\|\nabla f(\x^*)\| = 0$, and hence that part of the system error would vanish. In addition to that usual system error, we also have here another system error, which depends on the smoothing radius $\mu$,  due to the error from the ZO estimate.
\end{remark}

\begin{remark}[Generality]\label{rem:generality}
If we take $s_2\leq s$, the first assumption of Theorem \ref{thrm:cvrate} becomes the requirement that $f_{\stoch}$ is $(L_s, s)$-RSS. Therefore, SZOHT as well as the theorem above  provides, up to our knowledge, the first algorithm that can work in the usual setting of hard-thresholding algorithms (that is, $(L_s, s)$-RSS and $(\nu_s, s)$-RSC \cite{nguyen2017linear,shen2017tight}), as well as its convergence rate.
\end{remark}

\textbf{Interplay between hard-thresholding and zeroth-order error}
Importantly, contrary to previous works in ZO optimization, $q$ must be chosen carefully here, due to our specific setting combining ZO and hard-thresholding. Indeed, as described in \cite{shen2017tight}, the hard-thresholding operator is not non-expansive (contrary to projection onto the $\ell_1$ ball) so it can drive the iterates away from the solution. Therefore, enough descent must be made by the (approximate) gradient step to get close enough to the solution, and it is therefore crucial to limit errors in gradient estimation. This problem arises with any kind of gradient errors: for instance with SGD errors \cite{nguyen2017linear,Zhou18}, it is generally dealt with either by ensuring some conditions on the function $f$ \cite{nguyen2017linear}, forming bigger batches of samples \cite{Zhou18}, and/or considering a larger number of components $k$ kept in hard-thresholding (to make the hard-thresholding less expansive). In our work, similarly to \citet{Zhou18}, we deal with this problem by relaxing $k$ and sampling more directions $\uu_i$ (which is the ZO equivalent to taking bigger batch-size in SGD). However, there is an additional effect that happens in our case, specific to ZO estimation: as described in Proposition \ref{prop:zograd}, the quality of our estimator \textit{also depends on $k$}. Therefore, it may be hard to make the algorithm converge only by considering larger $k$: \textit{higher $k$ means less expansivity (which helps convergence), but worse gradient estimate (which harms convergence)}. We further illustrate this conflict between  the non-expansiveness of hard-thresholding (quantified by the parameter $\gamma$ \cite{shen2017tight}), and the error from the zeroth-order estimate, in Figure \ref{fig:conflict}. Therefore, it is even more crucial to tune precisely our remaining degree of freedom at hand which is $q$. More precisely, a minimal value of $q$ is always necessary to ensure convergence in our setting, contrary to most ZO setting (in which taking even $q=1$ can work, as long as other constants like $\eta$ are well chosen, see for instance \cite[Corollary 3]{liuadmm}). The remark below gives some necessary conditions on $q$ to illustrate that fact.

\begin{remark}[Some necessary condition on $q$, proof in \ref{sec:firstcond}]\label{rem:firstcond}
  Let $k^* \in \mathbb{N}^*$ and assume, that $k$ is such that $k > \rho^2 k^*/(1-\rho^2)^2$ (which ensures that $\rho \gamma <1$), and that  $ k\leq \frac{d - k^*}{2}$. These conditions imply the following necessary (but not sufficient) condition on $q$:
  \begin{itemize}
  \item \textbf{if $s_2>1$}:  $ q \geq  \frac{16 d (\sus-1) k^* \kappa^{2}}{\left(\sus+2\right)(d-1)} \left[18 \kappa^2 -1+2 \sqrt{9\kappa^2(9 \kappa^2-1)+\frac{1}{2}-\frac{1}{2 k^{*}}+\frac{3}{2} \frac{d-1}{k^*(\sus-1)}}\right]$
   \item \textbf{if $s_2=1$}: $ q \geq  \frac{8 \kappa^2 d}{\sqrt{\frac{d}{k^*}} + 1}$
   \end{itemize}
 \end{remark}

Remark \ref{rem:firstcond} is just a warning that usual rules from ZO do not apply to SZOHT, but it does not say how to choose $q$ to ensure convergence: for that we would need some sufficient conditions on $q$ for Theorem 1 to apply. We give such conditions in the next section.

\subsection{Weak/non dependence on dimensionality of the query complexity.}\label{ref:indep}

In this section, we provide Corollaries \ref{cor:specialq} and \ref{cor:s2d}, following from Theorem \ref{thrm:cvrate}, which give an example of $q$ that is sufficient to converge (that is, to obtain $\gamma \rho <1$ in Theorem \ref{thrm:cvrate}), and that achieves weak dimensionality dependence in the case of RSS, and complete dimension independence in the case of smoothness.

\begin{corollary}[RSS $f_{\stoch}$, proof in Appendix \ref{sec:proof_specialq}\label{cor:specialq}]
  Assume that that almost all $f_{\stoch}$ are $(L_{s'}, s':=\max(\sus, s))$-RSS, and that $f$ is $(\nu_s, s)$-RSC and $\sigma$-FGN, with $s = 2k + k^* \leq d$, with $\frac{d - k^*}{2} \geq k\geq  (86\kappa^4 - 12\kappa^2 )k^*$ (with  $\kappa:= \frac{L_{s'}}{\nu_s}$) . Suppose that we run SZOHT with random support of size $\sus$, a learning rate of $\eta = \frac{\nu_s}{13L_{s'}^2}$,  with $k$ coordinates kept at each iterations by the hard-thresholding, and with $q\geq 2s + 6\frac{d}{\sus}$.  Then, we have a geometric convergence rate, of the following form, with $\x^{(t)}$ denoting the $t$-iterate of SZOHT:
  $$   \E  \|\x^{(t)}-\x^{*}\| \leq \left( \gamma\rho\right)^{t}\|\x^{(0)}-\x^*\|+\left(\frac{\gamma a}{1 - \gamma \rho} \right)  \sigma +  \left(\frac{\gamma b}{1 - \gamma \rho}\right)  \mu$$

  with $a$, $b$ and $\gamma$ are defined in \eqref{eq:constantsbis}, and $\rho=\sqrt{1 - \frac{2}{13\kappa^2}}$.
    Therefore, the query complexity (QC) to ensure that $  \E  \|\x^{(t)}-\x^{*}\| \leq  \varepsilon  +\left(\frac{\gamma a}{1 - \gamma \rho} \right)  \sigma +  \left(\frac{\gamma b}{1 - \gamma \rho}\right)  \mu $ is $\OO(\kappa^2 (k + \frac{d}{\sus}) \log(\frac{1}{\varepsilon}))$.
\end{corollary}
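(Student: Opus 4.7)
The plan is to derive Corollary \ref{cor:specialq} directly from Theorem \ref{thrm:cvrate} by making the specific choices of $q$, $\eta$, and $k$ collapse the theorem's abstract constants into clean functions of $\kappa$ and $d/\sus$. Concretely, I would proceed in three steps: (i) upper-bound $\varepsilon_F\le 3$ so that the learning rate and $\rho$ become explicit, (ii) verify that the prescribed lower bound on $k$ implies the theorem's hypothesis $k\ge \rho^2 k^*/(1-\rho^2)^2$ (and hence $\gamma\rho<1$), and (iii) convert the geometric rate into a query complexity.

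\textbf{Step 1: bounding $\varepsilon_F$.} From Proposition \ref{prop:zograd},
\[\varepsilon_F \;=\; \frac{2d}{q(\sus+2)}\!\left(\frac{(s-1)(\sus-1)}{d-1}+3\right)+2.\]
Using $d/(d-1)\le 2$ (valid for $d\ge 2$), $(\sus-1)/(\sus+2)\le 1$, and $1/(\sus+2)\le 1/\sus$, the condition $\varepsilon_F\le 3$ is implied by the sufficient bound
\[q \;\ge\; \frac{2d(s-1)(\sus-1)}{(\sus+2)(d-1)}+\frac{6d}{\sus+2} \;\le\; 2s + \frac{6d}{\sus},\]
which is exactly the hypothesis of the corollary. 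With $\varepsilon_F\le 3$ one has $4\varepsilon_F+1\le 13$, so $\eta=\nu_s/(13L_{s'}^2)$ is an admissible learning rate in Theorem \ref{thrm:cvrate} and the resulting $\rho^2$ is of the form $1-\Theta(1/\kappa^2)$, matching the stated value $\rho^2=1-2/(13\kappa^2)$ up to the explicit arithmetic of the theorem.

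\textbf{Step 2: sufficiency of the lower bound on $k$.} I would then verify that $k\ge (86\kappa^4-12\kappa^2)k^*$ implies $k\ge \rho^2 k^*/(1-\rho^2)^2$, the requirement for $\gamma\rho<1$ in Theorem \ref{thrm:cvrate}. Since $1-\rho^2=\Theta(1/\kappa^2)$, one has $\rho^2/(1-\rho^2)^2=\Theta(\kappa^4)$, so the stated lower bound on $k$ has the right order; the explicit constants $86$ and $-12$ come from plugging $\rho^2=1-2/(13\kappa^2)$ into $\rho^2/(1-\rho^2)^2$ and collecting like powers of $\kappa$. The delicate piece is to verify $\gamma^2\rho^2<1$ rather than merely $\rho^2<1$: since $\gamma^2-1=\tfrac{1}{2}\bigl(k^*/k+\sqrt{(4+k^*/k)k^*/k}\bigr)$ is monotone increasing in $k^*/k$ by \eqref{eq:constantsbis}, it suffices to evaluate the inequality at the boundary value of $k^*/k$ and rearrange. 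This constant-chasing is the main (though routine) algebraic obstacle of the proof.

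\textbf{Step 3: query complexity.} With $\gamma\rho<1$ and $1-\gamma\rho=\Theta(1/\kappa^2)$, the number of outer iterations needed so that $(\gamma\rho)^t\|\x^{(0)}-\x^*\|\le\varepsilon$ is $T=\OO(\kappa^2\log(1/\varepsilon))$, by the standard estimate $\log(1/(\gamma\rho))\gtrsim 1-\gamma\rho$. Each iteration of Algorithm \ref{alg:SZOHT} issues $q=\OO(s+d/\sus)=\OO(k+d/\sus)$ function evaluations (using $s=2k+k^*=\OO(k)$, which follows from the lower bound on $k$). Multiplying these two factors yields the advertised complexity $\OO(\kappa^2(k+d/\sus)\log(1/\varepsilon))$, with the system-error terms inherited verbatim from Theorem \ref{thrm:cvrate}.
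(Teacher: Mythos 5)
Your outline follows the paper's proof step for step (force $\varepsilon_F\le 3$ through the choice of $q$, check the lower bound on $k$, convert the contraction into a query complexity), but two of your quantitative steps do not go through as written. In Step 1, the chain of bounds you invoke does not actually yield $2s+6d/\sus$: with $d/(d-1)\le 2$ and $(\sus-1)/(\sus+2)\le 1$, the first term of $q'=\frac{2d}{\sus+2}\bigl(\frac{(s-1)(\sus-1)}{d-1}+3\bigr)$ is only bounded by $4(s-1)$, which exceeds $2s$ as soon as $s>2$. The inequality that works (and that the paper uses) is $\frac{s-1}{d-1}\le\frac{s}{d}$, giving $\frac{2d(s-1)(\sus-1)}{(\sus+2)(d-1)}\le 2s$ directly. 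This is easily repaired, but your stated justification is wrong.

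The more substantive gap is in Steps 2--3. You claim the constants $86$ and $-12$ come from plugging $\rho^2=1-\frac{2}{13\kappa^2}$ into $\rho^2/(1-\rho^2)^2$, i.e.\ from the theorem's condition $k\ge \rho^2k^*/(1-\rho^2)^2$ taken at face value. That computation gives $\frac{169}{4}\kappa^4-\frac{13}{2}\kappa^2$, roughly \emph{half} of $86\kappa^4-12\kappa^2$. The missing factor of two is not cosmetic: the theorem's condition only guarantees $\gamma^2\le 1/\rho^2$, i.e.\ $\gamma\rho\le 1$ with equality at the boundary (this is exactly the simplification in \eqref{eq:smoothsimplification}), so ``evaluating at the boundary value of $k^*/k$'' as you propose yields $\gamma\rho=1$ and no contraction at all. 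Your Step 3 then asserts $1-\gamma\rho=\Theta(1/\kappa^2)$ without justification, and this is precisely the assertion that fails without extra slack. The paper's proof instead imposes $k\ge 2k^*\rho^2/(1-\rho^2)^2$ (which is where $86\kappa^4-12\kappa^2$ comes from); the factor $2$ buys $\gamma^2\le 1+\frac{1}{\sqrt2}\,\frac{1-\rho^2}{\rho^2}$ and hence $\rho^2\gamma^2\le 1-\frac{1}{26\kappa^2}$, from which $\frac{1}{\log(1/(\gamma\rho))}\le 52\kappa^2$ and the claimed $\OO(\kappa^2(k+\frac{d}{\sus})\log(\frac{1}{\varepsilon}))$ query complexity follow. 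You need to make this doubling (or some constant factor of slack strictly greater than $1$ in the lower bound on $k$) explicit; without it the query-complexity half of the corollary is unproven.
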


We now turn to the case where the functions $f_{\stoch}$ are smooth. The key result in that case is that we can have a query complexity independent of the dimension $d$, which is, up to our knowledge, the first result of such kind for sparse zeroth-order optimization without assuming any gradient sparsity.
\begin{corollary}[Smooth $f_{\stoch}$, proof in Appendix \ref{sec:proof_s2d})\label{cor:s2d}]
  Assume that, in addition to the conditions from Corollary \ref{cor:specialq} above, almost all $f_{\stoch}$ are $L$-smooth, with  $\frac{d - k^*}{2} \geq k\geq  (86\kappa^4 - 12\kappa^2 )k^*$ (with  $\kappa:= \frac{L}{\nu_s}$), and take $q\geq 2(s+2)$, and $s_2=d$ (that is, no random support sampling).  Then, we have a geometric convergence rate, of the following form, with $\x^{(t)}$ denoting the $t$-iterate of SZOHT:
  $$   \E  \|\x^{(t)}-\x^{*}\| \leq \left( \gamma\rho\right)^{t}\|\x^{(0)}-\x^*\|+\left(\frac{\gamma a}{1 - \gamma \rho} \right)  \sigma +  \left(\frac{\gamma b}{1 - \gamma \rho}\right)  \mu$$
  Therefore, the QC to ensure that $  \E  \|\x^{(t)}-\x^{*}\| \leq   \varepsilon  +\left(\frac{\gamma a}{1 - \gamma \rho} \right)  \sigma +  \left(\frac{\gamma b}{1 - \gamma \rho}\right)  \mu$ is $\OO(\kappa^2 k \log(\frac{1}{\varepsilon}))$.
\end{corollary}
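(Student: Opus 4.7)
The plan is to reduce Corollary~\ref{cor:s2d} to Theorem~\ref{thrm:cvrate} under the parameter specializations $s_2 = d$ and $q \geq 2(s+2)$, leveraging global $L$-smoothness to kill the dimensional dependence of the RSS constant. First, $L$-smoothness implies $(L, s')$-RSS for every $s'$; in particular, for $s' = \max(s_2, s) = d$ this gives $L_{s'} = L$, so the $\kappa = L/\nu_s$ of the corollary agrees with the $\kappa = L_{s'}/\nu_s$ used in Theorem~\ref{thrm:cvrate}, and all of its RSS/RSC/FGN hypotheses are in force.

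Next I would specialize the constants in \eqref{eq:constants} to $s_2 = d$ and simplify
\begin{equation*}
\varepsilon_F = \frac{2d}{q(d+2)}\!\left(\frac{(s-1)(d-1)}{d-1} + 3\right) + 2 = \frac{2d(s+2)}{q(d+2)} + 2 \leq \frac{2(s+2)}{q} + 2,
\end{equation*}
so the hypothesis $q \geq 2(s+2)$ yields $\varepsilon_F \leq 3$. The cancellation $d/(d+2) \leq 1$ here is exactly the mechanism by which $d$ drops out. Consequently $4\varepsilon_F + 1 \leq 13$, reproducing the same contraction $\rho^2 = 1 - 1/(13\kappa^2)$ and step size $\eta = \nu_s/(13L^2)$ as in Corollary~\ref{cor:specialq}, and the hypothesis $k \geq (86\kappa^4 - 12\kappa^2)k^*$ implies Theorem~\ref{thrm:cvrate}'s requirement $k \geq \rho^2 k^*/(1-\rho^2)^2$, hence $\gamma\rho < 1$, by the same arithmetic as in that corollary. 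The convergence bound, along with the constants $a$, $b$, $\gamma$, then follows verbatim from Theorem~\ref{thrm:cvrate}.

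Finally, for the query count: since $s = 2k + k^* = \OO(k)$, each outer iteration issues $2q = \OO(k)$ function evaluations, while the geometric rate $\gamma\rho = 1 - \Theta(1/\kappa^2)$ requires $\OO(\kappa^2 \log(1/\varepsilon))$ iterations to close the gap to accuracy $\varepsilon$ above the system-error offset $\frac{\gamma a}{1-\gamma\rho}\sigma + \frac{\gamma b}{1-\gamma\rho}\mu$. Multiplying yields the claimed $\OO(k\kappa^2 \log(1/\varepsilon))$ total complexity, independent of $d$. The point I would verify most carefully is that the residual $d$-dependence hiding inside $a$ through $\varepsilon_{F^c}(d-k)$ only inflates the fixed system-error offset and does \emph{not} enter the iteration count --- this is really the crux of the dimension-independence claim, and it is legitimate precisely because Theorem~\ref{thrm:cvrate} cleanly separates the linear contraction factor from the statistical floor.
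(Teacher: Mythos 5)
Your proposal is correct and follows essentially the same route as the paper: the paper's proof likewise plugs $\sus=d$ into the choice of $q$ from Corollary~\ref{cor:specialq}, observes that $q\geq 2(s+2)\geq \frac{2d}{d+2}(s+2)$ forces $\varepsilon_F\leq 3$ (your $d/(d+2)\leq 1$ cancellation), and then inherits the iteration count $\OO(\kappa^2\log(1/\varepsilon))$ unchanged while only the per-iteration query cost becomes $\OO(s)=\OO(k)$. Your closing remark that the residual $d$-dependence in $a$ via $\varepsilon_{F^c}(d-k)$ lands only in the system-error floor is a correct and worthwhile clarification that the paper leaves implicit.
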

Additionally, our convergence rate highlights an interesting connection between the geometry of $f$ (defined by the condition number $\kappa=L_{s'}/\nu_s$), and the number of random directions that we need to take at each iteration: if the problem is ill-conditioned, that is $\kappa$ is high, then we need a bigger $k$. This result is standard in the $\ell_0$ litterature (see for instance \cite{yuan2017gradient}). But specifically, in our ZO case, it also impacts the query complexity: since the projected gradient is harder to approximate when the dimension $k$ of the projection is larger, $q$ needs to grow too, resulting in higher query complexity. We believe this is an interesting result for the sparse zeroth-order optimization community: it reveals that the query complexity may in fact depend on some notion of intrinsic dimension to the problem, related to both the sparsity of the iterates $k$, and the geometry of the function $f$ for a given $\sus$ (through the restricted condition number $\kappa$), rather than  the dimension of the original space $d$ as in previous works like \cite{ghadimi2016mini}. 
\section{Experiments}
\subsection{Sensitivity analysis}
We first conduct a sensitivity parameter analysis on a toy example, to highlight the importance of the choice of $q$, as discussed in Section \ref{sec:glob}. We fix a target sparsity $k^*=5$, choose $k=74 k^*$, and consider a sparse quadric function $f: \mathbb{R}^{5000} \rightarrow \mathbb{R}$, with: $f(\x)= \frac{1}{2}\|\bm{a} \odot (\x-\bm{b})\|^2$ ($\odot$ denotes the elementwise product), with $\bm{a}_i=1$ if $i \geq d-s$ and $0$ otherwise (to ensure $f$ is $s$-RSC and smooth, with $\nu_s=L=1$), and $\bm{b}_i = \frac{i}{100 d}$ for all $i \in [d - 70 k^*]$ and $0$ for all $d - 70 k^* \leq i \leq d$ (we make such a choice in order to have  $\|\nabla f(\x^*)\|$ small enough). We choose $\eta$ as in Theorem \ref{thrm:cvrate}: $\eta = \frac{1}{(4 \varepsilon_F + 1)}$ with $\varepsilon_F$ defined in Proposition \ref{prop:zograd} in terms of $s$ and $d$ (we take $\sus=d$), $\mu=1e-4$, and present our results in Figure \ref{fig:sensas}, for six values of $q$. We can observe on Figure \ref{fig:sensas_f} that the smaller the $q$, the less $f(\x)$ can descend. Interestingly, we can also see on Figure \ref{fig:sensas_dist} that for $q=1$ and $20$, $\|\x^{(t)} - \x^*\|$ diverges: we can indeed compute that $\rho\gamma>1$ for those $q$, which explains the divergence, from Theorem \ref{thrm:cvrate}.
\label{sec:experiments}
\subsection{Baselines}
\label{sec:experimental-setting}
We compare our SZOHT algorithms with state of the art zeroth-order algorithms that can deal with sparsity constraints, that appear in Table \ref{table:sota}:
\begin{enumerate}[leftmargin=0.2in]
\item \textbf{ZSCG}  \cite{Balasubramanian18} is a Frank-Wolfe ZO algorithm, for which we consider an $\ell_1$ ball constraint.
\item \textbf{RSPGF} \cite{ghadimi2016mini} is a proximal ZO algorithm, for which we consider an $\ell_1$ penalty.
  \item \textbf{ZORO} \cite{Cai20} is a proximal ZO algorithm, that makes use of sparsity of gradients assumptions, using a sparse reconstruction algorithm at each iteration to reconstruct the gradient from a few measurements. Similarly, as for ZSCG, we consider an $\ell_1$ penalty.
  \end{enumerate}
In all the applications below, we will tune the sparsity $k$ of SZOHT, the penalty of RSPGF and ZORO, and the radius of the constraint of ZSCG, such that all algorithms attain a similar converged objective value, for fair comparison.
\subsection{Applications}
\label{sec:applications}
We compare the algorithms above on two tasks: a sparse asset risk management task from \cite{chang2000heuristics}, and an adversarial attack task \cite{chen2017zoo} with a sparsity constraint. 

\paragraph{Sparse asset risk management}
\label{sec:portf-manag-task}
We consider the portfolio management task and dataset from \cite{chang2000heuristics}, similarly to \cite{Cai20}. We have a given portfolio of $d$ assets, with each asset $i$ giving an expected return $\bm{m}_i$, and with a global covariance matrix of the return of assets denoted as $\bm{C}$. The cost function we minimize is the portfolio risk: $\frac{\x^T\bm{C}\x}{2(\sum_{i=1}^d \bm{x}_i)^2}$, where $\x$  is a vector where each component $\x_i$ denotes how much is invested in each asset, and we require to minimize it under a constraint of minimal return $r$: $\frac{\sum_{i=1}^{d} \bm{m}_{i} \x_{i}}{\sum_{i=1}^{d} \x_{i}}$. We enforce that constraint using the Lagrangian form below. Finally, we add a sparsity constraint, to restrict the investments to only $k$ assets. Therefore, we obtain the cost function below:
\begin{align*}
\min_{x \in \mathbb{R}^{d}} \frac{\x^{\top} \bm{C} \x}{2\left(\sum_{i=1}^{d} \x_{i}\right)^{2}}+\lambda\left(\min \left\{\frac{\sum_{i=1}^{d} \bm{m}_{i} \x_{i}}{\sum_{i=1}^{d} \x_{i}}-r, 0\right\}\right)^{2} \quad\text{s.t.} \quad \|\x\|_0 \leq k
\end{align*}
We use three datasets: port3, port4 and port5 from the OR-library \cite{beasley1990or}, of respective dimensions $d=89; 98; 225$. We keep $r$ and $\lambda$ the same for the 4 algorithms: $r=0.1$, $\lambda=10$ (for port3 and  port4); and $r=1e-3$, $\lambda=1e-3$ for port5. For SZOHT, we set $k=10$, $\sus=10$, $q=10$, and $(\mu, \eta)=(0.015, 0.015)$ for port4, and $(\mu, \eta)=(0.1, 1)$ for port5 ($\mu$ and $\eta$ are both obtained by grid search over the interval $[10^{-3}, 10^{3}]$). For all other algorithms, we got the optimal hyper-parameters through grid search. We present our results in Figure \ref{fig:sota_ass}.
\paragraph{Few pixels adversarial attacks}
\label{sec:zeroth-order-few}
We consider the problem of adversarial attacks with a sparse constraint. Our goal is to minimize $\min_{\delta} f(\x+\bm{\delta})$ such that $\|\bm{\delta}\|_0 \leq k$, where $f$ is the Carlini-Wagner cost function \cite{chen2017zoo}, that is computed from the outputs of a pre-trained model on the corresponding dataset. We consider three different datasets for the attacks: MNIST, CIFAR, and Imagenet, of dimension respectively $d=784; 3072; 268203$. All algorithms are initialized with $\bm{\delta}=\bm{0}$. We set the hyperparameters of SZOHT as follows: MNIST: $k=20$, $\sus=100$, $q=100$, $\mu=0.3$, $\eta=1$; CIFAR: $k=60$, $\sus=100$, $q=1000$, $\mu=1e-3$, $\eta=0.01$; ImageNet: $k=100000$, $\sus=1000$, $q=100$, $\mu=0.01$, $\eta=0.015$. We present our results in Figure \ref{fig:sota_adv}.
All experiments are conducted in the workstation with four NVIDIA RTX A6000 GPUs, and take about one day to run.

\subsection{Results and Discussion}
\label{sec:results-discussion}
We can observe from Figures \ref{fig:sota_ass} and \ref{fig:sota_adv} that the performance of SZOHT is comparable or better than the other algorithms. This can be explained by the fact that SZOHT has a linear convergence, but the query complexity of ZSCG and RSPGF is in $\OO(1/T)$. We can also notice that RSPGF is faster than ZSCG, which is natural since proximal algorithms are faster than Frank-Wolfe algorithms (indeed, in case of possible strong-convexity, vanilla Frank-Wolfe algorithms maintain a $\mathcal{O}(1/T)$ rate \cite{garber2015faster}, when proximal algorithms get a linear rate \cite[Theorem 10.29]{beck2017first}). Finally, it appears that the convergence of ZORO is sometimes slower, particularly at the early stage of training, which may come from the fact that ZORO assumes sparse gradients, which is not necessarily verified in real-world use cases like the ones we consider; in those cases where the gradient is not sparse, it is possible that the sparse gradient reconstruction step of ZORO does not work well. This motivates even further the need to consider algorithms able to work without those assumptions, such as SZOHT.
\begin{figure}[htp]
  \vspace{-5pt}
  \centering
  \begin{minipage}{0.29\textwidth}
    \centering
       \subfigure[$f(\x)$]{\includegraphics[scale=0.20]{./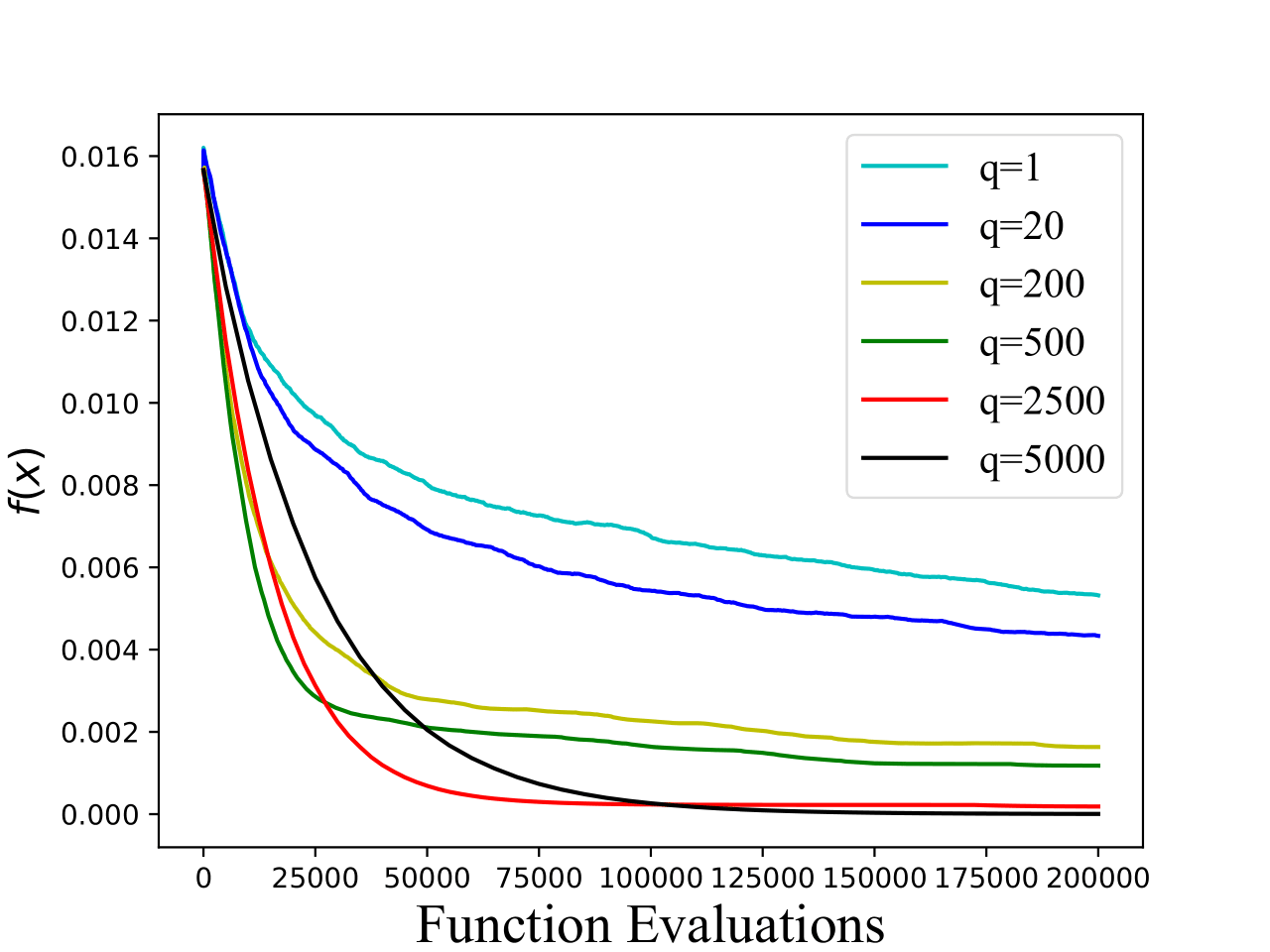}\label{fig:sensas_dist}}
   \subfigure[$\|\x - \x^*\|$]{\includegraphics[scale=0.21]{./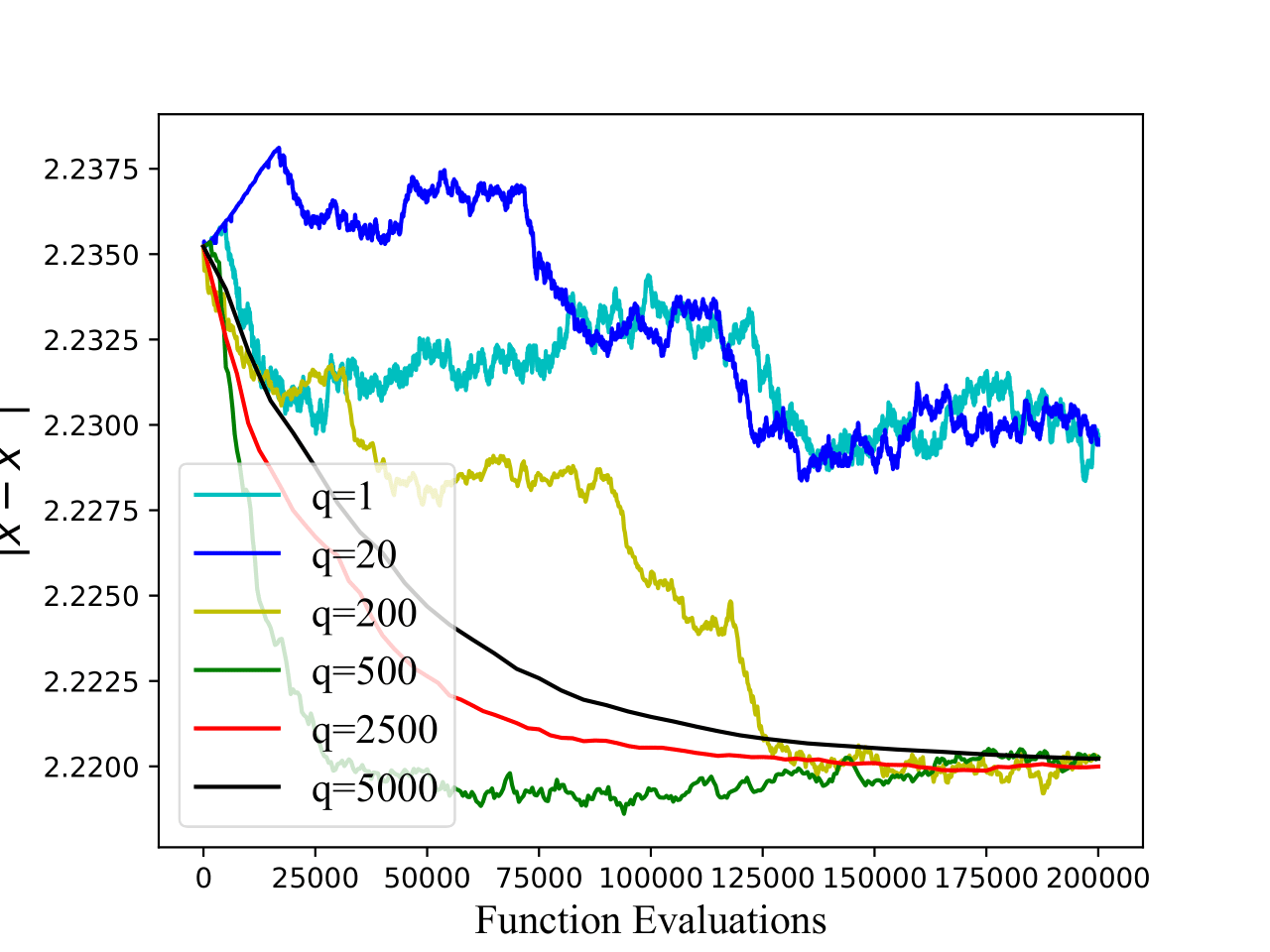}\label{fig:sensas_f}}
   \caption{Sensitivity analysis}
   \label{fig:sensas}
  \end{minipage}
  \hfill
  \begin{minipage}{0.70\textwidth}
    \centering
     \subfigure[port3]{\includegraphics[scale=0.195]{./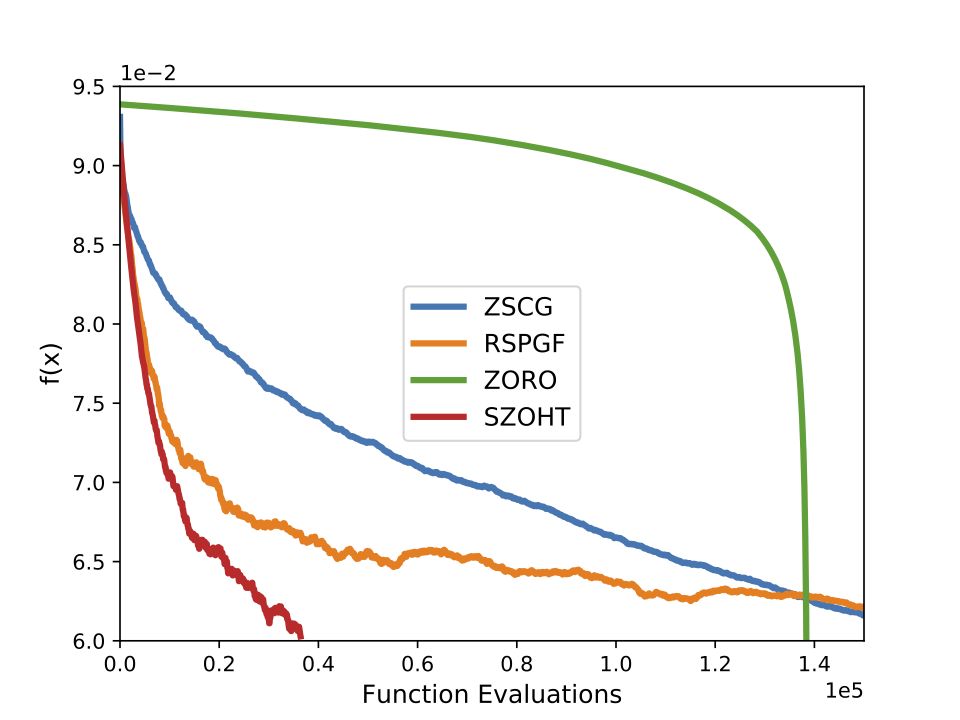}\label{fig:asset3_sota}}
   \subfigure[port4]{\includegraphics[scale=0.195]{./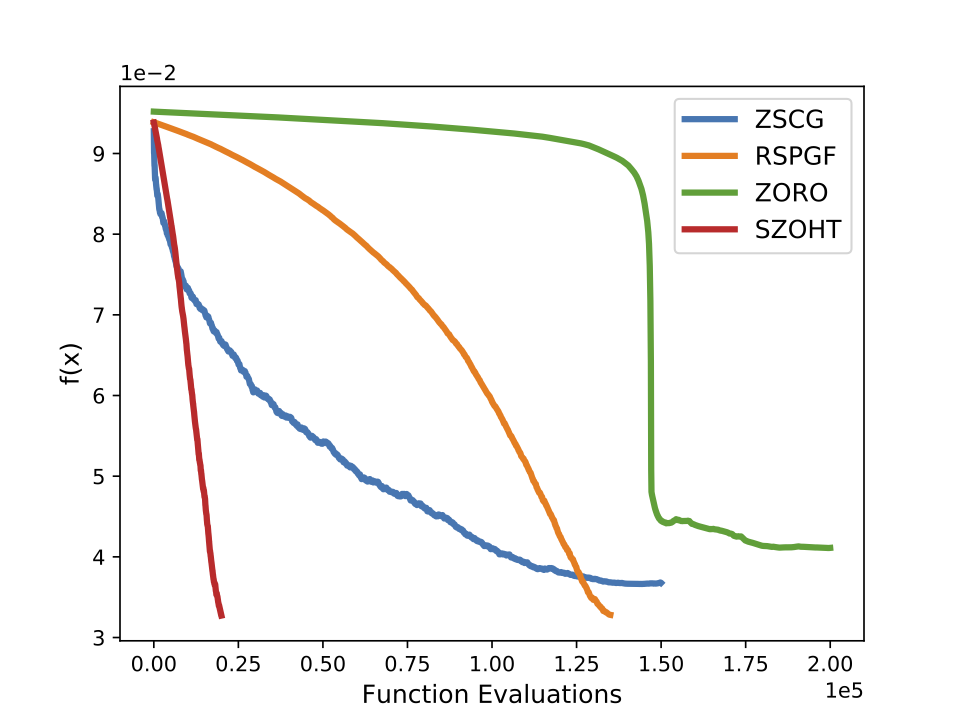}\label{fig:asset4_sota}}
     \subfigure[port5]{\includegraphics[scale=0.195]{./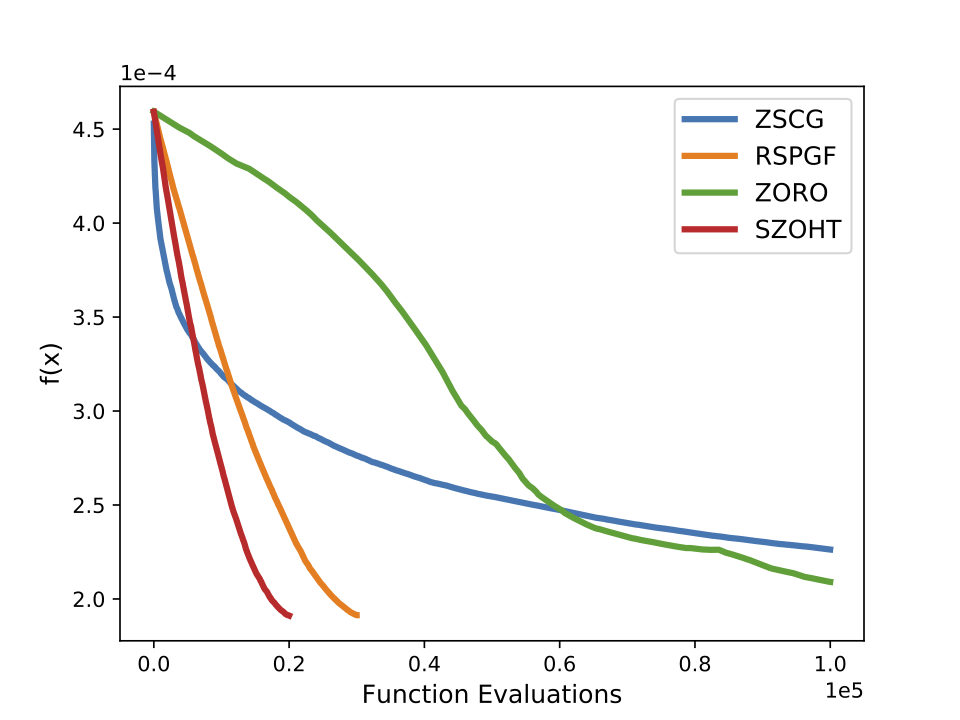}\label{fig:asset5_sota}}
  \caption{$f(\x)$ vs. \# queries (asset management)}
   \label{fig:sota_ass}
  \subfigure[MNIST]{\includegraphics[scale=0.195]{./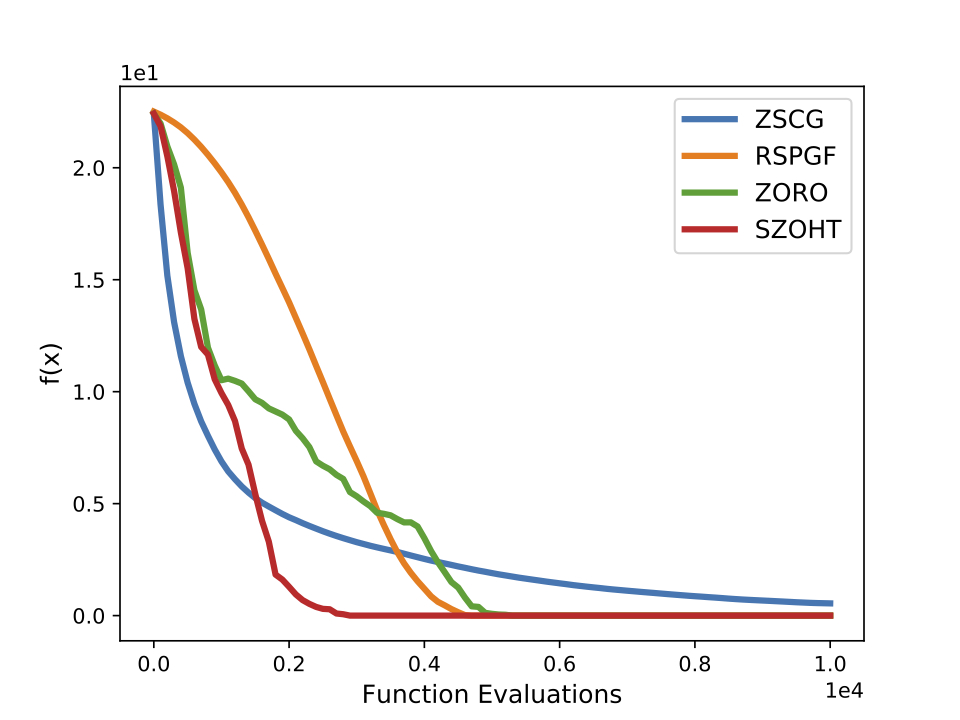}\label{fig:MNIST}}
   \subfigure[CIFAR]{\includegraphics[scale=0.195]{./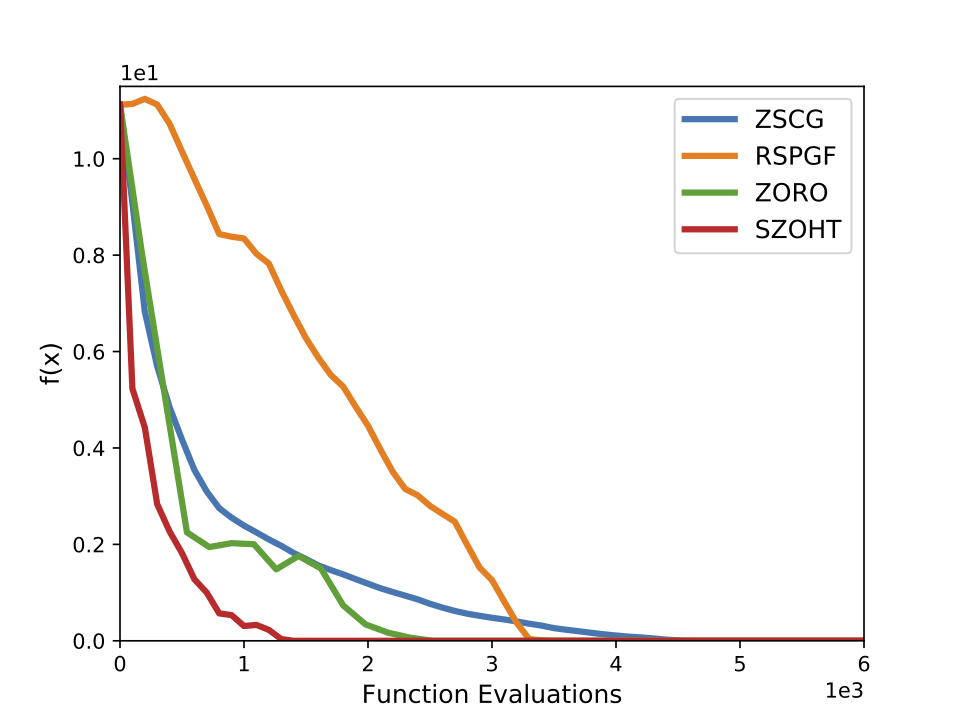}\label{fig:CIFAR}}
        \subfigure[Imagenet]{\includegraphics[scale=0.195]{./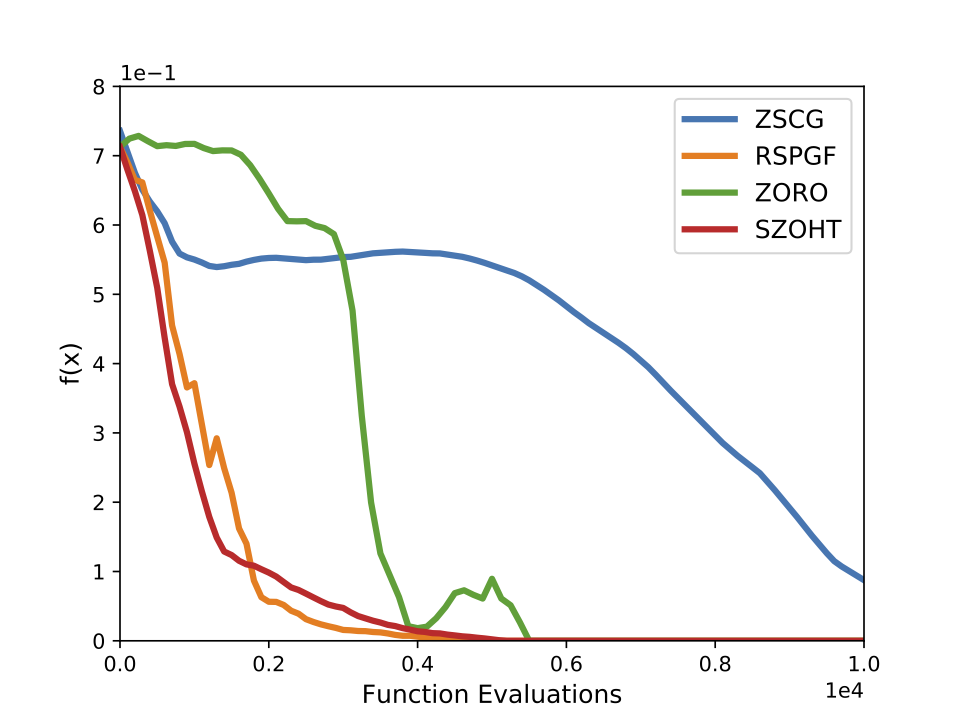}\label{fig:Imagenet}}
        \caption{$f(\x)$ vs. \# queries  (adversarial attack)}
        \label{fig:sota_adv}
  \end{minipage}
  \vspace{-5pt}
 \end{figure}
\section{Conclusion}
In this paper, we proposed a new algorithm, SZOHT, for sparse zeroth-order optimization. We gave its convergence analysis and showed that it is dimension independent in the smooth case, and weak dimension-dependent in the RSS case. We further verified experimentally the efficiency of SZOHT in several settings. Moreover, throughout the paper, we showed how the condition number of $f$ as well as the gradient error have an important impact on the convergence of SZOHT. As such, it would be interesting to study whether we can improve the query complexity by regularizing $f$, by using an adaptive learning rate or acceleration methods, or by using recent variance reduction techniques. Finally, it would also be interesting to extend this work to a broader family of sparse structures, such as low-rank approximations or graph sparsity. We leave this for future work.

\begin{ack}
  Xiao-Tong Yuan is funded in part by the National Key Research and Development Program of China under Grant No. 2018AAA0100400, and in part by the Natural Science Foundation of China (NSFC) under Grant No.U21B2049, No.61876090 and No.61936005.
  \end{ack}

\bibliography{bibli}
\bibliographystyle{plainnat}

\section*{Checklist}

\begin{enumerate}

\item For all authors...
\begin{enumerate}
  \item Do the main claims made in the abstract and introduction accurately reflect the paper's contributions and scope?
    \answerYes{}
  \item Did you describe the limitations of your work?
    \answerYes{In the conclusion we mention some possible ways to improve upon our work}
  \item Did you discuss any potential negative societal impacts of your work?
    \answerNA{}
  \item Have you read the ethics review guidelines and ensured that your paper conforms to them?
    \answerYes{}
\end{enumerate}

\item If you are including theoretical results...
\begin{enumerate}
  \item Did you state the full set of assumptions of all theoretical results?
    \answerYes{See Section~\ref{sec:prelim}}
        \item Did you include complete proofs of all theoretical results?
    \answerYes{See Appendix}
\end{enumerate}

\item If you ran experiments...
\begin{enumerate}
  \item Did you include the code, data, and instructions needed to reproduce the main experimental results (either in the supplemental material or as a URL)?
    \answerYes{Our code will be included in the supplementary material}
  \item Did you specify all the training details (e.g., data splits, hyperparameters, how they were chosen)?
    \answerYes{}
        \item Did you report error bars (e.g., with respect to the random seed after running experiments multiple times)?
    \answerNo{}
        \item Did you include the total amount of compute and the type of resources used (e.g., type of GPUs, internal cluster, or cloud provider)?
    \answerYes{}
\end{enumerate}

\item If you are using existing assets (e.g., code, data, models) or curating/releasing new assets...
\begin{enumerate}
  \item If your work uses existing assets, did you cite the creators?
    \answerYes{}
  \item Did you mention the license of the assets?
    \answerYes{}
  \item Did you include any new assets either in the supplemental material or as a URL?
    \answerYes{The code will be submitted in the supplementary material.}
  \item Did you discuss whether and how consent was obtained from people whose data you're using/curating?
    \answerNA{}
  \item Did you discuss whether the data you are using/curating contains personally identifiable information or offensive content?
    \answerNA{}
\end{enumerate}

\item If you used crowdsourcing or conducted research with human subjects...
\begin{enumerate}
  \item Did you include the full text of instructions given to participants and screenshots, if applicable?
    \answerNA{}
  \item Did you describe any potential participant risks, with links to Institutional Review Board (IRB) approvals, if applicable?
    \answerNA{}
  \item Did you include the estimated hourly wage paid to participants and the total amount spent on participant compensation?
    \answerNA{}
\end{enumerate}

\end{enumerate}

\newpage

\appendix
\part*{Supplementary material}

\section{Notations and Definitions}
\label{sec:notations}

Throughout this appendix, we will use the following notations:

\begin{itemize}
\item we denote the vectors in bold letters.
\item $\nabla f(\x)$ denotes the gradient of $f$ at $\x$.
\item $[d]$ denotes the set of all integers between $1$ and $d$: $\{1, .., d\}$.
\item $\uu_i$ denotes the $i$-th coordinate of vector $\uu$, and $\nabla_i f(\x)$ the $i$-th coordinate of $\nabla f(\x)$.
\item $\|\cdot{} \|_0$ denotes the $\ell_0$ norm (which is not a proper norm).
\item  $\|\cdot{} \|$ denotes the $\ell_{2}$ norm.
\item $\|\cdot{} \|_{\infty}$ denotes the maximum absolute component of a vector.
\item $\x \sim \mathcal{P}$ denotes that the random variable $\X$ (denoted as $\x$), of realization $\x$, follows a probability distribution $\mathcal{P}$ (we abuse notation by denoting similarly a random variable and its realization).
\item $\x_1, .., \x_n \overset{i.i.d}{\sim} \mathcal{P}$ denotes that we draw $n$ i.i.d. samples of a random variable $\x$, each from the distribution $\mathcal{P}$. 
\item $P(\x)$ denotes the value of the probability of $\x$ according to its  probability distribution.
\item $\E_{\x\sim \mathcal{P}}$ (or simply $\E_{\x}$ if there is no possible confusion) to denote the expectation of $\x$ which follows the distribution $\mathcal{P}$.
\item We denote by $\text{supp}(\x)$ the support of a vector $\x$, that is the set of its non-zero coordinates.
\item $|F|$ the cardinality (number of elements) of a set $F$.
\item All the sets we consider are subsets of $[d]$. So for a given set $F$, $F^c$ denotes the complement of $F$ in $[d]$
\item $\mathcal{S}^d(R)$ (or $\mathcal{S}^d(R)$ for simplicity if $R=1$) denotes the $d$-sphere of radius $R$, that is $\mathcal{S}^d(R) = \{\uu \in \mathbb{R}^d / \|\uu\|=R \}$.
\item $\mathcal{U}(\mathcal{S}^d)$ the uniform distribution on that unit sphere.
  \item  $\beta(d)$ is the surface area of the unit $d$-sphere defined above.
\item $\mathcal{S}^d_{S}$ denotes a set that we call the restricted $d$-sphere on $S$, described as:  $\{ \uu_S / \uu \in \{ \bm{v} \in \mathbb{R}^d / \|\bm{v}_S\|=1 \}\}$, that is the set of unit vectors supported by $S$.
  \item $\mathcal{U}(\mathcal{S}^d_S)$ denotes the uniform distribution on that restricted sphere above.
  \item We denote by $\uu_F$ (resp. $\nabla_F f(\x)$) the hard-thresholding of $\uu$ (resp. $\nabla f(\x)$) over the support $F$, that is, a vector which keeps $\uu$ (resp. $\nabla f(\x)$) untouched for the set of coordinates in $F$, but sets all other coordinates to $0$.
  \item $\binom{[d]}{s}$ denotes the set of all subsets of $[d]$ that contain $s$ elements: $\binom{[d]}{s} = \{ S: |S|=s, S \subseteq [d]\}$.
\item $\mathcal{U}(\binom{[d]}{s})$ denotes the uniform distribution on the set above.
\item $\bm{I}$ denotes the identity matrix $\bm{I}_{d\times d}$.
\item $\bm{I}_S$ denotes the identity matrix with 1 on the diagonal only at indices belonging to the support $S$: $\bm{I}_{i, i}=1 \text{ if } i \in S$, and 0 elsewhere.
\item  $S \ni e$ denotes  that set $S$ contains the element $e$.
\item  $(\uu_i)_{i=1}^n$ denotes the $n$-uple of elements $\uu_1, .., \uu_n$.
\item $\Gamma$ denotes the Gamma function \cite{arfken1999mathematical}.
 \item $\int_{A}f(\uu)d\uu$ denotes the integral of $f$ over the set $A$.
 \item $\log$ denotes the natural logarithm (in base $e$).
\end{itemize}

\section{Auxilliary Lemmas}
\label{sec:proj-vect-onto}

\begin{appxlem}[\cite{Sykora2005} (10)] \label{lem:pp}
  Let $\bm{p} \in \mathbb{N}^{d}$, and denote $p:= \sum_{i=1}^d \bm{p}_i$, we have:
  $$\int_{\mathcal{S}^d} \prod_{i=1}^d \left( \uu_i^2 \right)^{\bm{p}_i}d\uu = 2 \frac{\prod_{i=1}^n \Gamma(\bm{p}_i + 1/2)}{\Gamma(p + d/2)}$$
  
\end{appxlem}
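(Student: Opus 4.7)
The plan is to use the classical ``Gaussian bridge'' between Euclidean and spherical integration. I would introduce the auxiliary $d$-dimensional Gaussian moment
$$J \;:=\; \int_{\mathbb{R}^d} \prod_{i=1}^d x_i^{2\bm{p}_i}\, e^{-\|\bm{x}\|^2}\, d\bm{x}$$
and evaluate it in two independent ways; the stated identity then falls out by equating the results. (Here I am using $2\bm{p}_i$ in the Cartesian exponents in keeping with the Sykora convention: any odd Cartesian exponent would kill the sphere integral by the reflection symmetry $\uu_i \mapsto -\uu_i$, so only even ones are relevant, which is precisely why the right-hand side of the lemma is expressible as a product of $\Gamma(\bm{p}_i+\tfrac{1}{2})$'s.)

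First I would factor $J$ via Fubini into a product of one-dimensional Gaussian moments,
$$J = \prod_{i=1}^d \int_{-\infty}^{\infty} x^{2\bm{p}_i}\, e^{-x^2}\, dx,$$
each of which reduces to a $\Gamma$-integral through the change of variables $t=x^2$, giving $\prod_{i=1}^d \Gamma(\bm{p}_i+\tfrac{1}{2})$. Second I would pass to polar coordinates $\bm{x}=r\uu$ on $\mathbb{R}^d\setminus\{0\}$, with Jacobian $r^{d-1}$. Using homogeneity of the monomial in $\bm{x}$, the radial and spherical parts separate:
$$J = \left(\int_0^\infty r^{2p+d-1}\, e^{-r^2}\, dr\right) \cdot \int_{\mathcal{S}^d}\prod_{i=1}^d \uu_i^{2\bm{p}_i}\, d\uu.$$
The same $t=r^2$ substitution identifies the radial factor as $\tfrac{1}{2}\Gamma(p+d/2)$; solving for the spherical integral then yields the claim.

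The ``hard part'' of this lemma is really only bookkeeping --- tracking the exponent convention and lining up the arguments of the $\Gamma$ functions correctly --- rather than any deep step. The computations themselves are elementary one-dimensional Beta/Gamma manipulations, which is precisely why the result is quoted verbatim from \cite{Sykora2005}; no tools beyond Fubini, homogeneity of the monomial, and the integral definition of $\Gamma$ are required.
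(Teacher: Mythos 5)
Your proposal is correct, and it is essentially the proof of the cited result: the paper itself gives no argument (it simply defers to \cite{Sykora2005}), and the Gaussian-moment/polar-coordinates computation you describe is exactly the standard derivation used there. Both evaluations of $J$ check out ($\int_{-\infty}^{\infty}x^{2\bm{p}_i}e^{-x^2}dx=\Gamma(\bm{p}_i+\tfrac12)$ and the radial factor $\tfrac12\Gamma(p+d/2)$), and you were right to insist on the exponent convention $\uu_i^{2\bm{p}_i}$: as stated with exponent $\bm{p}_i$ the identity is false (e.g.\ $\bm{p}=(2,0,\dots,0)$ gives $\int\uu_1^2\,d\uu=\tfrac{2\pi^{d/2}}{d\,\Gamma(d/2)}\neq\tfrac{6\pi^{d/2}}{d(d+2)\Gamma(d/2)}$), and the paper's own later use of the lemma to compute $\int\uu_1^4\,d\uu$ from $\bm{p}^{(a)}=(2,0,\dots,0)$ confirms the $2\bm{p}_i$ reading.
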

\begin{proof}
  The proof is given in \cite{Sykora2005}.
  \end{proof}
\begin{appxlem}\label{lemma:expecto}
  Let $F$ be a subset of $[d]$, of size $s$, with $(s, d) \in \mathbb{N}_*^2$. 
  We have the following:
  \begin{equation}
    \E_{\uu \sim \mathcal{U}(\mathcal{S}^d)}\|\uu_{F}\| \leq  \sqrt{\frac{s}{d}} \label{eq:poww1}\\
    \end{equation}
    \begin{equation}
    \E_{\uu \sim \mathcal{U}(\mathcal{S}^d)} \|\uu_{F}\|^2  = \frac{s}{d} \label{eq:poww2}\\
          \end{equation}
\begin{equation}
        \E_{\uu \sim \mathcal{U}(\mathcal{S}^d)} \|\uu_{ F}\|^4  = \frac{(s+2)s}{(d+2)d} \label{eq:poww4} \\
  \end{equation}
  
\end{appxlem}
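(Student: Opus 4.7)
The plan is to treat the three identities separately, in order of increasing difficulty, so that the bulk of the work (and the use of Lemma~\ref{lem:pp}) is concentrated in (\ref{eq:poww4}).

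For (\ref{eq:poww2}), I would avoid Lemma~\ref{lem:pp} entirely and just use the permutation symmetry of the uniform measure on $\mathcal{S}^d$: $\E[\uu_i^2]$ is independent of $i$, so summing over $i$ and using $\|\uu\| = 1$ gives $d\,\E[\uu_1^2] = \E\|\uu\|^2 = 1$, i.e.\ $\E[\uu_i^2] = 1/d$. Summing over $i\in F$ then yields $\E\|\uu_F\|^2 = s/d$. For (\ref{eq:poww1}), Jensen's inequality applied to the concave map $t\mapsto\sqrt t$ immediately gives $\E\|\uu_F\| = \E\sqrt{\|\uu_F\|^2} \le \sqrt{\E\|\uu_F\|^2} = \sqrt{s/d}$.

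The main content is (\ref{eq:poww4}). First I would expand
$$\|\uu_F\|^4 = \Bigl(\sum_{i\in F}\uu_i^2\Bigr)^{\!2} = \sum_{i\in F}\uu_i^4 \;+\; \sum_{\substack{i,j\in F\\ i\neq j}}\uu_i^2\uu_j^2,$$
so that by permutation symmetry only two moments need to be computed, namely $\E[\uu_1^4]$ and $\E[\uu_1^2\uu_2^2]$. Each of these is obtained from Lemma~\ref{lem:pp} by choosing the exponent vector $\bm p$ to be $(4,0,\dots,0)$ and $(2,2,0,\dots,0)$ respectively, and dividing by the total mass $\int_{\mathcal{S}^d} d\uu$, which is itself given by Lemma~\ref{lem:pp} at $\bm p = \bm 0$. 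The computation then reduces to evaluating Gamma at half-integers via $\Gamma(1/2)=\sqrt\pi$ and the recursion $\Gamma(x+1)=x\Gamma(x)$; this yields $\E[\uu_1^4] = 3/(d(d+2))$ and $\E[\uu_1^2\uu_2^2] = 1/(d(d+2))$. Substituting back gives
$$\E\|\uu_F\|^4 = s\cdot\frac{3}{d(d+2)} + s(s-1)\cdot\frac{1}{d(d+2)} = \frac{s(s+2)}{d(d+2)},$$
which is the stated identity.

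The only real obstacle is the bookkeeping of half-integer Gamma values in (\ref{eq:poww4}); conceptually everything reduces to symmetry plus Jensen plus one direct application of Lemma~\ref{lem:pp}. A convenient cross-check along the way is the constraint $F=[d]$, which forces $d\,\E[\uu_1^4] + d(d-1)\,\E[\uu_1^2\uu_2^2] = \E\|\uu\|^4 = 1$; this is automatically satisfied by the two Gamma-function values above and catches arithmetic slips before they propagate.
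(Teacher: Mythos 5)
Your proof is correct and follows essentially the same route as the paper's: permutation symmetry plus $\|\uu\|=1$ for \eqref{eq:poww2}, Jensen for \eqref{eq:poww1}, and the expansion of $\|\uu_F\|^4$ into $s$ fourth moments and $s(s-1)$ cross moments evaluated via Lemma~\ref{lem:pp} for \eqref{eq:poww4}. One small caution: the paper applies Lemma~\ref{lem:pp} with $\bm{p}=(2,0,\dots,0)$ and $(1,1,0,\dots,0)$ (i.e.\ it treats $\bm{p}_i$ as \emph{half} the exponent, so the integrand in that lemma should really read $\prod_i \uu_i^{2\bm{p}_i}$), whereas you feed it $(4,0,\dots,0)$ and $(2,2,0,\dots,0)$; plugging the latter into the formula exactly as printed would yield eighth rather than fourth moments, but the values you actually report, $\E[\uu_1^4]=3/(d(d+2))$ and $\E[\uu_1^2\uu_2^2]=1/(d(d+2))$, are the correct ones (and your $F=[d]$ normalization check confirms them), so the argument stands.
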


\begin{proof}
    We start by proving \eqref{eq:poww2}. Decomposing the norm onto every component, we get:
    \begin{equation}
   \E_{\uu \sim \mathcal{U}(\mathcal{S}^d)} \|\uu_{F}\|^2  = \E_{\uu \sim \mathcal{U}(\mathcal{S}^d)} \sum_{i\in F} \uu_i^2 =  \sum_{i\in F}\E_{\uu \sim \mathcal{U}(\mathcal{S}^d)} \uu_i^2      \label{eq:sumforallf}
    \end{equation}
    By symmetry, each $\uu_i$ has the same marginal probability distribution, so:
    \begin{equation}
  \forall i \in [d]: \quad\quad  \E_{\uu \sim \mathcal{U}(\mathcal{S}^d)} \uu_i^2=     \frac{1}{d}\sum_{i=1}^d \E_{\uu \sim \mathcal{U}(\mathcal{S}^d)} \uu_i^2 \label{eq:sym1}
    \end{equation}
    We also know, from the definition of the $\ell_2$ norm, and the fact that $\uu$ is a unit vector, that:
\begin{equation}
   \sum_{i=1}^d \E_{\uu \sim \mathcal{U}(\mathcal{S}^d)} \uu_i^2 =\E_{\uu \sim \mathcal{U}(\mathcal{S}^d)} \sum_{i=1}^d  \uu_i^2 = \E_{\uu \sim \mathcal{U}(\mathcal{S}^d)}\|\uu\|^2 = \E_{\uu \sim \mathcal{U}(\mathcal{S}^d)} 1 = 1     \label{eq:sym2}
\end{equation}
Therefore, combining \eqref{eq:sym1} and \eqref{eq:sym2}:
\begin{equation*}
\forall i \in [d]: \quad\quad \E_{\uu \sim \mathcal{U}(\mathcal{S}^d)} \uu_i^2 = \frac{1}{d}  
\end{equation*}
Plugging this into \eqref{eq:sumforallf}, we get  \eqref{eq:poww2}: $$   \E_{\uu \sim \mathcal{U}(\mathcal{S}^d)} \|\uu_{F}\|^2  = \frac{s}{d}$$
Using Jensen's inequality, \eqref{eq:poww1} follows from \eqref{eq:poww2}.
Let us now prove \eqref{eq:poww4}. By definition of the expectation for a uniform distribution on the unit sphere:
$$  \E_{\uu \sim \mathcal{U}(\mathcal{S}^d)} \|\uu_{F}\|^4 = \frac{1}{\beta(d)} \int_{\mathcal{S}^d} \|\uu_{F}\|^4 d\uu $$ We further develop the integral as follows:
\begin{align*}
  \int_{\mathcal{S}^d} \|\uu_{F}\|^4 d\uu &=  \int_{\mathcal{S}^d} (\|\uu_{F}\|^2)^2 d\uu =  \int_{\mathcal{S}^d} (\sum_{i\in F}\uu_i^4 + \sum_{(i, j)\in F, j\neq i}\uu_i^2 \uu_j^2) d\uu\\
                                    &= s  \int_{\mathcal{S}^d} \uu_1^4 d\uu + 2 {s \choose 2} \int_{\mathcal{S}^d} \uu_1^2\uu_2^2 d\uu \quad \text{(by symmetry)}\\
\end{align*}
Using Lemma \ref{lem:pp} in the expression above, with  $\bm{p}^{(a)}:= (2, 0, ..., 0)$, and $\bm{p}^{(b)} := (1, 1, 0, ..., 0)$, we obtain:
\begin{align*}
  \int_{\mathcal{S}^d} \|\uu_{F}\|^4 d\uu &= s \frac{\prod_{i=1}^d \Gamma(\bm{p}^{(a)}_k + \frac{1}{2}) }{\Gamma(2 + d/2)} + 2\frac{s(s-1)}{2} 2 \frac{\prod_{i=1}^d \Gamma(\bm{p}^{(b)}_k + 1/2)}{\Gamma(2+d/2)} \\
  &\overset{(a)}{=} \frac{6s\sqrt{\pi}^d}{(d+2)d\Gamma(d/2)} + \frac{2s(s-1)\sqrt{\pi}^d}{(d+2)d\Gamma(d/2)} = \frac{2(s+2)s\sqrt{\pi}^d}{(d+2)d\Gamma(d/2)}
\end{align*}
Where in (a) we used the fact that $\Gamma(\frac{1}{2}) = \sqrt{\pi}$ and $\Gamma(\frac{3}{2}) = \frac{\sqrt{\pi}}{2}$. So:
$$  \E_{\uu \sim \mathcal{U}(\mathcal{S}^d)}  \|\uu_F\|^4 = \frac{1}{\beta(d)} \int_{S^d}\|\uu_F\|^4 d\uu \overset{(b)}{=} \frac{s+2}{d+2}\frac{s}{d}$$
Where (b) comes from the closed form for the area of a $d$ unit sphere: $\beta(d) = \frac{2\sqrt{\pi}^d}{\Gamma(\frac{d}{2})}$
\end{proof}

\begin{appxlem}[\cite{gao2018information}, Lemma 7.3.b] \label{lemma:id}
  \begin{equation*}
\E_{\uu \sim \mathcal{U}(\mathcal{S}^d)}\uu \uu^T = \frac{1}{d} \bm{I}
  \end{equation*}
\end{appxlem}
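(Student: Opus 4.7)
The plan is to verify the identity entrywise. Let $M := \E_{\uu \sim \mathcal{U}(\mathcal{S}^d)}\uu\uu^{T}$, so that $M_{ij} = \E[\uu_i \uu_j]$ for $i,j \in [d]$. It then suffices to compute the diagonal and off-diagonal entries of $M$ separately and check that they agree with $\frac{1}{d}\bm{I}$.

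For the diagonal entries, I would simply apply Lemma \ref{lemma:expecto} (equation \eqref{eq:poww2}) with the singleton support $F = \{i\}$: this gives $\E[\uu_i^{2}] = |F|/d = 1/d$ for every $i \in [d]$, so $M_{ii} = 1/d$. (If one prefers, this is the same calculation done in the proof of Lemma \ref{lemma:expecto}: by symmetry all $\E[\uu_i^{2}]$ are equal, and they sum to $\E[\|\uu\|^{2}]=1$.)

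For the off-diagonal entries, fix $i \neq j$ and consider the coordinate-reflection map $T : \mathbb{R}^d \to \mathbb{R}^d$ defined by $(T\uu)_i = -\uu_i$ and $(T\uu)_k = \uu_k$ for $k \neq i$. Since $T$ is an orthogonal transformation mapping $\mathcal{S}^d$ onto itself, and the uniform distribution on $\mathcal{S}^d$ is invariant under orthogonal transformations, the pushforward of $\mathcal{U}(\mathcal{S}^d)$ under $T$ equals $\mathcal{U}(\mathcal{S}^d)$. Hence
$$\E[\uu_i \uu_j] \;=\; \E[(T\uu)_i (T\uu)_j] \;=\; \E[-\uu_i \uu_j] \;=\; -\E[\uu_i \uu_j],$$
which forces $M_{ij} = 0$.

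Combining these two computations gives $M = \frac{1}{d}\bm{I}$, as required. I do not anticipate any serious obstacle; the only thing to be careful with is invoking the orthogonal invariance of the uniform measure cleanly for the off-diagonal step, since Lemma \ref{lem:pp} as stated only handles even exponents and does not directly give the vanishing of $\E[\uu_i \uu_j]$.
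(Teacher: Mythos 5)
Your proof is correct. The paper itself gives no proof of this lemma---it simply defers to \cite{gao2018information}---so there is no in-paper argument to compare against, but your two ingredients are exactly the ones the paper uses in neighboring proofs: the diagonal computation $\E[\uu_i^2]=1/d$ is the symmetry-plus-normalization argument of \eqref{eq:sym1}--\eqref{eq:sym2} in the proof of Lemma \ref{lemma:expecto}, and the sign-flip reflection killing the off-diagonal terms is the same device the paper applies to obtain \eqref{eq:ij} in the proof of Lemma \ref{lemma:one_direc_norm}. Your closing caveat is also apt: Lemma \ref{lem:pp} as stated covers only nonnegative integer exponents and would need the observation that an odd exponent makes the integrand odd, so the reflection argument is the cleaner route for the off-diagonal entries.
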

\begin{proof}
  The proof is given in  \cite{gao2018information}.
\end{proof}
\begin{appxlem}[\cite{shen2017tight}, Theorem 1; \cite{yuan2017gradient}, Lemma 17]\label{lemma:lemma17}
 Let $\bm{b} \in \mathbb{R}^{d}$ be an arbitrary $d$-dimensional vector and $\bm{a} \in \mathbb{R}^{d}$ be any $k$-sparse vector. Denote $\bar{k}=\|\bm{a}\|_{0} \leq k$, and  $\bm{b}_k$ the vector $\bm{b}$ with all the $d-k$ smallest components set to 0 (that is, $\bm{b}_k$ is the best $k$-sparse approximation of $\bm{b}$). Then, we have the following bound:
$$
\left\|\bm{b}_{k}-\bm{a}\right\|^{2} \leq \delta\|\bm{b}-\bm{a}\|^{2}, \quad \delta=1+\frac{\beta+\sqrt{(4+\beta) \beta}}{2}, \quad \beta=\frac{\min \{\bar{k}, d-k\}}{k-\bar{k}+\min \{\bar{k}, d-k\}}
$$
\end{appxlem}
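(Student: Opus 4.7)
My plan is to reduce the inequality to a one-variable optimization governed by two quantities: the $\ell_2$ mass of $\bm{b}$ on coordinates that are truncated by the top-$k$ operator but kept by $\bm{a}$, and the $\ell_2$ mass of $\bm{b}_k$ on coordinates that $\bm{a}$ misses. To set this up I would partition $[d]$ into four disjoint pieces determined by $F^{\star}:=\mathrm{supp}(\bm{b}_k)$ (of size $k$) and $I:=\mathrm{supp}(\bm{a})$ (of size $\bar{k}\leq k$): $A=F^{\star}\cap I$, $B=F^{\star}\setminus I$, $C=I\setminus F^{\star}$, $D=(F^{\star}\cup I)^c$. Expanding the two squared norms coordinate-block by coordinate-block gives
\begin{align*}
\|\bm{b}_k-\bm{a}\|^2 &= \|\bm{b}_A-\bm{a}_A\|^2 + \|\bm{b}_B\|^2 + \|\bm{a}_C\|^2,\\
\|\bm{b}-\bm{a}\|^2 &= \|\bm{b}_A-\bm{a}_A\|^2 + \|\bm{b}_B\|^2 + \|\bm{b}_C-\bm{a}_C\|^2 + \|\bm{b}_D\|^2,
\end{align*}
from which one reads off the difference $\|\bm{b}_k-\bm{a}\|^2 - \|\bm{b}-\bm{a}\|^2 = 2\langle \bm{a}_C,\bm{b}_C\rangle - \|\bm{b}_C\|^2 - \|\bm{b}_D\|^2$.

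Next, I would exploit the defining property of the top-$k$ operator: every coordinate kept (indices in $B\subseteq F^{\star}$) has magnitude at least as large as every coordinate discarded (indices in $C\subseteq (F^{\star})^c$). Since $|B|-|C| = k-\bar{k}\geq 0$, one can select any subset $M\subseteq B$ with $|M|=|C|$ and pair indices, yielding $\|\bm{b}_C\|^2 \leq \|\bm{b}_M\|^2 \leq (|C|/|B|)\|\bm{b}_B\|^2$ after an averaging step (each squared value in $B$ is at least as large as any squared value in $C$). The ratio $|C|/|B|=|C|/(k-\bar{k}+|C|)$ increases in $|C|$, and $|C|\leq\min\{\bar{k},d-k\}$, so the worst case is exactly $\beta$, giving $\|\bm{b}_C\|^2\leq \beta\|\bm{b}_B\|^2$. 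This is the source of the constant $\beta$ in the statement.

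Combining the two ingredients, I would drop the nonnegative term $\|\bm{b}_D\|^2$, apply Cauchy--Schwarz to the cross term, and record the trivial block-wise consequences of the two decompositions above: $\|\bm{a}_C\|\leq \|\bm{b}_k-\bm{a}\|$ and $\|\bm{b}_B\|\leq \|\bm{b}-\bm{a}\|$. Writing $t:=\|\bm{b}_k-\bm{a}\|/\|\bm{b}-\bm{a}\|$ and $\alpha:=\|\bm{b}_C\|/\|\bm{b}-\bm{a}\|\in[0,\sqrt{\beta}]$, the resulting inequality reads $t^2 \leq 1 + 2\alpha t - \alpha^2$. Optimizing the right-hand side jointly over the admissible range of $\alpha$ (pushing $\alpha$ to its top-$k$-constrained value $\sqrt{\beta}$ in the binding regime) produces the quadratic $t^2 - \sqrt{\beta}\,t - 1 \leq 0$, whose positive root is $(\sqrt{\beta}+\sqrt{\beta+4})/2$. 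Squaring yields $t^2\leq 1+\tfrac{1}{2}(\beta+\sqrt{\beta(\beta+4)}) = \delta$, as claimed.

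The delicate step, and the one I expect to be the main obstacle, is squeezing out the precise constant $\delta$ rather than a loose substitute such as $1+\beta$ or $(1+\sqrt{\beta})^2$. Naively bounding $2\alpha\gamma-\alpha^2$ by either $\gamma^2$ or by $2\sqrt{\beta}\|\bm{w}\|\gamma$ separately yields these weaker bounds; the sharp constant requires using \emph{both} sides of the Cauchy--Schwarz step together with the top-$k$ constraint $\alpha\leq \sqrt{\beta}\|\bm{b}-\bm{a}\|$ and the identity $\|\bm{a}_C\|\leq \|\bm{b}_k-\bm{a}\|$ simultaneously, so that the resulting quadratic in $t$ is the sharp one. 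The non-expansiveness constant then arises precisely as its largest root.
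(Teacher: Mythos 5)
The paper itself does not prove this lemma (it simply defers to \cite{shen2017tight}), so there is no in-paper argument to compare against; your decomposition into the blocks $A,B,C,D$, the identity $\|\bm{b}_k-\bm{a}\|^2-\|\bm{b}-\bm{a}\|^2=2\langle\bm{a}_C,\bm{b}_C\rangle-\|\bm{b}_C\|^2-\|\bm{b}_D\|^2$, and the averaging bound $\|\bm{b}_C\|^2\le\beta\|\bm{b}_B\|^2$ are all correct and are exactly the right ingredients. The gap is in the last step. From $t^2\le 1+2\alpha t-\alpha^2$ with $\alpha\in[0,\sqrt{\beta}]$ you can only conclude $(t-\alpha)^2\le 1$, hence $t\le 1+\alpha\le 1+\sqrt{\beta}$; this is precisely the loose bound $(1+\sqrt{\beta})^2$ you said you wanted to avoid, and it does \emph{not} produce the quadratic $t^2-\sqrt{\beta}\,t-1\le 0$. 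The relaxed system genuinely admits points violating the claimed conclusion: for $\beta=1$, take $\alpha=1$, $t=2$; then $t^2=4\le 1+2\alpha t-\alpha^2=4$, yet $\delta=1+\tfrac{1+\sqrt{5}}{2}\approx 2.62<4$. The information destroyed is the coupling between the blocks: $\|\bm{a}_C\|^2$ and $\|\bm{b}_B\|^2$ are \emph{orthogonal} components of $\|\bm{b}_k-\bm{a}\|^2$, and $\|\bm{b}_B\|^2$ simultaneously sits inside $\|\bm{b}-\bm{a}\|^2$; once you replace $\|\bm{a}_C\|$ by the whole of $\|\bm{b}_k-\bm{a}\|$ and $\|\bm{b}_C\|$ by $\sqrt{\beta}\|\bm{b}-\bm{a}\|$ separately, the sharp constant is unrecoverable.

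The repair keeps the decomposition intact. Write $P=\|\bm{b}_A-\bm{a}_A\|$, $u=\|\bm{a}_C\|$, $v=\|\bm{b}_B\|$, $w=\|\bm{b}_C\|$, and use the reverse triangle inequality $\|\bm{b}_C-\bm{a}_C\|\ge u-w$ together with $w\le\sqrt{\beta}\,v$ to get
\begin{equation*}
\frac{\|\bm{b}_k-\bm{a}\|^2}{\|\bm{b}-\bm{a}\|^2}\;\le\;\frac{P^2+v^2+u^2}{P^2+v^2+\bigl(u-\min\{u,\sqrt{\beta}\,v\}\bigr)^2}\;\le\;\max\Bigl\{1+\beta,\ \max_{s\ge\sqrt{\beta}}\frac{1+s^2}{1+(s-\sqrt{\beta})^2}\Bigr\},
\end{equation*}
where $s=u/v$ (the case $u\le\sqrt{\beta}v$ gives $1+\beta\le\delta$, and setting $P=0$ is without loss since $P^2$ only shrinks the ratio). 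Differentiating the last ratio shows its maximizer satisfies $s^2-\sqrt{\beta}\,s-1=0$, i.e.\ $s^*=\tfrac{\sqrt{\beta}+\sqrt{\beta+4}}{2}$, and substituting $s^{*2}=1+\sqrt{\beta}\,s^*$ gives the value $1+\tfrac{\beta+\sqrt{\beta(\beta+4)}}{2}=\delta$ exactly. So the quadratic $s^2-\sqrt{\beta}\,s-1=0$ you were aiming for does govern the answer, but it arises as the stationarity condition of this joint ratio in $u/v$, not as a bound on $t=\|\bm{b}_k-\bm{a}\|/\|\bm{b}-\bm{a}\|$ itself.
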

\begin{proof}
  The proof is given in \cite{shen2017tight}.
\end{proof}
\begin{appxcor}\label{cor:cor17}
   With the notations and variables above in Lemma \ref{lemma:lemma17},   we also have the following, simpler  bound, from  \cite{yuan2017gradient}:
$$\left\|\bm{b}_{k}-\bm{a}\right\| \leq \gamma \|\bm{b}-\bm{a}\| $$ with $$  \gamma = \sqrt{1 + \left( \bar{k}/k + \sqrt{\left(4 + \bar{k}/k\right)\bar{k}/k}\right)/2}    $$
\end{appxcor}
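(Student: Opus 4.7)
The plan is to simply start from Lemma \ref{lemma:lemma17}, which provides the tight bound $\|\bm{b}_k-\bm{a}\|^2\leq \delta\|\bm{b}-\bm{a}\|^2$ with
$\delta = 1+(\beta+\sqrt{(4+\beta)\beta})/2$ and $\beta = \min\{\bar{k},d-k\}/(k-\bar{k}+\min\{\bar{k},d-k\})$, and to show that the cleaner expression $\gamma$ in the corollary statement is obtained by monotonically upper-bounding $\beta$ by $\bar{k}/k$. Concretely, I would verify that $\delta\leq \gamma^2$, so taking square roots yields the claim.

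The first step is to observe that the function $g(x):=1+(x+\sqrt{(4+x)x})/2$ is (strictly) increasing on $[0,\infty)$, since both $x\mapsto x$ and $x\mapsto \sqrt{(4+x)x}=\sqrt{x^2+4x}$ are increasing on $x\geq 0$. Hence it suffices to prove $\beta\leq \bar{k}/k$; then $\delta=g(\beta)\leq g(\bar{k}/k)=\gamma^2$ by definition of $\gamma$.

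The second step is to check $\beta\leq \bar{k}/k$ by the two cases of the minimum. If $\min\{\bar{k},d-k\}=\bar{k}$, then $\beta=\bar{k}/(k-\bar{k}+\bar{k})=\bar{k}/k$, giving equality. If instead $\min\{\bar{k},d-k\}=d-k$, so that $d-k\leq \bar{k}$ (equivalently $d\leq k+\bar{k}$), then $\beta=(d-k)/(d-\bar{k})$, and the desired inequality $(d-k)/(d-\bar{k})\leq \bar{k}/k$ is equivalent (after cross-multiplying by the positive denominators) to $d(k-\bar{k})\leq k^2-\bar{k}^2=(k-\bar{k})(k+\bar{k})$, which holds trivially when $k=\bar{k}$ and reduces to $d\leq k+\bar{k}$ otherwise — exactly the case assumption.

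Combining the two cases gives $\beta\leq \bar{k}/k$, hence $\delta\leq \gamma^2$, and taking the square root of the Lemma \ref{lemma:lemma17} bound yields $\|\bm{b}_k-\bm{a}\|\leq \gamma\|\bm{b}-\bm{a}\|$, as claimed. There is no real obstacle here; the only slightly subtle point is the case split in the verification of $\beta\leq \bar{k}/k$, and remembering that the monotonicity of $g$ justifies turning the sharper, case-dependent $\delta$ into the uniform constant $\gamma^2$.
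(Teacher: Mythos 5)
Your proposal is correct and follows essentially the same route as the paper's proof: both reduce the claim to showing $\beta \leq \bar{k}/k$ via the same two-case analysis on $\min\{\bar{k}, d-k\}$, then invoke monotonicity of the bound in $\beta$ (which you make explicit and the paper leaves implicit). The only cosmetic difference is that in the second case you cross-multiply while the paper manipulates $1 - \frac{k-\bar{k}}{d-\bar{k}} \leq 1 - \frac{k-\bar{k}}{k}$; the underlying inequality is identical.
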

\begin{proof}
  
  There are two possibilities for $\beta$ in Lemma \ref{lemma:lemma17}: either $\beta=\frac{\bar{k}}{k}$ (if $d-k >  \bar{k}$) or $\beta=\frac{d-k}{d-\bar{k}}$  (if $d-k \leq \bar{k}$). In the latter case:\\
  $$d-k \leq \bar{k} \implies d-\bar{k} \leq k \implies \frac{k-\bar{k}}{d-\bar{k}} \geq \frac{k-\bar{k}}{k} \implies 1- \frac{k-\bar{k}}{d-\bar{k}} \leq 1- \frac{k-\bar{k}}{k} \implies \frac{d-k}{d-\bar{k}} \leq \frac{\bar{k}}{k}$$
  Therefore, in both cases, $\beta \leq \frac{\bar{k}}{k}$, which, plugging into Lemma \ref{lemma:lemma17}, gives Corollary \ref{cor:cor17}.
\end{proof}

\section{Proof of Proposition \ref{prop:zograd} }  \label{sec:proof_zograd}

With an abuse of notation, let us denote by $f$ any function $f_{\stoch}$ for some given value of the noise $\stoch$.
First, we derive in section \ref{sec:one-direct-estim} the error of the gradient estimate if we sample only one direction ($q=1$). Then, in section \ref{sec:batched}, we show how sampling $q$ directions reduces the error of the gradient estimator, producing the results of Proposition \ref{prop:zograd}.

\subsection{One direction estimator}
\label{sec:one-direct-estim}

Throughout all this section, we assume that $q=1$ for the gradient estimator $\hat{\nabla} f(x)$ defined in \eqref{eq:zoest}.
\subsubsection{Expected deviation from the mean}
\label{sec:expect-devi-from}

\begin{appxlem}
  \label{lemma:one_direc_mu}
For any $(L_{\sus}, \sus)$-RSS function $f$, using the gradient estimator  $\hat{\nabla} f(x)$ defined in \eqref{eq:zoest} with $q=1$, we have, for any support $F \in [d]$, with $|F|=s$: 
\begin{equation*}
  \left\| \E \left[\hat{\nabla}_F f(\x)\right]- \nabla_F f(\x)\right\|^2 \leq \varepsilon_{\mu} \mu^2
\end{equation*}
with $\varepsilon_{\mu} = L_{\sus}^2sd$
\end{appxlem}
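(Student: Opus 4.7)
The plan is to Taylor-expand $f(x+\mu u)$ around $x$ using the RSS assumption, then separate the estimator into an unbiased gradient term plus a small Taylor-remainder term, and finally bound the expected norm of the remainder piece using the two-stage structure of the random support sampling. The main computation is to verify that $\E[uu^T] = \tfrac{1}{d}\bm{I}$ even though $u$ is drawn from a sparse sphere; once that identity is in hand, the bias collapses to something controlled purely by the smoothing radius $\mu$.

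First I would apply the $(L_{s_2},s_2)$-RSS assumption to write $f(x+\mu u) - f(x) = \mu\langle \nabla f(x), u\rangle + R(u)$ where $|R(u)| \leq \tfrac{L_{s_2}}{2}\mu^2\|u\|^2 = \tfrac{L_{s_2}}{2}\mu^2$. Here the key is that $\mu u$ is supported on a set of size at most $s_2$, so the RSS inequality applies to the pair $(x, x+\mu u)$. Substituting into the definition \eqref{eq:zoest} with $q=1$ gives
\[
\hat{\nabla} f(x) \;=\; d\,\langle \nabla f(x), u\rangle\, u \;+\; \tfrac{d}{\mu}R(u)\, u .
\]
Taking expectations and restricting to coordinates in $F$, the first term becomes $d\cdot (\E[uu^T]\nabla f(x))_F$, so everything hinges on computing $\E[uu^T]$.

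Next I would compute $\E[uu^T] = \tfrac{1}{d}\bm{I}$ by conditioning on the random support. Given $S$, the vector $u$ is uniform on the unit sphere of the $s_2$-dimensional subspace indexed by $S$, so by Lemma \ref{lemma:id} we get $\E[uu^T\mid S] = \tfrac{1}{s_2}\bm{I}_S$. Averaging over $S\sim \mathcal{U}(\binom{[d]}{s_2})$ and using that each coordinate lies in $S$ with probability $s_2/d$ by symmetry yields $\E[\bm{I}_S] = \tfrac{s_2}{d}\bm{I}$, hence $\E[uu^T]=\tfrac{1}{d}\bm{I}$. Therefore $\E\bigl[d\langle \nabla f(x),u\rangle u_F\bigr] = \nabla_F f(x)$, and the bias reduces to
\[
\E[\hat{\nabla}_F f(x)] - \nabla_F f(x) \;=\; \E\!\left[\tfrac{d}{\mu}R(u)\, u_F\right].
\]

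Finally I would bound this residual by Jensen and the bound on $|R(u)|$:
\[
\bigl\|\E[\hat{\nabla}_F f(x)] - \nabla_F f(x)\bigr\|
\;\leq\; \tfrac{d}{\mu}\cdot \tfrac{L_{s_2}}{2}\mu^2\cdot \E[\|u_F\|]
\;\leq\; \tfrac{d L_{s_2}\mu}{2}\sqrt{\E[\|u_F\|^2]} .
\]
The same two-stage sampling argument as before gives $\E[\|u_F\|^2] = \E_S\!\left[\tfrac{|F\cap S|}{s_2}\right] = \tfrac{s}{d}$, since $\E_S[|F\cap S|] = s\cdot s_2/d$. Squaring produces the bound $\tfrac{1}{4}L_{s_2}^2 s d\,\mu^2 \leq L_{s_2}^2 s d\,\mu^2 = \varepsilon_\mu\mu^2$, as required. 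The only mildly delicate step in the argument is the identity $\E[uu^T]=\tfrac{1}{d}\bm{I}$ for the restricted-sphere distribution; everything else is a routine application of Taylor-with-remainder plus Jensen.
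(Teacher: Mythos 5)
Your proposal is correct and follows essentially the same route as the paper's proof: both hinge on the identity $\E[\uu\uu^T]=\tfrac{1}{d}\bm{I}$ for the restricted-sphere sampling (making the linear term unbiased on $F$) and then bound the RSS-controlled remainder via $\E\|\uu_F\|\le\sqrt{s/d}$. The only difference is that you express the remainder through the quadratic Taylor bound $|R(\uu)|\le\tfrac{L_{\sus}}{2}\mu^2$ while the paper uses the mean value theorem $\tfrac{f(\x+\mu\uu)-f(\x)}{\mu}=\langle\nabla f(\x+c\uu),\uu\rangle$, which is a cosmetic variation (and incidentally gives you a slightly better constant, $\tfrac{1}{4}L_{\sus}^2 sd$ versus $L_{\sus}^2 sd$).
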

\begin{proof}

From the definition of the gradient estimator in \eqref{eq:zoest}: 

\begin{align*}
 \|\E[\hat{\nabla}_F f(\x)] -\nabla_F f(\x)\|=\left\|\E d \frac{f(\x+\mu \uu)-f(\x)}{\mu} \uu_F-\nabla_F f(\x) \right\|\\
\end{align*}
Now, $(L_{\sus}, \sus)$-RSS implies continuous differentiability over an $\sus$-sparse direction (since $(L_{\sus}, \sus)$-RSS  actually equals Lipschitz continuity of the gradient over any $\sus$-sparse set, which implies continuity of the gradient over those sets). Therefore, from the mean value theorem, , we have, for some $c \in [0, \mu]$: $\frac{f(\x+\mu \uu)-f(\x)}{\mu} = \langle \nabla f(\x+c \uu), \uu \rangle$.
We now use the following result:
\begin{align*}
 \E \uu \uu^T = \E_{S\sim \rs{\sus}{d}} \E_{\uu \sim \mathcal{U}(\mathcal{S}_S^d)} \uu \uu^T \overset{(a)}{=} \E_{S\sim \rs{\sus}{d}} \frac{1}{\sus} \bm{I}_S = \frac{1}{\sus} \E_{S\sim \rs{\sus}{d}}\bm{I}_S\overset{(b)}{=} \frac{1}{\sus} \frac{\sus}{d} \bm{I} = \frac{1}{d}\bm{I}
\end{align*}
Where for (a) comes from applying Lemma \ref{lemma:id} to the unit sub-sphere on the support  $S$, and (b) follows by observing that each diagonal element of index $i$ actually follows a Bernoulli distribution of parameter $\frac{\sus}{d}$, since there are $\binom{d-1}{\sus-1}$ arrangements of the support which contain $i$, over $\binom{d}{\sus}$ total arrangements, which gives a probability $p = \frac{\binom{d-1}{\sus-1}}{\binom{d}{\sus}} = \frac{(d-1)!\sus!(d-\sus)!}{(\sus-1)!(d-1-(\sus-1))!d!} = \frac{\sus}{d}$ to get the value $1$ at $i$.

This allows to factor the true gradient into the scalar product:
\begin{align*}
  \|\E[\hat{\nabla}_F f(\x)]-\nabla_F f(\x)\|&=d \|\E \langle \nabla f(\x+c \uu) - \nabla f(\x), \uu \rangle \uu_F \|\\
 &\leq d \E \|\uu_F \uu^T[ \nabla f(\x+c \uu) - \nabla f(\x)]  \|
\end{align*}
where the last inequality follows from the property $\E\| \X - \E \X \|^2 = \E\|  \X \|^2 - \| \E \X \|^2$, which implies $ \| \E \X \| = \sqrt{\E\|  \X \|^2  - \E\| (\X - \E \X) \|^2 } \leq\E\|  \X \| $, for any multidimensional random variable $\X$.
Using the Cauchy-Schwarz inequality, we obtain:
\begin{align*}
  \|\E[\hat{\nabla}_F f(\x)]-\nabla_F f(\x)\|&\leq \E_{S\sim \rs{\sus}{d}} \E_{\uu \sim \mathcal{U}(\mathcal{S}_S^d)} \|\uu_F\|  \|\uu\| \|\nabla_S f(\x+c \uu) - \nabla_S f(\x)\|
\end{align*}
Since $f \in (L_{\sus}, \sus)$-RSS and $\|\uu_s\|_0\leq \sus$, we have: $\|\nabla_S f(\x+c \uu) - \nabla_S f(\x)\| \leq L_{\sus} \|c \uu\|$. We also have  $c \in [0, \mu]$, which implies $\|c \uu \| \leq \mu \|\uu\|$. Therefore:
\begin{align*}
  \|\E[\hat{\nabla}_F f(\x)]-\nabla_F f(\x)\|&\leq \E_S \E_{\uu}dL_{\sus}\mu \|\uu_F\|  \|\uu\| \|\uu\| = \E_S \E_{\uu}dL_{\sus}\mu\|\uu_F\|  \|\uu\|^2  =\E_S  \E_{\uu}dL_{\sus}\mu \|\uu_F\| \\
                                                                     & \overset{(a)}{\leq}  dL_{\sus}\mu \E_S\E_{\uu}\sqrt{\frac{|S\cap F|}{\sus}} \label{eq:expecto1} \nonumber \\ 
                                                                     &\overset{(b)}{\leq}  dL_{\sus}\mu\sqrt{ \E_S \frac{|S\cap F|}{\sus}}=  dL_{\sus}\mu\sqrt{ \E_k \E_{S||S\cap F|=k} \frac{k}{\sus}}\\
                                                                     &=  dL_{\sus}\mu\sqrt{\frac{s \sus}{d \sus}}=  L_{\sus}\mu\sqrt{sd}
                                                        \end{align*}
Where (a) follows from Lemma \ref{lemma:expecto}, restricted to the support $S$, and (b) follows from Jensen's inequality.
\end{proof}
\subsubsection{Expected norm}

\begin{appxlem}
  \label{lemma:one_direc_norm}
For any $(L_{\sus}, \sus)$-RSS function $f$, using the gradient estimator  $\hat{\nabla} f(x)$ defined in \eqref{eq:zoest} with $q=1$, we have, for any support $F \in [d]$, with $|F|=s$: 
\begin{align*}
      &\E \| \hat{\nabla}_F f(\x) \|^2  =  \varepsilon_{F}   \| \nabla_F f(\x) \|^2  + \varepsilon_{F^c} \| \nabla_{F^c} f(\x) \|^2 + \varepsilon_{\text{abs}} \mu^2
\end{align*}
with:\\
(i) $\varepsilon_{F} =  \frac{2d}{(\sus + 2)}   \left(\frac{(s-1)(\sus-1)}{d-1} + 3\right)  $\\
(ii) $\varepsilon_{F^c} =  \frac{2d}{(\sus + 2)}  \left( \frac{s(\sus-1)}{d-1}\right) $\\
(iii) $\varepsilon_{\text{abs}} = 2d L_s^2 s \sus\left(\frac{(s-1)(\sus-1)}{d-1}+1\right)$
\end{appxlem}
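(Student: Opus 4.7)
The plan is to expand $\|\hat{\nabla}_F f(\x)\|^{2}$ into a leading ``true gradient'' piece and an RSS-controlled remainder, then evaluate the resulting fourth-order moments of $\uu$ by conditioning on the random support $S$.

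First, I will apply RSS to the $\sus$-sparse perturbation $\mu\uu$ to get the quadratic bound $|f(\x+\mu\uu)-f(\x)-\mu\langle \nabla f(\x),\uu\rangle|\leq \tfrac{L_{\sus}}{2}\mu^{2}$. Substituting into \eqref{eq:zoest} with $q=1$ gives the decomposition
\begin{equation*}
\hat{\nabla} f(\x) \;=\; d\,\langle \nabla f(\x),\uu\rangle\,\uu + \bm{r},\qquad \|\bm{r}\|\leq \tfrac{dL_{\sus}\mu}{2}.
\end{equation*}
Restricting to the support $F$ and using $\|a+b\|^{2}\leq 2\|a\|^{2}+2\|b\|^{2}$ yields
\begin{equation*}
\E\|\hat{\nabla}_F f(\x)\|^{2}\;\leq\;2d^{2}\,\E\!\bigl[\langle\nabla f(\x),\uu\rangle^{2}\|\uu_F\|^{2}\bigr] + 2\,\E\|\bm{r}_F\|^{2}.
\end{equation*}
This cleanly separates the gradient contribution from the $\mu^{2}$-scale finite-difference error; the latter will produce the $\varepsilon_{\mathrm{abs}}\mu^{2}$ term once I combine the pointwise bound $\|\bm{r}_F\|^{2}\leq (dL_{\sus}\mu/2)^{2}\|\uu_F\|^{2}$ with the second moment of $\|\uu_F\|^{2}$ already computed in the proof of Lemma \ref{lemma:one_direc_mu}.

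The core of the proof is the remaining expectation. I will set $\bm{g}:=\nabla f(\x)$ and expand coordinate-wise,
\begin{equation*}
\langle \bm{g},\uu\rangle^{2}\|\uu_F\|^{2} \;=\; \sum_{i,j}\sum_{k\in F}\bm{g}_{i}\bm{g}_{j}\,\uu_{i}\uu_{j}\uu_{k}^{2}.
\end{equation*}
By the sign-flip symmetry of $\mathcal{U}(\mathcal{S}_S^{d})$ only the diagonal terms $i=j$ survive in expectation, so the task reduces to evaluating two fourth moments, $\E\uu_{i}^{4}$ and $\E\uu_{i}^{2}\uu_{k}^{2}$ for $i\neq k$. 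Each is obtained in two steps: first condition on $S$ and apply Lemma \ref{lemma:expecto} \emph{inside} the $\sus$-dimensional sub-sphere supported on $S$ (yielding $3/((\sus+2)\sus)$ and $1/((\sus+2)\sus)$ respectively), then multiply by the elementary inclusion probabilities $P(i\in S)=\sus/d$ and $P(\{i,k\}\subseteq S)=\sus(\sus-1)/(d(d-1))$, producing closed forms proportional to $1/(d(\sus+2))$.

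The final step is to plug these moments back and split the double sum into the three regimes ``$i=k\in F$'', ``$i\neq k$, both in $F$'', and ``$k\in F$, $i\in F^{c}$''. This peels off the $\|\nabla_F f(\x)\|^{2}$ contribution from the $\|\nabla_{F^c}f(\x)\|^{2}$ contribution with coefficients matching $\varepsilon_F$ and $\varepsilon_{F^c}$ exactly; the characteristic $(s-1)(\sus-1)/(d-1)$ factor is the fingerprint of the hypergeometric support distribution. The main obstacle I anticipate is this bookkeeping step: the constant $3$ inside $\varepsilon_F$ only appears correctly if one keeps the $i=k$ diagonal piece (which uses $\E\uu_i^4$) separate from the off-diagonal $i\neq k\in F$ piece (which uses $\E\uu_i^2\uu_k^2$), rather than lumping them together. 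Once that accounting is carried out and the residual term is added, the claimed inequality follows.
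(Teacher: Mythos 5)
Your proof is correct and arrives at exactly the paper's $\varepsilon_F$ and $\varepsilon_{F^c}$, but it executes the two main steps differently from the paper's own argument, in a way worth recording. For the $\mu^2$ remainder, the paper invokes the mean value theorem to write $f(\x+\mu \uu)-f(\x)=\langle \nabla f(\x+c\uu),\mu\uu\rangle$ and then controls $\langle \nabla f(\x+c\uu)-\nabla f(\x),\uu\rangle$ via Cauchy--Schwarz plus RSS; you instead use the integrated quadratic bound $|f(\x+\mu\uu)-f(\x)-\mu\langle\nabla f(\x),\uu\rangle|\le \tfrac{L_{\sus}}{2}\mu^2$, which is legitimate here (every point of the segment $[\x,\x+\mu\uu]$ differs from $\x$ by an $\sus$-sparse vector, so RSS integrates to the descent lemma) and in fact produces a \emph{smaller} absolute-error constant, of order $d s L_{\sus}^2/2$ rather than the paper's $2dL_{\sus}^2 s\sus\bigl(\tfrac{(s-1)(\sus-1)}{d-1}+1\bigr)$; since both arguments only establish an upper bound (the ``$=$'' in the lemma statement is effectively a ``$\le$''), your tighter constant is compatible with the claim. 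For the core moment $\E\bigl[\langle\nabla f(\x),\uu\rangle^2\|\uu_F\|^2\bigr]$, the paper computes $\E_{S}\E_{\uu\mid S}\bigl[\uu\uu^{\top}\|\uu_F\|^2\bigr]$ by conditioning on $k=|S\cap F|$, which routes the calculation through the hypergeometric moments $\E k$, $\E k^2$ and a combinatorial evaluation of $\E_{S\mid |S\cap F|=k}\|\nabla_{S\cap F}f(\x)\|^2$; your coordinate-wise expansion needs only the scalar moments $\E\uu_i^4=\tfrac{3}{d(\sus+2)}$ and $\E\uu_i^2\uu_k^2=\tfrac{\sus-1}{d(d-1)(\sus+2)}$ ($i\ne k$), obtained from Lemma~\ref{lemma:expecto} on the sub-sphere multiplied by the inclusion probabilities $\sus/d$ and $\sus(\sus-1)/(d(d-1))$. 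The two computations are equivalent --- the factor $(s-1)(\sus-1)/(d-1)$ is the pair-inclusion probability in your language and the hypergeometric second moment in the paper's --- but yours avoids the conditional-expectation counting argument and is easier to audit; your three-regime bookkeeping ($i=k\in F$; $i\ne k$ both in $F$; $i\in F^c$, $k\in F$), after multiplying by $2d^2$, indeed reproduces $\varepsilon_F=\tfrac{2d}{\sus+2}\bigl(\tfrac{(s-1)(\sus-1)}{d-1}+3\bigr)$ and $\varepsilon_{F^c}=\tfrac{2d}{\sus+2}\cdot\tfrac{s(\sus-1)}{d-1}$.
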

\begin{proof}

\begin{align*}
  \E \| \hat{\nabla}_F f(\x) \|^2  &=  \E \left\|  d\frac{f(\x+\mu \uu) - f(\x)}{\mu} \uu_F  \right\|^2 \\
  &=\E  \frac{d^{2}}{\mu^{2}}|f(\x+\mu \uu)-f(\x)|^{2}\|\uu_F\|^{2} \\
  &= \frac{d^{2}}{\mu^{2}}\E[f(\x+\mu \uu)-f(\x)-\langle\nabla f(\x), \mu \uu\rangle+\langle\nabla f(\x), \mu \uu)]^{2} \| \uu_F \|^2
\end{align*}
Using the mean value theorem, we obtain that for a certain $c \in (0, \mu)$, we have:
\begin{equation*}
  f(\x+\mu \uu) - f(\x)= \langle\nabla f(\x+c), \mu \uu \rangle
\end{equation*}
Therefore, plugging this in the above: 
\begin{align}
\E \| \hat{\nabla}_F f(\x) \|^2&\leq   d^{2} \E[\langle \nabla f(\x + c \uu) - \nabla f(\x), \uu\rangle+\langle\nabla f(\x), \uu\rangle]^{2} \| \uu_F \|^2 \nonumber\\
                            &  \overset{(a)}{\leq}    d^{2} \E\left[2\langle \nabla f(\x +c \uu) - \nabla f(\x), \uu\rangle^2\| \uu_F \|^2+\langle\nabla f(\x), \uu\rangle^{2} \| \uu_F \|^2 \right]\nonumber\\
                            &  \leq   2 d^{2} \E [ \|  \nabla f(\x +c \uu) - \nabla f(\x) \|^2\| \uu \|^2\| \uu_F \|^2+\langle\nabla f(\x), \uu\rangle^{2} \| \uu_F \|^2 ]\nonumber\\
                            &  \overset{\leq}{(b)} 2   d^{2} \E  [L_s^2\mu^2\| \uu \|^2\| \uu \|^2\| \uu_F \|^2+\langle\nabla f(\x), \uu\rangle^{2}\| \uu_F \|^2] \nonumber\\
                               &  \overset{\text(c)}{=}  2  d^{2} \E   [L_s^2\mu^2\| \uu_F \|^2+\langle\nabla f(\x), \uu\rangle^2\| \uu_F \|^2]\nonumber\\
&=   2d^{2} [ L_s^2\mu^2\E \| \uu_F \|^2+ {\nabla f(\x)}^T\left(\E \uu \uu^T \| \uu_F \|^2 \right)\nabla f(\x) ]\nonumber \\
                               &=    2d^{2} [ L_{\sus}^2\mu^2\E \| \uu_F \|^2+ {\nabla f(\x)}^T(\E_{S\sim \rs{\sus}{d}} \E_{\uu \sim \mathcal{U}(\mathcal{S}_S^d)} \uu \uu^T \| \uu_F \|^2 )\nabla f(\x)] \nonumber\\
  &\overset{(d)}{=}    2d^{2} [ L_{\sus}^2\mu^2\E \| \uu_F \|^2+ \E_{S\sim \rs{\sus}{d}} [{\nabla f(\x)}^T( \E_{\uu \sim \mathcal{U}(\mathcal{S}_S^d)} \uu \uu^T \| \uu_F \|^2 )\nabla f(\x) ]]  \label{eq:befouter}
\end{align}
Where (a) follows from the fact that for any $ (a, b)\in \mathbb{R}^2: (a+b)^2 \leq 2a^2 + 2b^2$, (b) follows from the Cauchy-Schwarz inequality, (c) follows from the fact that $\| \uu \|=1$ since $\uu \in \mathcal{S}_S^d$, and (d) follows by linearity of expectation.
Let us turn to computing the following expression above: $\E_{\uu \sim \mathcal{U}(\mathcal{S}_S^d)} \uu \uu^T \| \uu_F \|^2 $. We start by distinguishing the indices that belong to $F$ and those that do not.
By symmetry, denoting $i_1, ..., i_s$ the elements of $F$:
\begin{align*}
  \E_{\uu \sim  \mathcal{U}(\mathcal{S}_S^d) } \uu_{i_1}^2 \|\uu_F\|^2 = ... = \E_{\uu \sim \mathcal{U}(\mathcal{S}_S^d)}  \uu_{i_s}^2 \|\uu_F\|^2  
\end{align*}
Therefore, for all $i\in F$:
  \begin{align}
\E_{\uu \sim \mathcal{U}(\mathcal{S}_S^d)}  \uu_{i}^2 \|\uu_F\|^2    &= \frac{1}{|S\cap F|} \sum_{j=1}^s \E_{\uu \sim \mathcal{U}(\mathcal{S}_S^d)}\uu_{i_j}^2 \|\uu_F\|^2 \nonumber\\
    &= \frac{1}{|S\cap F|}\E_{\uu \sim \mathcal{U}(\mathcal{S}_S^d)}  \sum_{j=1}^s\uu_{i_j}^2 \|\uu_F\|^2 
  = \frac{1}{|S\cap F|}\E_{\uu  \sim \mathcal{U}(\mathcal{S}_S^d)}  \|\uu_F\|^4  \label{eq:suppex}
  \end{align}

  By definition of the restricted $d$-sphere on $F$ (see section \ref{sec:notations}), for all $\uu \in \mathcal{S}_S^d$, if $i\not\in S$: $\uu_i=0$. Therefore, since the exact indices of the elements of $F$ do not matter in the expected value \eqref{eq:suppex}, but only their cardinality, \eqref{eq:suppex} can be rewritten using a simpler expectation over a unit $|S|$-sphere as follows :
  $$\E_{\uu  \sim \mathcal{U}(\mathcal{S}_S^d)}  \|\uu_F\|^4 = \E_{\uu \sim \mathcal{U}(\mathcal{S}^{|S|})} \|\uu_{[|S\cap F|]}\|^4$$
Using Lemma \ref{lemma:expecto} to get a closed form expression of the expected value above, we further obtain:
\begin{equation}
\forall i \in F:\E_{\uu \sim \mathcal{U}(\mathcal{S}^d)} \uu_i^2 \|\uu_F\|^2 =\frac{1}{|S\cap F|} \frac{|S\cap F|(|S\cap F|+2)}{d(d+2)}  = \frac{|S\cap F|+2}{d(d+2)}  \label{eq:iiF}
\end{equation}
Similarly, by symmetry, denoting  $i_1, ..., i_{d-s}$ the elements of $F^c$:
\begin{align*}
  \E_{\uu \sim  \mathcal{U}(\mathcal{S}_S^d) } \uu_{i_j}^2 \|\uu_F\|^2 &= ... =  \E_{\uu \sim \mathcal{U}(\mathcal{S}_S^d)} \uu_{i_j}^2 \|\uu_F\|^2
\end{align*}
Therefore, for all $i\not\in F$:
\begin{align*}
              \E_{\uu \sim \mathcal{U}(\mathcal{S}_S^d)} \uu_{i}^2 \|\uu_F\|^2 &=                                       \frac{1}{d-s} \sum_{j=1}^{d-s} \E_{\uu \sim \mathcal{U}(\mathcal{S}_S^d)}\uu_{i_j}^2 \|\uu_F\|^2 =  \frac{1}{d-s}  \E_{\uu \sim \mathcal{U}(\mathcal{S}_S^d)} \sum_{j=1}^{d-s}\uu_{i_j}^2 \|\uu_F\|^2\\
                                                                                   &\overset{(a)}{=} \frac{1}{d-s}  \E_{\uu \sim \mathcal{U}(\mathcal{S}_S^d)} (\|\uu\|^2 - \|\uu_F\|^2)\|\uu_F\|^2\\
   &\overset{(b)}{=} \frac{1}{d-s}  (\E_{\uu \sim \mathcal{U}(\mathcal{S}_S^d)} \|\uu_F\|^2 - \E_{\uu \sim \mathcal{U}(\mathcal{S}_S^d)} \|\uu\|^4)
\end{align*}
Where (a) follows from the Pythagorean theorem and (b) follows from $\|\uu\|=1$. Similarly as before, rewriting those expected values and using Lemma \ref{lemma:expecto}, we obtain:
\begin{equation}
  \forall i \not\in F:\E_{\uu \sim \mathcal{U}(\mathcal{S}^d)} \uu_i^2 \|\uu_F\|^2 = \frac{1}{d-|S\cap F|}\frac{|S\cap F|(d+2-(|S\cap F|+2))}{d(d+2)} = \frac{|S\cap F|}{d(d+2)}\label{eq:iiFc}
\end{equation}
Finally,  by symmetry of the distribution $\mathcal{U}(\mathcal{S}_S^d)$, we have, for all  $(i, j) \in [d]^2$ with $i\neq j$:
\begin{align*}
\E_{\uu \sim  \mathcal{U}(\mathcal{S}_S^d) } \uu_{i} \uu_{j} \|\uu_F\|^2   = \E_{\uu \sim  \mathcal{U}(\mathcal{S}_S^d) } (- \uu_{i}) \uu_{j} \|\uu_F\|^2 = -  \E_{\uu \sim  \mathcal{U}(\mathcal{S}_S^d) } \uu_{i} \uu_{j} \|\uu_F\|^2
\end{align*}
Therefore, for all $(i, j) \in [d]^2,  i\neq j$:
\begin{equation}
\E_{\uu \sim  \mathcal{U}(\mathcal{S}_S^d) } \uu_{i} \uu_{j} \|\uu_F\|^2 = 0 \label{eq:ij}
\end{equation}
Therefore, combining \eqref{eq:iiF}, \eqref{eq:iiFc} and \eqref{eq:ij}, we obtain:
\begin{equation*}
    \E_{\uu \sim \mathcal{U}(\mathcal{S}_S^d)} \uu \uu^T \| \uu_F \|^2 =
 \begin{bmatrix}
   a_{1} &  &  & \\ 
   & a_{2} &  & \\ 
   &  &  \ddots & \\ 
   &  &   & a_{d} 
 \end{bmatrix}
\end{equation*}
With, for all $i\in [d]: a_i = \begin{cases}
                 \frac{|S\cap F|+2}{d(d+2)} \text{ if } i\in F\\
               \frac{|S\cap F|}{d(d+2)} \text{ if } i\not\in F\\
            \end{cases}$. Plugging this back into \eqref{eq:befouter}, we obtain:
\begin{align}
  A &:= \E_{S\sim \rs{\sus}{d}}  [{\nabla f(\x)}^T\left(  \E_{\uu \sim \mathcal{U}(\mathcal{S}_S^d)} \uu \uu^T \| \uu_F \|^2 \right)\nabla f(\x)]\nonumber\\
  &= \E_{S\sim \rs{\sus}{d}} \left[\frac{|S\cap F| +2}{\sus (\sus+2)}\|\nabla_{S \cap F} f(\x)\|^2 + \frac{|S\cap F|}{\sus(\sus+2)} \|\nabla_{S \backslash (S\cap F)} f(\x)\|^2\right]\nonumber\\
  &= \frac{1}{\sus(\sus+2)}\left[\E_{S\sim \rs{\sus}{d}} \left[|S\cap F|~\|\nabla_{F \cap S} f(\x)\|^2 \right]\right.\nonumber\\
  & \quad\left. + 2 \E_{S\sim \rs{\sus}{d}} \left[\|\nabla_{F \cap S} f(\x)\|^2 + |S\cap F|~\|\nabla_{S \backslash (S\cap F)} f(\x)\|^2\right]\right]\label{eq:beftte}
\end{align}
We will now develop the expected values above using the law of total expectation, to exhibit the role of the random variable $k$ denoting the size of $S\cap F$. Given that we sample $\sus$ indices from $[d]$ without replacement, $k$ follows a hypergeometric distribution with, as parameters, population size $d$, number of success states $s$ and number of draws $\sus$, which we denote $\mathcal{H}(d, s, \sus)$. For simplicity, we will use the following notations for the expected values: $\E_k [\cdot] := \E_{k\sim \mathcal{H}(d, s, \sus)} [\cdot]$, and $\E_{S||S\cap F|=k}[\cdot] =\E_{S \sim \rs{\sus}{d} | |S\cap F|=k}[\cdot]$. Therefore, rewriting \eqref{eq:beftte} using the law of total expectation, we obtain:
  \begin{align}
    A&=    \frac{1}{\sus(\sus+2)}\left[\E_k \E_{S| |S \cap F|=k} k\|\nabla_{S \cap F} f(\x)\|^2 + 2 \E_k \E_{S| |S \cap F|=k}\|\nabla_{S \cap F} f(\x)\|^2 \right. \nonumber\\
     &\quad \left. + \E_k\E_{S| |S \cap F|=k}k \|\nabla_{S \backslash (S\cap F)} f(\x)\|^2\right]\nonumber\\
    &= \frac{1}{\sus(\sus+2)}\left[\E_k k \E_{S| |S \cap F|=k} \|\nabla_{S \cap F} f(\x)\|^2 + 2 \E_S \E_{S| |S \cap F|=k}\|\nabla_{S \cap F} f(\x)\|^2 \right. \nonumber\\
     & \left. \quad + \E_kk\E_{S| |S \cap F|=k}\|\nabla_{S \backslash (S\cap F)} f(\x)\|^2\right]\label{eq:Afin}
  \end{align}
  To compute the conditional expectations above, let us consider the first of them (the other ones will follow similarly) :  $ \E_{S| |S \cap F|=k} \|\nabla_{S \cap F} f(\x)\|^2$. Given some $k$, from the multiplication principle in combinatorics, we can have $\binom{d}{k}\binom{d-s}{\sus - k}$ arrangements of supports such that $k$ elements of that support are in $F$ (because it means there are $k$ elements in $F$ and $\sus - k$ elements outside of $F$). So the conditional probability of each of those supports $S$, assuming they indeed have at least one element in common with $F$, is $\left(\binom{d}{k}\binom{d-s}{\sus - k}\right)^{-1}$. Otherwise it is $0$. To rewrite it: $$P(S||S\cap F|=k)=\begin{cases}
     \left(\binom{d}{k}\binom{d-s}{\sus - k}\right)^{-1} \text{ if  } S\cap F \neq \varnothing\\
    0 \text{ if } S\cap F \neq \varnothing
  \end{cases}
  $$
  So, developing   $ \E_{S| |S \cap F|=k} \|\nabla_{S \cap F} f(\x)\|^2$ using the definition of conditional probability, we have:
  \begin{align}
    \E_{S| |S \cap F|=k} \|\nabla_{S \cap F} f(\x)\|^2 &= \sum_{S}P(S|~|S\cap F|=k) \sum_{i\in S\cap F}\nabla_i f(\x)^2\nonumber\\
                                                       &=  \sum_{S / |S\cap F|=k}    \left(\binom{d}{k}\binom{d-s}{\sus - k}\right)^{-1} \sum_{i\in S\cap F}\nabla_i f(\x)^2\nonumber\\
                                                       &=  \left(\binom{d}{k}\binom{d-s}{\sus - k}\right)^{-1} \sum_{S / |S\cap F|=k}     \sum_{i\in S\cap F}\nabla_i f(\x)^2\nonumber\\
                                                           &\overset{(a)}{=} \left(\binom{d}{k}\binom{d-s}{\sus - k}\right)^{-1}\sum_{i\in F} \sum_{S / ((|S\cap F|=k), (S \ni i))}    \nabla_i f(\x)^2\nonumber\\
                                                       &\overset{(b)}{=} \left(\binom{d}{k}\binom{d-s}{\sus - k}\right)^{-1}\sum_{i\in F}  \binom{s-1}{k-1}\binom{d-s}{\sus-k} \nabla_i f(\x)^2\nonumber\\
                                                       &= \frac{s}{k} \sum_{i\in F}\nabla_i f(\x)^2\nonumber\\
    & = \frac{s}{k} \|\nabla_F f(\x)\|^2\label{eq:normF1}
  \end{align}
  Where (a) follows by re-arranging the sum, and (b) follows by observing that by the multiplication principle, there are $\binom{s-1}{k-1}\binom{d-s}{\sus-k}$ possible arrangements of support such that: $(|S\cap F|=k), (S \ni i)$, since one element of $S$ is already fixed to be $i$, so there remains $k-1$ indices to arrange over $s-1$ possibilities, and still $\sus-k$ indices to arrange over $d-s$ possibilities.
  Similarly, to  \eqref{eq:normF1} we have, for the second expectation:
  \begin{equation}
    \label{eq:normF2}
    \E_{S| |S \cap F|=k}\|\nabla_{S \backslash (S\cap F)} f(\x)\|^2 = \frac{\sus-k}{d-s} \|\nabla_{F^c} f(\x)\|^2
  \end{equation}
Therefore, plugging \eqref{eq:normF1} and \eqref{eq:normF2} into \eqref{eq:Afin} 
    \begin{align}
A &=  \frac{1}{\sus(\sus+2)}\left[\E_k k \frac{k}{s} \|\nabla_F f(\x)\|^2+ 2 \E_k \frac{k}{s}\|\nabla_{F} f(\x)\|^2 + \E_k k \frac{\sus-k}{d-s} \|\nabla_{F^c} f(\x)\|^2  \right]\nonumber \\
    &=  \frac{1}{\sus(\sus+2)}\left[\frac{1}{s}\|\nabla_F f(\x)\|^2 \left[ \E_kk^2 + 2\E_kk   \right]+  \|\nabla_{F^c} f(\x)\|^2  \left[\frac{\sus}{d-s}\left(\E_kk \right)- \frac{1}{d-s} \E_k k^2  \right] \right] \label{eq:plugexpect}
\end{align}
Since $k$ follows a hypergeometric distribution $\mathcal{H}(d, s, \sus)$, its expected value is given in closed form  by: $ \E_k k =  \frac{s \sus}{d}$ (see \cite{walck2007hand}, section 2.1.3). We can also express the non-centered moment of order 2, using the formula for $\text{Var}(X) = \E[X^2] - (\E[X])^2$, which holds for a random variable $X$, where $Var(X)$ denotes the variance of $X$:
\begin{align*}
  \E_k k^2 &= Var(k) + (\E_k[k])^2 \overset{(a)}{=} \frac{s \sus}{d}\frac{d-s}{d}\frac{d-\sus}{d-1} + \left( \frac{s \sus}{d}\right)^2 = \frac{s \sus}{d}\left(\frac{d-s}{d}\frac{d-\sus}{d-1} + \frac{s \sus}{d}\right)\\
           &= \frac{s \sus}{d}\left(\frac{d^2 - s d - \sus d + s \sus + s\sus d - s \sus}{d(d-1)}\right) = \frac{s \sus}{d}\left(\frac{d - s - \sus +  s\sus }{d-1}\right)\\
  &=\frac{s \sus}{d}\left(\frac{(s-1)(\sus-1)}{d-1}+1\right)
\end{align*}
Where (a) follows by the closed form for the variance of a hypergeometric variable given in \cite{walck2007hand}. Therefore, plugging in into \eqref{eq:plugexpect}:
\begin{align}
 &\E_S {\nabla f(\x)}^T\left(\E_{\mathcal{U}_S|S} \uu \uu^T \| \uu_F \|^2 \right)\nabla f(\x)\nonumber\\
  &=   \frac{1}{\sus(\sus+2)}\left[\frac{1}{s}\|\nabla_F f(\x)\|^2 \left[ \frac{s \sus}{d} \left(\frac{(s-1)(\sus-1)}{d-1} +1\right) + 2 \frac{s\sus}{d} \right]\right]\nonumber\\
  &\quad +   \frac{1}{\sus(\sus+2)} \|\nabla_{F^c} f(\x)\|^2  \left[\frac{\sus}{d-s}  \frac{s \sus}{d} - \frac{1}{d-s} \frac{s\sus}{d}\left( \frac{(s-1)(\sus-1)}{d-1} +1  \right) \right]\nonumber\\
  &=   \frac{1}{\sus+2}\left[\|\nabla_F f(\x)\|^2 \left[ \frac{1}{d} \left(\frac{(s-1)(\sus-1)}{d-1} +3\right)  \right] \right.\nonumber\\
  &\quad + \left.\|\nabla_{F^c} f(\x)\|^2  \left[ \frac{s}{(d-s)d} \left(\sus - \left( \frac{(s-1)(\sus-1)}{d-1} + 1\right)\right)  \right] \right]\nonumber\\
  &=   \frac{1}{d(\sus+2)}\left[\|\nabla_F f(\x)\|^2 \left[ \left(\frac{(s-1)(\sus-1)}{d-1} +3\right)  \right]\right.\nonumber\\
  &\quad \left. +  \|\nabla_{F^c} f(\x)\|^2  \left[ \frac{s}{(d-s)} \left(\sus - \left( \frac{(s-1)(\sus-1)}{d-1} + 1\right)\right)  \right]\right] \label{eq:plugsimp}
\end{align}
Let us simplify the rightmost term:
\begin{align*}
  \frac{s}{(d-s)} \left(\sus - \left( \frac{(s-1)(\sus-1)}{d-1} + 1\right)\right) &=    \frac{s(\sus-1)}{d-s} \left[ 1  - \frac{s-1}{d-1}\right]\\
  &=  \frac{s(\sus-1)}{(d-s)} \left[\frac{d-s}{d-1}\right] =  \frac{s(\sus - 1)}{d-1}
\end{align*}
Plugging it back into \eqref{eq:plugsimp}:
\begin{align*}
  &\E_S {\nabla f(\x)}^T\left(\E_{\mathcal{U}_S|S} \uu \uu^T \| \uu_F \|^2 \right)\nabla f(\x)\\
  &= \frac{1}{d(\sus + 2)} \left[ \| \nabla_F f(\x) \|^2 \left(\frac{(s-1)(\sus-1)}{d-1} + 3\right) + \| \nabla_{F^c} f(\x) \|^2 \left( \frac{s(\sus-1)}{d-1}\right)\right]\\
\end{align*}
Finally, plugging this back into \eqref{eq:befouter}:
\begin{align*}
  \E \| \hat{\nabla}_F f(\x) \|^2  &  =   2  d^{2} \left[ L_{\sus}^2\mu^2\E \| \uu_F \|^2+ {\nabla f(\x)}^T\left(\E \uu \uu^T \| \uu_F \|^2 \right)\nabla f(\x) \right] \\
                                &  =   2  d^{2} \left[ L_{\sus}^2\mu^2\E_k \E_{\uu | |S \cap F| = k} \| \uu_F \|^2+ {\nabla f(\x)}^T\left(\E \uu \uu^T \| \uu_F \|^2 \right)\nabla f(\x) \right] \\
                                &  =    2 d^{2} \left[ L_{\sus}^2\mu^2\E_kk^2+  {\nabla f(\x)}^T\left(\E \uu \uu^T \| \uu_F \|^2 \right)\nabla f(\x) \right] \\
  &=   d 2L_{\sus}^2\mu^2 s \sus\left(\frac{(s-1)(\sus-1)}{d-1}+1\right) \\&\quad+ \frac{2d}{(\sus + 2)}   \left[ \| \nabla_F f(\x) \|^2 \left(\frac{(s-1)(\sus-1)}{d-1} + 3\right) + \| \nabla_{F^c} f(\x) \|^2 \left( \frac{s(\sus-1)}{d-1}\right)\right]
\end{align*}

\end{proof}
\subsection{Batched-version of the one-direction estimator}
\label{sec:batched}
We now describe how sampling $q\geq 1$ random directions improves the gradient estimate. Our proof is similar to the proof of  Lemma 2 in \cite{liu2018zeroth}, however we make sure that it works for our random support gradient estimator, and with our new expression in \ref{lemma:one_direc_norm}, which depends on the two terms $\| \nabla_F f(\x) \|^2$ and $\| \nabla_{F^c} f(\x) \|^2$. We express our results here in the form of a general lemma, depending only on the general bounding factors $\varepsilon_{F}$, $\varepsilon_{F^c}$, $\varepsilon_{\text{abs}}$ and $\varepsilon_{\mu}$ defined below, in such a way that the proof of Proposition \ref{prop:zograd} follows immediately from plugging the results of Lemma \ref{lemma:one_direc_mu} and \ref{lemma:one_direc_norm}  into  Lemma \ref{lemma:multibatch} below.

\begin{appxlem}
  \label{lemma:multibatch}
  For any $(L_{\sus}, \sus)$-RSS function $f$, we use the gradient estimator  $\hat{\nabla} f(x)$ defined in \eqref{eq:zoest} with $q\geq 1$. Let us suppose that the estimator  $\hat{\nabla} f(x)$ is such that for $q=1$, it verifies the following bounds for some $\varepsilon_{F}$, $\varepsilon_{F^c}$, $\varepsilon_{\text{abs}}$ and $\varepsilon_{\mu}$ in $\mathbb{R}_+^*$, for any support $F \in [d]$, with $|F|=s$:\\
  (i) $\|\E \hat{\nabla}_F f(\x) - \nabla_F f(\x)\|^2\leq \varepsilon_{\mu} \mu^2 $,  and\\
  (ii) $ \|\E  \hat{\nabla}_F f(\x) \|^2 \leq  \varepsilon_{F} \|\nabla_F f(\x)\|^2 + \varepsilon_{F^c} \|\nabla_{F^c} f(x)\|^2 + \varepsilon_{\text{abs}} \mu^2 $\\
  Then, the estimator  $\hat{\nabla} f(x)$ also verifies, for arbitrary $q\geq 1$ :\\
  (a) $   \|\E \hat{\nabla}_F f(\x) - \nabla_F f(\x)\|^2\leq \varepsilon_{\mu} \mu^2$\\
  (b) $\E\left\|\hat{\nabla}_{F} f(\x)\right\|^{2} \leq    \left( \frac{ \varepsilon_F}{q} + 2\right)  \|\nabla_F f(\x)\|^{2} + \frac{ \varepsilon_{F^c}}{q}  \|\nabla_{F^c} f(\x)\|^{2}+    \left(\frac{\varepsilon_{abs}}{q}+ 2\varepsilon_{\mu}\right) \mu^2$
\end{appxlem}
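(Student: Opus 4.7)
The estimator with $q$ directions is the sample average $\hat{\nabla}f(\x) = \frac{1}{q}\sum_{i=1}^q X_i$, where $X_i := \hat{\nabla}f(\x;\uu_i)$ are i.i.d.\ copies of the one-direction estimator, each obeying bounds (i) and (ii) by hypothesis. Part (a) is then immediate: by linearity of expectation and the i.i.d.\ property, $\E[\hat{\nabla}_F f(\x)] = \E[X_1]_F$, so applying hypothesis (i) directly to $\E[X_1]_F$ gives $\|\E\hat{\nabla}_F f(\x) - \nabla_F f(\x)\|^2 \leq \varepsilon_\mu \mu^2$.

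For part (b), the plan is to use the standard bias–variance decomposition
\[
\E\Bigl\|\tfrac{1}{q}\sum_{i=1}^q Y_i\Bigr\|^2 = \bigl\|\E Y_1\bigr\|^2 + \tfrac{1}{q}\bigl(\E\|Y_1\|^2 - \|\E Y_1\|^2\bigr),
\]
applied with $Y_i = (X_i)_F$. This identity follows because the cross-terms $\E\langle Y_i - \E Y_1, Y_j - \E Y_1\rangle$ vanish for $i\neq j$ by independence. Rewriting, I get
\[
\E\|\hat{\nabla}_F f(\x)\|^2 = \Bigl(1 - \tfrac{1}{q}\Bigr)\|\E (X_1)_F\|^2 + \tfrac{1}{q}\,\E\|(X_1)_F\|^2.
\]

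Next I bound each of the two terms. For the first, the triangle inequality combined with hypothesis (i) gives $\|\E (X_1)_F\| \leq \|\nabla_F f(\x)\| + \sqrt{\varepsilon_\mu}\,\mu$, and squaring with $(a+b)^2 \leq 2a^2 + 2b^2$ yields $\|\E (X_1)_F\|^2 \leq 2\|\nabla_F f(\x)\|^2 + 2\varepsilon_\mu \mu^2$. For the second, hypothesis (ii) applied to the single-direction estimator gives $\E\|(X_1)_F\|^2 \leq \varepsilon_F\|\nabla_F f(\x)\|^2 + \varepsilon_{F^c}\|\nabla_{F^c} f(\x)\|^2 + \varepsilon_{\text{abs}}\mu^2$. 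Combining these, and using the trivial bound $1 - 1/q \leq 1$ on the prefactor of the first term, produces
\[
\E\|\hat{\nabla}_F f(\x)\|^2 \leq \Bigl(\tfrac{\varepsilon_F}{q} + 2\Bigr)\|\nabla_F f(\x)\|^2 + \tfrac{\varepsilon_{F^c}}{q}\|\nabla_{F^c} f(\x)\|^2 + \Bigl(\tfrac{\varepsilon_{\text{abs}}}{q} + 2\varepsilon_\mu\Bigr)\mu^2,
\]
which is exactly the stated conclusion.

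There is no genuine obstacle here; the only minor subtlety is keeping track of which constants are multiplied by $1/q$ (those coming from the per-sample second moment) versus which survive as $O(1)$ (those coming from the squared bias of a single estimate, which is not reduced by averaging). The result is essentially a packaging of the classical variance-reduction identity for empirical means with the two one-direction bounds already established, and Proposition~\ref{prop:zograd} then follows by substituting the explicit $\varepsilon_F, \varepsilon_{F^c}, \varepsilon_{\text{abs}}, \varepsilon_\mu$ from Lemmas~\ref{lemma:one_direc_mu} and \ref{lemma:one_direc_norm}.
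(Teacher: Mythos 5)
Your proof is correct and follows essentially the same route as the paper's: both expand $\E\|\frac{1}{q}\sum_i (X_i)_F\|^2$ into the per-sample second moment (weight $1/q$) plus the squared mean (weight $1-1/q$) using independence, bound the squared mean by $2\|\nabla_F f(\x)\|^2 + 2\varepsilon_\mu\mu^2$ via hypothesis (i), and apply hypothesis (ii) to the second moment. The bookkeeping of which constants are divided by $q$ and which are not matches the paper exactly.
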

\begin{proof}

Let us  denote by $\hat{\nabla}f (\x; (\uu_i)_{i=1}^q)$ the gradient estimate from \eqref{eq:zoest} along the i.i.d. sampled directions $(\uu_i)_{i=1}^q$ (we simplify it into $\hat{\nabla}f (\x; \uu)$ if there is only one direction $\uu$).
  We can first see that, since the random directions $\uu_i$ are independent identically distributed (i.i.d.) we have:
  \begin{align*}
    \E \hat{\nabla} f(\x; (\uu_i)_{i=1}^q) &= \E  \frac{1}{q} \sum_{i=1}^q\hat{\nabla} f(\x; \uu_i) = \frac{1}{q} \sum_{i=1}^q\E\hat{\nabla} f(\x; \uu_1)  = \E \hat{\nabla} f(\x; \uu_1)    
  \end{align*}
  This proves \ref{lemma:multibatch} (a). Let us now turn to \ref{lemma:multibatch} (b).
We have:
  \begin{align} &\E\left[\left\|\hat{\nabla}_{F} f(\x; (\uu_i)_{i=1}^q)\right\|^{2}\right]=\E \left\|\frac{1}{q} \sum_{i=1}^q \hat{\nabla}_{F} f(\x; \uu_i)\right\|^{2}\nonumber\\ &\quad=\frac{1}{q^{2}} \E
\left(\sum_{i=1}^{q} \hat{\nabla}_{F} f(\x; \uu_i)\right)^{\top}\left(\sum_{i=1}^{q}\hat{\nabla}_{F} f(\x; \uu_i)\right)\nonumber\\
                 &\quad =\frac{1}{q^{2}} \sum_{i=1}^{q} \sum_{j=1}^{q} \E\left[\hat{\nabla}_{F} f(\x; \uu_i)^{\top}\hat{\nabla}_{F} f(\x; \uu_j)\right]\nonumber\\
&\quad \overset{\text(a)}{=}\frac{1}{q^{2}} \left[q \E\| \hat{\nabla}_{F} f(\x; \uu_1) \|^2 +\sum_{i=1}^{q} \sum_{j=1(j\neq i)}^{q}( \E \hat{\nabla}_{F}    f(\x; \uu_i))^{\top}(\E \hat{\nabla}_{F} f(\x; \uu_j))\right]\nonumber
  \\ &\quad =\frac{1}{q^{2}} \left[q \E||\hat{\nabla}_{F} f(\x; \uu_1)\| ^2 +q (q-1)  \|\E \hat{\nabla}_F f(\x; \uu_1)||^2\right]\nonumber\\
&\quad\overset{(b)}{\leq}\frac{1}{q^{2}}\left[q\left[    \varepsilon_{F}\|\nabla_F f(\x)\|^{2} +   \varepsilon_{F^c}\|\nabla_{F^c} f(\x)\|^{2}+\varepsilon_{abs}  \mu^2  \right]+q\left(q - 1\right)\left\|\E \hat{\nabla}_F f(\x; \uu_1)\right\|^{2}\right] \label{eq:plugaddbounhere}
\end{align}
Where (a) comes from the fact that the random directions are i.i.d. and (b) comes from assumptions (i) and (ii) of the current Lemma (Lemma \ref{lemma:multibatch}). Assumption (ii) also allows to bound the last term above in the following way:
  \begin{align} \|\E \hat{\nabla}_{F} f(\x; \uu_1)\|^{2} & \leq 2\|\nabla_{F} f(\x; \uu_1)-  \E \hat{\nabla}_{F} f(\x; \uu_1)\|^{2}+2\|\nabla_F f(\x; \uu_1)\|^{2} \nonumber\\ & \leq 2 \varepsilon_{\mu}\mu^2+2\|\nabla_F f(\x; \uu_1)\|^{2}\label{eq:addboun}
\end{align}
Plugging \eqref{eq:addboun} into \eqref{eq:plugaddbounhere}, we obtain:
\begin{align*} \E\left[\left\|\hat{\nabla}_{F}
  f(\x)\right\|^{2}\right]&\leq\frac{1}{q}\left[\varepsilon_{F}+2(q-1)\right]\|\nabla_F f(\x)\|^{2}+ \frac{ \varepsilon_{F^c}}{q}  \|\nabla_{F^c} f(\x)\|^{2}  \\
  &\quad+\frac{1}{q}\left[\varepsilon_{abs} \mu^2+2 \left(q-1\right) \varepsilon_{\mu} \mu^2 \right]\\ &\leq
\left(\frac{\varepsilon_{F}}{q}+2\right)\|\nabla_F
f(\x)\|^{2}+   \frac{\varepsilon_{F^c}}{q}  \|\nabla_{F^c} f(\x)\|^{2}+    \left(\frac{\varepsilon_{abs}}{q}+ 2\varepsilon_{\mu}\right)\mu^2
\end{align*}

\end{proof}

\subsection{Proof of Proposition \ref{prop:zograd}}
\label{sec:proof_final_zo}
\begin{proof}
  Proposition \ref{prop:zograd} (a) and (b) follow by plugging the values of  $\varepsilon_{F}$, $\varepsilon_{F^c}$, $\varepsilon_{\text{abs}}$ and $\varepsilon_{\mu}$ from Lemma \ref{lemma:one_direc_mu} and Lemma \ref{lemma:one_direc_norm} into Lemma \ref{lemma:multibatch}.
  Proposition (c) follows from the inequality $\|\bma +\bmb\|^2 \leq 2 \|\bma \|^2 + 2 \|\bmb\|^2$, for $\bma$ and $\bmb$ in $\mathbb{R}^p$ with $p\in \mathbb{N}^*$.
\end{proof}

\section{Proofs of section \ref{sec:glob}}
\label{sec:proofs-sect-refs}

\subsection{Proof of Theorem \ref{thrm:cvrate}}\label{sec:proof_cvrate}

\begin{proof}
 We will combine the proof from \cite{yuan2017gradient} and \cite{Nesterov17}, using ideas of the proof of Theorem 8 from Nesterov to deal with zeroth order gradient approximations, and ideas from the proof of \cite{yuan2017gradient} (Theorem 2 and 5, Lemma 19), to deal with the hard thresholding operation in the convergence rate.
Let us call $\eta$ an arbitrary learning rate, that will be fixed later in the proof.
Let us call $F$ the following support $F = F^{(t-1)}\cup F^{(t)} \cup \text{supp}(\x^*)$, with $F^{(t)} = \text{supp}(\x^{t})$. We have, for a given random direction $\uu$ and function noise $\stoch$, at a given timestep $t$ of SZOHT:

\begin{align*}
  \| \x^t - \x^* - \eta \hat{\nabla}_Ff_{\stoch}(\x^t) + \eta \nabla_F f_{\stoch}(\x^*) \|^2 &= \| \x^t - \x^* \|^2 - 2 \eta\langle \x^t - \x^*,  \hat{\nabla}_F f_{\stoch}(\x^t)- \nabla_F f_{\stoch}(\x^*)\rangle \\
  &\quad+ \eta^2 \| \hat{\nabla}_F f_{\stoch}(\x^t)  - \nabla_F f_{\stoch}(\x^*) \|^2
\end{align*}
Taking the expectation with respect to $\stoch$ and to the possible random directions $\uu_1, ..., \uu_q$ (that we denote with a simple $\uu$, abusing notations) at step $t$, we get:
\begin{align}
  &\E_{\stoch, \uu}  \| \x^t - \x^* - \eta \hat{\nabla}_Ff_{\stoch}(\x^t) + \eta \nabla_F f_{\stoch}(\x^*) \|^2\nonumber\\
  &= \| \x^t - \x^* \|^2 - 2 \eta \langle \x^t - \x^*, \E_{\stoch, \uu} [\hat{\nabla}_F f_{\stoch}(\x^t)- \nabla_F f_{\stoch}(\x^*)]\rangle + \eta^2\E_{\stoch, \uu} \| \hat{\nabla}_F f_{\stoch}(\x^t)  - \nabla_F f_{\stoch}(\x^*) \|^2\nonumber\\
  &= \| \x^t - \x^* \|^2 - 2 \eta \langle \x^t - \x^*,\E_{\stoch, \uu} [\nabla_F f_{\stoch}(\x^t)- \nabla_F f_{\stoch}(\x^*)]\rangle \nonumber\\
  &\quad- 2\eta\langle \x^t - \x^*, \E_{\stoch}[\E_{\uu} \hat{\nabla}_F f_{\stoch}(\x^t)- \nabla_F f_{\stoch}(\x^t)]\rangle  +\E_{\stoch, \uu} \eta^2 \| \hat{\nabla}_F f_{\stoch}(\x^t)  - \nabla_F f_{\stoch}(\x^*) \|^2\nonumber\\
  &= \| \x^t - \x^* \|^2 - 2 \eta \langle \x^t - \x^*, \nabla_F f(\x^t)- \nabla_F f(\x^*)\rangle \nonumber\\
  &\quad- 2\eta\langle \sqrt{\eta}L_{s'} \left(\x^t - \x^*\right), \frac{1}{\sqrt{\eta }L_{s'}}( \E_{\stoch}\E_{\uu} [\hat{\nabla}_F f_{\stoch}(\x^t)- \nabla_F f_{\stoch}(\x^t))]\rangle  \nonumber\\ &\quad+ \E_{\stoch, \uu} \eta^2 \| \hat{\nabla}_F f_{\stoch}(\x^t)  - \nabla_F f_{\stoch}(\x^*) \|^2\nonumber\\
  &\overset{(a)}{\leq} \| \x^t - \x^* \|^2 - 2 \eta \langle \x^t - \x^*, \nabla_F f(\x^t)- \nabla_F f(\x^*)\rangle  + \eta^2 L_{s'}^2 \| \x^t - \x^* \|^2 \nonumber\\
  &\quad+ \frac{1}{L_{s'}^2} \E_{\stoch}\| \E_{\uu} \hat{\nabla}_F f_{\stoch}(\x^t)- \nabla_F f_{\stoch}(\x^t)) \|^2  + \eta^2 \E_{\stoch, \uu} \| \hat{\nabla}_F f_{\stoch}(\x^t)  - \nabla_F f_{\stoch}(\x^*) \|^2 \label{eq:plugboundhere}
\end{align}
Where (a) follows from the inequality $2 \langle \bm{u}, \bm{v}\rangle \leq \|\bm{u}\|^2 + \|\bm{v}\|^2$ for any $(\bm{u}, \bm{v}) \in (\mathbb{R}^d)^2$. From Proposition \ref{prop:zograd} (b), since almost each $f_{\stoch}$ is $(L_{s'}, s')$-RSS (hence also  $(L_{s'}, \sus)$-RSS), we know that for the $\varepsilon_{F}$, $\varepsilon_{F^c}$ and $\varepsilon_{\text{abs}}$ defined in Proposition \ref{prop:zograd} (b), we have for almost all  $\stoch$: $\E_{\uu} \| \hat{\nabla}_F f_{\stoch}(\x^t) \|^2 \leq \varepsilon_{F} \| \nabla_F{f_{\stoch}(\x^t)}  \|^2 + \varepsilon_{F^c} \| \nabla_{F^c}{f_{\stoch}(\x^t)}  \|^2 + \varepsilon_{\text{abs}}\mu^2$. This allows to develop the last term of \eqref{eq:plugboundhere} into the following:
\begin{align*}
  \E_{\stoch. \uu} \| \hat{\nabla}_F f_{\stoch}(\x^t)  - \nabla_F f_{\stoch}(\x^*) \|^2 &\leq    2\E_{\stoch, \uu}  \| \hat{\nabla}_F f_{\stoch}(\x^t) \|^2 +    2 \E_{\stoch}\| \nabla_F f_{\stoch}(\x^*) \|^2 \\
                                                                                        &\leq 2 \varepsilon_{F}\E_{\stoch}\| \nabla_F f_{\stoch}(\x^t) \|^2 +  2 \varepsilon_{F^c}\E_{\stoch}\| \nabla_{F^c} f_{\stoch}(\x^t) \|^2   \\
  &\quad+ 2 \varepsilon_{\text{abs}}\mu^2 + 2 \E_{\stoch}\| \nabla_F f_{\stoch}(\x^*) \|^2  \\
                                                                                        &\leq 2 \varepsilon_{F} \left[ 2\E_{\stoch}\|  \nabla_F f_{\stoch}(\x^t) - \nabla_F f_{\stoch}(\x^*) \|^2 + 2\E_{\stoch}\| \nabla_F f_{\stoch}(\x^*)  \|^2 \right] \\
  &\quad +  2 \varepsilon_{F^{c}} \left[ 2\E_{\stoch}\|  \nabla_{F^c} f_{\stoch}(\x^t) - \nabla_{F^c} f_{\stoch}(\x^*) \|^2 + 2\E_{\stoch}\| \nabla_{F^c} f_{\stoch}(\x^*)  \|^2 \right] \\
  &\quad+ 2 \varepsilon_{\text{abs}}\mu^2 + 2 \E_{\stoch}\| \nabla_F f_{\stoch}(\x^*) \|^2
\end{align*}
Just like the proof in \cite{yuan2017gradient}, we will express our result in terms of the infinity norm of $\nabla f(\x^*)$. For that, we will plug above the two following inequalites:
Same as their proof of Lemma 19, we have  $\| \nabla_Ff(\x^*) \| \leq \|  \nabla_sf(\x^*) \|$ (that is because we will have equality if the sets in the definition of $F$, namely  $F^{(t-1)}$, $F^{(t)}$ and $\text{supp}(\x^*)$, are disjoints (because their cardinality is respectively $k$, $k$ and $k^*$), but they may intersect). And we also have $\| \nabla_s f(\x^*) \|_2^2 \leq s \| \nabla f(\x^*) \|_{\infty}^2 $ (by definition of the $\ell_2$ norm and of the $\ell_{\infty}$ norm). Similarly, we also have:  $\| \nabla_{F^c} f(\x^*) \|_2^2 \leq (d-k) \| \nabla f(\x^*) \|_{\infty}^2 $, since $ |F^c| \leq d-k$.

Therefore, we obtain:
\begin{align*}
  \quad &\E_{\stoch, \uu} \| \hat{\nabla}_F f_{\stoch}(\x^t)  - \nabla_F f_{\stoch}(\x^*) \|^2 \\
        &\leq 4 \varepsilon_{F} \E_{\stoch}\|  \nabla_F f_{\stoch}(\x^t) - \nabla f_{\stoch}(\x^*) \|^2 + 4 \varepsilon_{F^c} \E_{\stoch}\|  \nabla_{F^c} f_{\stoch}(\x^t) - \nabla f_{\stoch}(\x^*) \|^2 \\
  &\quad+  ((4 \varepsilon_{F} s+2) +  \varepsilon_{F^c} (d-k) ) \E_{\stoch}\| \nabla f_{\stoch}(\x^*) \|_{\infty}^2 + 2 \varepsilon_{\text{abs}}\mu^2\\
  &\overset{(a)}{\leq} 4 \varepsilon_F \E_{\stoch}\|  \nabla f_{\stoch}(\x^t) - \nabla f_{\stoch}(\x^*) \|^2  +  ((4 \varepsilon_{F} s+2) +  \varepsilon_{F^c} (d-k) ) \E_{\stoch}\| \nabla f_{\stoch}(\x^*) \|_{\infty}^2 + 2 \varepsilon_{\text{abs}}\mu^2\\
\end{align*}
Where (a) follows by observing in  Proposition \ref{prop:zograd} (b) that $\varepsilon_{F^c}\leq \varepsilon_{F}$, and using the definition of the Euclidean norm. Let us plug the above into \eqref{eq:plugboundhere}, and use the fact that, from Proposition \ref{prop:zograd} (a), since each $f_{\stoch}$ is $(L_{s'}, s':=\max(\sus, s))$-RSS, it is also $(L_{s'}, \sus)$-RSS, so for the $\varepsilon_{\mu}$ from  Proposition \ref{prop:zograd} (a), we have, for almost any given $\stoch$: $ \| \E_{\uu}\hat{\nabla}_F f_{\stoch}(\x^t)- \nabla_F f_{\stoch}(\x^t)) \|^2\leq \varepsilon_{\mu} \mu^2$, and let us also use the fact that since each $f_{\stoch}$ is $(L_{s'}, \max(\sus, s))$-RSS , it is  also $(L_{s'}, |F|)$-RSS (since $|F|\leq s$)  which gives that for almost any $\stoch$: $f_{\stoch}$: $\| \nabla f_{\stoch}(\x^t) - \nabla f_{\stoch}(\x^*) \|^2    \leq L_{s'}^2 \|\x^t - \x^*\|^2$, to finally obtain:

\begin{align*}
  &\E_{\stoch, \uu}  \| \x^t - \x^* - \eta \hat{\nabla}_Ff_{\stoch}(\x^t) + \eta \nabla_F f_{\stoch}(\x^*) \|^2\\
  &\quad\leq (1+\eta^2 L_{s'}^2  + 4  \varepsilon_{F} \eta^2 L_{s'}^2 )\| \x^t - \x^* \|^2  - 2 \eta \langle \x^t - \x^*, \E_{\stoch}[\nabla f_{\stoch}(\x^t)- \nabla f_{\stoch}(\x^*)]\rangle \\
                                                                                                      &\quad \quad  +  \frac{\varepsilon_{\mu}}{L_{s'}^2} \mu+ 2\eta^2 \varepsilon_{\text{abs}}\mu^2 + \eta^2 ((4 \varepsilon_{F} s+2) +  \varepsilon_{F^c} (d-k) )   \E_{\stoch}\| \nabla f(\x^*) \|_{\infty}^2\\
                                                                                                             &\quad= (1+\eta^2 L_{s'}^2  + 4  \varepsilon_F\eta^2 L_{s'}^2 )\| \x^t - \x^* \|^2 - 2 \eta \langle \x^t - \x^*, \nabla f(\x^t)- \nabla f(\x^*)\rangle \\
                                                                             &\quad \quad +  \frac{\varepsilon_{\mu}}{L_{s'}^2}\mu + 2\eta^2 \varepsilon_{\text{abs}}\mu^2 + \eta^2  ((4 \varepsilon_{F} s+2) +  \varepsilon_{F^c} (d-k) )  \E_{\stoch}\| \nabla f(\x^*) \|_{\infty}^2
\end{align*}
Since $f$ is $(\nu_s, s)$-RSC, it is also $(\nu_s, |F|)$-RSC, since $|F|\leq 2k+k^*\leq s$, therefore, we have: $\langle \x^t - \x^*, \nabla f(\x^t)- \nabla f(\x^*)\rangle \geq \nu_s \| \x^t - \x^* \|^2$ (this can be proven by adding together the definition of $(\nu_s,s)$-RSC written respectively at $\x=\x^t$,$\y=\x^*$, and at $\x=\x^*$,$\y=\x^t$). Plugging this into the above:
\begin{align*}
&  \E_{\stoch, \uu}  \| \x^t - \x^* - \eta \hat{\nabla}_Ff_{\stoch}(\x^t) + \eta \nabla_F f_{\stoch}(\x^*) \|^2\\
  &\quad \leq \left(1 - 2 \eta \nu_s +  \left(4  \varepsilon_{F}+ 1\right) L_{s'}^2 \eta^2\right)\| \x^t - \x^* \|^2 \\
                                                                             &\quad\quad  +  \frac{\varepsilon_{\mu}}{L_{s'}^2} \mu^2 + 2\eta^2 \varepsilon_{\text{abs}}\mu^2 + \eta^2   ((4 \varepsilon_{F} s+2) +  \varepsilon_{F^c} (d-k) )  \E_{\stoch}\| \nabla f_{\stoch}(\x^*) \|_{\infty}^2
\end{align*}
The value of $\eta$ that minimizes the left term in $\eta$ is equal to $\frac{\nu_s}{(4\varepsilon_F+1 )L_{s'}^2}$ (because the optimum of the quadratic function $ax^2 + bx + c$ is attained in $-\frac{b}{2a}$ and its value is $-\frac{b^2}{4a}+ c$). Let us choose it, that is, we fix $\eta = \frac{\nu_s}{(4\varepsilon_F+1 )L_{s'}^2}$. Let us now define the following $\rho$:
\begin{equation*}
  \label{eq:3}
\rho^2 = 1 - \frac{4\nu_s^2}{4(4  \varepsilon_F+1)L_{s'}^2}   =  1 - \frac{\nu_s^2}{(4  \varepsilon_F+1)L_{s'}^2} 
\end{equation*}
We therefore have:
\begin{align*}
  &\E_{\stoch, \uu} \| \x^t - \x^* - \eta \hat{\nabla}_Ff_{\stoch}(\x^t) + \eta \nabla_F f_{\stoch}(\x^*) \|^2 \\
  &\quad\leq\rho^2 \| \x^t - \x^* \|^2  +  \frac{\varepsilon_{\mu}}{L_{s'}^2} \mu^2 + 2\eta^2 \varepsilon_{\text{abs}}\mu^2 + \eta^2  ((4 \varepsilon_{F} s+2) +  \varepsilon_{F^c} (d-k) )  \E_{\stoch}\| \nabla f_{\stoch}(\x^*) \|_{\infty}^2
\end{align*}
 We can now use the fact that for all $(a, b)\in (\mathbb{R}_+)^2: \sqrt{a + b} \leq \sqrt{a} + \sqrt{b}$, as well as Jensen's inequality, to obtain:
\begin{align*}
  &\E_{\stoch, \uu}  \| \x^t - \x^* - \eta \hat{\nabla}_Ff_{\stoch}(\x^t) + \eta \nabla_F f_{\stoch}(\x^*) \|         \\
  &\quad\leq \rho \| \x^t - \x^* \|  + \frac{\sqrt{\varepsilon_{\mu}}}{L_{s'}} \mu^2+ \eta \sqrt{2 \varepsilon_{\text{abs}}\mu^2 } + \eta \sqrt{         ((4 \varepsilon_{F} s+2) +  \varepsilon_{F^c} (d-k) ) ) \E_{\stoch}\| \nabla f_{\stoch}(\x^*) \|_{\infty}^2}
 \end{align*}
 We can now formulate a first decrease-rate type of result, before the hard thresholding operation, as follows, using for $\eta$ the value previously defined, and with:
 \begin{equation}
   \label{eq:defy}
\y^t := \x^{t}-\eta \hat{\nabla}_{F} f_{\stoch}\left(\x^{t}\right)    
 \end{equation}
 \begin{align}
\E_{\stoch, \uu}  \|\y^t-\x^*\| &= \E_{\stoch, \uu}\left\|\x^{t}-\eta \hat{\nabla}_{F} f_{\stoch}\left(\x^{t}\right)-\x^*\right\|\nonumber \\
                       & \leq\E_{\stoch, \uu}\left\|\x^{t}-\x^*-\eta \hat{\nabla}_{F} f_{\stoch}\left(\x^{t}\right)+\eta \nabla_{F} f_{\stoch}(\x^*)\right\|+\eta\E_{\stoch}\left\|\nabla_{F} f_{\stoch}(\x^*)\right\| \nonumber\\
                       &=\E_{\stoch, \uu}\left\|\x^{t}-\x^*-\eta \hat{\nabla}_{F} f_{\stoch}\left(\x^{t}\right)+\eta \nabla_{F} f_{\stoch}(\x^*)\right\|+\eta\E_{\stoch}\sqrt{\left\|\nabla_{F} f_{\stoch}(\x^*)\right\|^2} \nonumber\\
      &\leq\E_{\stoch, \uu}\left\|\x^{t}-\x^*-\eta \hat{\nabla}_{F} f_{\stoch}\left(\x^{t}\right)+\eta \nabla_{F} f_{\stoch}(\x^*)\right\|+\eta\sqrt{\E_{\stoch}\left\|\nabla_{F} f_{\stoch}(\x^*)\right\|^2} \nonumber\\
                              & \leq \rho\left\|\x^{t}-\x^*\right\| +  \eta     (\sqrt{   ((4 \varepsilon_{F} s+2) +  \varepsilon_{F^c} (d-k) )   \E_{\stoch}\left\|\nabla f(\x^*)\right\|_{\infty}^2 } \nonumber\\
   &\quad+\sqrt{s}  \sqrt{\E_{\stoch}\left\|\nabla f_{\stoch}(\x^*)\right\|_{\infty}^2}) + \frac{\sqrt{\varepsilon_{\mu}}}{L_{s'}} \mu^2+ \eta \sqrt{2 \varepsilon_{\text{abs}}\mu^2 }\nonumber\\
                                &=  \rho\left\|\x^{t}-\x^*\right\| +   \eta (\sqrt{ (4 \varepsilon_{F} s+2) +  \varepsilon_{F^c} (d-k)        } +\sqrt{s})  \sqrt{\E_{\stoch}\left\|\nabla f_{\stoch}(\x^*)\right\|_{\infty}^2}\nonumber\\
                                &\quad+ \frac{\sqrt{\varepsilon_{\mu}}}{L_{s'}} \mu+ \eta \sqrt{2 \varepsilon_{\text{abs}}\mu^2 } \nonumber\\
                                  &\overset{(a)}{\leq} \rho\left\|\x^{t}-\x^*\right\| +   \eta (\sqrt{ (4 \varepsilon_{F} s+2) +  \varepsilon_{F^c} (d-k)        } +\sqrt{s})  \sigma\nonumber\\
                                &\quad+ \frac{\sqrt{\varepsilon_{\mu}}}{L_{s'}} \mu+ \eta \sqrt{2 \varepsilon_{\text{abs}}\mu^2 } \nonumber\\
                                                                    &\leq \rho\left\|\x^{t}-\x^*\right\| +   \eta (\sqrt{ (4 \varepsilon_{F} s+2) +  \varepsilon_{F^c} (d-k)        } +\sqrt{s})  \sigma\nonumber\\
                                &\quad+ \frac{\sqrt{\varepsilon_{\mu}}}{L_{s'}} \mu+ \eta \sqrt{2 \varepsilon_{\text{abs}}\mu^2 } \label{eq:pluglem17}
 \end{align}
 Where (a) follows from the $\sigma$-FGN assumption. We now consider $\x^{t+1}$,  that is, the best-$k$-sparse approximation of $\z^t:=\x^t - \eta \hat{\nabla} f_{\stoch}\left(\x^{t}\right) $ from the hard thresholding operation in SZOHT.  We can notice that $\x_F^t = \x^t$ (because $\text{supp}(\x^t)=F^{(t)}\subset F$),  which gives $\y^t = \z_F^t$. Since $F^{(t+1)}\subset F$, the coordinates of the top $k$ magnitude components of $\z^t$ are in $F$, so they are also those of the top $k$ magnitude components of  $\z_F^t=\y^t$. Therefore,  $\x^{t+1}$  is also the best k-sparse approximation of $\y^t$. Therefore, using Corollary \ref{cor:cor17},  we obtain:
 \begin{equation*}
 \|  \x^{t+1}  - \x^* \| \leq \gamma \| \y^t-\x^* \|      
\end{equation*}
with:
\begin{equation}
\gamma:= \sqrt{1 + \left( k^*/k + \sqrt{\left(4 + k^*/k\right)k^*/k}\right)/2}    \label{eq:gammadef}
\end{equation}
Where $k^*=\|\x^*\|_0$. Plugging this into \eqref{eq:pluglem17} gives:
  \begin{align*}
    \E_{\stoch, \uu}  \|\x^{t+1}-\x^*\| &\leq \gamma\rho\left\|\x^{t}-\x^*\right\|+ \gamma \eta   (\sqrt{ (4 \varepsilon_{F} s+2) +  \varepsilon_{F^c} (d-k)  }   ) + \sqrt{s})  \sigma\\
    &\quad+  \gamma  \frac{\sqrt{\varepsilon_{\mu}}}{L_{s'}}\mu + \eta \sqrt{2 \varepsilon_{\text{abs}} } \mu
 \end{align*}
  This will allow us to obtain the following final result:
   \begin{align}
     \E  \|\x^{t+1}-\x^*\| &\leq \gamma\rho\left\|\x^{t}-\x^*\right\|+ \gamma      \underbrace{\eta \left(\sqrt{ (4 \varepsilon_{F} s+2) +  \varepsilon_{F^c} (d-k) }  +\sqrt{s}\right)}_{:= a} \sigma\nonumber\\
     &\quad+  \gamma \underbrace{\left(\frac{\sqrt{\varepsilon_{\mu}}}{L_{s'}} + \eta \sqrt{2 \varepsilon_{\text{abs}} }\right)}_{:= b} \mu  \label{eq:onestep}
\end{align}
with $\eta = \frac{\nu_s}{(4\varepsilon_F +1 )L_{s'}^2} $ and $\rho^2 =  1 - \frac{2\nu_s^2}{(4  \varepsilon_F +1)L_{s'}^2} $.
   We need to have $\rho \gamma < 1 $ in order to have a contraction at each step. Let us suppose that $k \geq \rho^2 k^*/(1-\rho^2)^2$: we will show that this value for $k$ allows to verify that condition on $\rho \gamma$. That implies $\frac{k^*}{k} \leq \frac{(1-\rho^2)^2}{\rho^2}$.  We then have, from the definition of $\gamma$ in \eqref{eq:gammadef}:
   \begin{align}
     \gamma^2& \leq 1 +  \left(\frac{(1-\rho^2)^2}{\rho^2} + \sqrt{\left(4+\frac{(1-\rho^2)^2}{\rho^2}\right)\frac{(1-\rho^2)^2}{\rho^2}}\right) \frac{1}{2} \nonumber\\
     &=  1 +  \left(\frac{(1-\rho^2)^2}{\rho^2} + \sqrt{\left(\frac{4\rho^2 + 1+ \rho^4 - 2\rho^2}{\rho^2}\right)\frac{(1-\rho^2)^2}{\rho^2}}\right) \frac{1}{2} \nonumber\\
             &=  1 +  \left(\frac{(1-\rho^2)^2}{\rho^2} + \sqrt{\frac{(1+\rho^2)^2(1-\rho^2)^2}{\rho^4}}\right) \frac{1}{2} \nonumber\\
             &= 1 +  \left(\frac{(1-\rho^2)^2}{\rho^2} + \frac{(1+\rho^2)(1-\rho^2)}{\rho^2}\right) \frac{1}{2} =   1 +  \left(\frac{(1-\rho^2)(1- \rho^2 + 1 + \rho^2)}{\rho^2}\right) \frac{1}{2} \nonumber\\
     &=     1 +  \frac{(1-\rho^2)}{\rho^2} = \frac{1}{\rho^2} \label{eq:smoothsimplification}
   \end{align}
Therefore, we indeed have $\rho \gamma \leq 1$ when choosing $k \geq \rho^2 k^*/(1-\rho^2)^2$.

   Unrolling inequality \eqref{eq:onestep}  through time, we then have, at iteration $t+1$, and denoting by $\stoch^{t+1}$ the noise drawn at time step $t+1$ and $\uu^{t+1}$ the random directions $\uu_1, ..., \uu_q$ chosen at time step $t+1$, from the law of total expectations: 
   \begin{align*}
     \E  \|\x^{t+1}-\x^*\| &=   \E_{ \stoch^{t}, \uu^t, .., \stoch^1, \uu^1}\E_{\stoch^{t+1}, \uu^{t+1} | \stoch^{t}, \uu^{t}, .., \stoch^1, \uu^1} \|\x^{t+1}-\x^*\| \\
                           &\leq   \E_{ \stoch^{t}, \uu^t, .., \stoch^1, \uu^1} [\gamma\rho\|\x^{t}-\x^*\|+ \gamma a  \sigma+ \gamma b \mu]\\
                           &= \gamma\rho  \E_{ \stoch^{t}, \uu^t, .., \stoch^1, \uu^1} [\|\x^{t}-\x^*\|]+ \gamma a  \sigma+ \gamma b \mu\\
                           &\leq (\gamma\rho)^2  \E_{ \stoch^{t-1}, \uu^{t-1}, .., \stoch^1, \uu^1} [\|\x^{t-1}-\x^*\|]+ (\gamma \rho)^2 a  \sigma\\
     &\quad+ \gamma a  \sigma+ (\gamma \rho)^2  b \mu + \gamma b \mu\\
                           &\leq (\gamma \rho)^{t+1}\|\x^{(0)}-\x^*\| + \left(\sum_{i=0}^{t} (\gamma \rho)^i\right) \gamma a  \sigma+ \left(\sum_{i=0}^{t} (\gamma \rho)^i\right) \gamma b \mu\\
                           &=       (\gamma \rho)^{t+1}\|\x^{(0)}-\x^*\| + \frac{1 - (\gamma\rho)^t}{1 - \gamma\rho} \gamma a  \sigma+ \frac{1 - (\gamma\rho)^t}{1 - \gamma\rho}  \gamma b \mu\\
     &\leq (\gamma \rho)^{t+1}\|\x^{(0)}-\x^*\| + \frac{1}{1 - \gamma\rho} \gamma a  \sigma+ \frac{1}{1 - \gamma\rho}  \gamma b \mu
   \end{align*}
   Where the last inequality follows from the fact that $\rho \gamma <1$.
\end{proof}

\subsection{Proof of Remark \ref{rem:firstcond}}
\label{sec:firstcond}

\begin{proof}
We show below that, due to the complex impact of $q$ and $k$ on the convergence analysis in our ZO \textbf{+} HT (hard-thresholding) setting (compared to ZO only), $q$ cannot be taken as small as we want here (in particular we can never take $q=1$, which is different from classical ZO algorithms such as \cite[Corollary 3]{liuadmm}), if we want Theorem 1 to apply with $\rho \gamma <1$. In other words, there is a necessary (but not sufficient) minimal (i.e. $>1$) value for $q$.

 A necessary condition for Theorem 1 to describe convergence of SZOHT is that $\rho\gamma <1$. From the expressions of $\rho$ and $\gamma$ We have $\rho=\rho(q, k)$, and $\gamma= \gamma(k)$. We recall those expressions below:

$ \gamma = \sqrt{1 + \left( k^*/k + \sqrt{\left(4 + k^*/k\right)k^*/k}\right)/2} $

$\rho^2=  1 - \frac{\nu_s^2}{(4  \varepsilon_F+1)L_{s'}^2}  = 1 - \frac{1}{(4  \varepsilon_F+1)\kappa^2}  $ with $\kappa=\frac{L_{s'}}{\nu_s}$.\\
with:
$\varepsilon_F = \frac{2d}{q(\sus + 2)}   \left(\frac{(s-1)(\sus-1)}{d-1} + 3\right) + 2$, with $s =2k+k^* $ (we consider the smallest $s$ possible from Theorem \ref{thrm:cvrate})\\
So therefore:
\begin{align*}
  \rho^2 &= 1 - \frac{1}{\left[\frac{8d}{q(\sus +2)}(\frac{(s-1)(\sus-1)}{d-1} + 3) + 9\right]\kappa^2}  \\
  &= 1 - \frac{1}{\left[\frac{8d}{q(\sus +2)}(\frac{(2k + k^*-1)(\sus-1)}{d-1} + 3) + 9\right]\kappa^2}  
\end{align*}

Let us define  $a := \frac{16d\kappa^2(\sus-1)}{q(\sus+2)(d-1)}$ and $b := \kappa^2\left[\frac{8d}{q(\sus + 2)} [\frac{(\sus - 1)(k^*-1)}{d-1}+ 3] +9\right]$

We then have:

$$\rho^2 = 1 - \frac{1}{a k + b} $$

To ensure convergence, we need to have $\rho\gamma <1$, therefore (following the same derivation as in \eqref{eq:smoothsimplification}) a necessary condition that we need to verify is  $k\geq \rho^2 k^*/(1-\rho^2)^2$.

Which means we need:

\begin{align}
k &\geq \frac{\left(1-\frac{1}{a k+b}\right) k^{*}}{\left(\frac{1}{a k+b}\right)^{2}} \label{eq:biggerk}\\
k & \geq\left[(a k+b)^{2}-(a k+b)\right] k^{*} \\
k &\geq k^{*}\left[a^{2} k^{2}+2 a b k+b^{2}-a k-b\right] \\
 0 & \geq k^{*}a^2 k^{2}+\left(2 a b-\frac{1}{k^{*}}-a\right) k^*k+\left(b^{2}-b\right) k^{*}
\end{align}
If we want that there exist a $k$ such that this is true, we need (since $k^* \geq 0$): 
$$\Delta  \geq 0$$
with: 
\begin{align*}
  \Delta&:=k^{*2}(2 a b-\frac{1}{k^*}-a)^{2}-4 k^{*^{2}} a^{2}\left(b^{2}-b\right)\\
&= k^{*^{2}}\left(4 a^{2} b^{2}+\left(\frac{1}{k^{*}}+a\right)^{2}-4 a b\left(\frac{1}{k^{*}}+a\right)\right) -4 k^{*2} a^{2}\left(b^{2}-b\right)
\end{align*}
\begin{align*}
&=k^{* 2}\left[4 a^{2} b^{2}+\frac{1}{k^{*^{2}}}+a^{2}+\frac{2a}{k^{* }}-\frac{4 a b}{k^{*}}-4 a^{2} b-4 a^{2} b^{2}+4 a^2 b\right]\\
&=1+a^{2} k^{* 2}+2 a k^{*}-4 a b k^{*}
\end{align*}
\begin{equation}
  \label{eq:ii}
\Delta \geq 0 \Rightarrow  1+a^{2} k^{* 2}+2 a k^{*}\geq 4 a b k^{*}  
\end{equation}
Let us express $a$ and $b$ in terms of $q$, as: 
\begin{align}
&a=\frac{A}{q} \quad \text { with } \quad A=\frac{16 d \kappa^{2}\left(\sus-1\right)}{\left(\sus+2\right)(d-1)}\label{eq:eqfora}\\
&b=\frac{B}{q}+C \quad \text{ with}  \quad B=\kappa^{2}\left[\frac{8 d}{\left(\sus+2\right)}\left( \frac{\left(\sus-1\right)\left(k^{*} -1\right)}{d-1}+ 3\right) \right]\label{eq:eqforb}\\
&\text{and with } C=9 \kappa^{2}
\end{align}
So plugging in \eqref{eq:ii}, what we need is: 
\begin{align*}
1+\frac{A^{2}}{q^{2}} k^{*^{2}}+2 \frac{A}{q} k^{*} &\geq 4 \frac{A}{q}\left(\frac{B}{q}+C\right) k^{*}\\
q^{2}+A^{2} k^{* 2}+2 A k^{*} q &\geq 4 A B k^{*}+4 C A q k^{*}\\
q^{2}+q\left(2 A k^{*}-4 C A k^{*}\right)+A^{2} k^{* 2}-4 A B k^{*} &\geq 0
\end{align*}
To ensure that, we need to compute $\Delta'$, defined as:
  \begin{align*}
\Delta^{\prime} &:=\left(2 A k^{*}-4 C A k^{*}\right)^{2}-4\left(A^{2} k^{* 2}-4 A B k^{*}\right)\\
&=4 A^{2} k^{* 2}+16 C^{2} A^{2} k^{* 2}-16 C A^{2} k^{* 2}-4 A^{2} k^{* 2}+16 A B k^{*}\\
&=16 C A^{2} k^{* 2}(C - 1)+16 A B k^{*}=16 A k^{*}\left[k^{*} C(C-1) A + B \right]
\end{align*}
We now have:
$$C=9 \kappa^{2} \Rightarrow C \geq 1 \Rightarrow \Delta^{\prime} \geq 0$$
Therefore, there is a minimal value for $q$, and it is:
\begin{align*}
q \geq q_{\text{min}}
\end{align*}
With:
\begin{align}
  q_{\text{min}} &=  \frac{-\left(2 A k^{*}-4 C A k^{*}\right)+\sqrt{16 C A^{2} k^{* 2}(C-1)+16 A B k^{*}}}{2} \label{eq:qminit}\\
  &=  \frac{2 A k^{*}\left(2 C -1\right)+\sqrt{16 A^2 k^{* 2}\left[ C(C-1) + \frac{B}{Ak^*}\right]}}{2}
\end{align}
\textbf{Case $\sus > 1$:} Assuming $\sus>1$ gives $A>0$, and since
$A=\frac{16 d \kappa^{2}\left(\sus-1\right)}{\left(\sus+2\right)(d-1)}$ and $B= \frac{8\kappa^2d}{\sus + 2} (\frac{(\sus - 1)(k^* - 1)}{d-1} + 3 )$\\
This gives: $\frac{B}{Ak^*} = \frac{1}{2} - \frac{1}{2k^*}+ \frac{3}{2} \frac{d-1}{k^*(\sus-1)}$\\
Therefore: 
$q_{\text{min}}=A k^{*}\left[2 C-1+2 \sqrt{C(C-1)+\frac{1}{2}-\frac{1}{2 k^{*}}+\frac{3}{2} \frac{d-1}{k^*(\sus-1)}}\right]$\\
with $C=9\kappa^2$, which reads:

$$q_{\text{min}} = \frac{16 d (\sus-1) k^* \kappa^{2}}{\left(\sus+2\right)(d-1)} \left[18 \kappa^2 -1+2 \sqrt{9\kappa^2(9 \kappa^2-1)+\frac{1}{2}-\frac{1}{2 k^{*}}+\frac{3}{2} \frac{d-1}{k^*(\sus-1)}}\right]$$
\textbf{Case $\sus=1$:} In the case $\sus =1$, we have $A=0$, so therefore, from \eqref{eq:qminit}, $q_{\text{min}} = 0$, so the necessary condition on $q$ as above so that there exist $k$ such that: $k \geq \rho^2 k^*/(1-\rho^2)^2$ does not apply here.
We may therefore think that it may be  possible to take $q=1$ in that case. However, there is another condition on $k$ that should also be enforced, which is that $k \leq d$ (since we cannot keep more components than $d$). And in that $\sus=1$ case, we have $a=0$, and $b=\kappa^2[8\frac{d}{q} + 9]$ (from \eqref{eq:eqfora} and \eqref{eq:eqforb}). Now, enforcing the condition $k \geq k^*[(ak+b)^2 - (ak+b)] = k^* b(b-1)$ leads to the following chain of implications (i.e. each downstream assertion is a necessary condition for the upstream assertion):

\begin{align}
  \frac{k}{k^*} \geq b(b-1) \quad \text{and} \quad k \leq d &\implies  \frac{d}{k^*} \geq (b-1)^2 \implies  \sqrt{\frac{d}{k^*}} + 1 \geq b \implies \sqrt{\frac{d}{k^*}} + 1 \geq \frac{B}{q} + C \nonumber \\
                              &\implies \sqrt{\frac{d}{k^*}} + 1 - C\geq \frac{B}{q}\nonumber\\
  &\implies q\geq \frac{B}{\sqrt{\frac{d}{k^*}} + 1 - C} \quad \text{and}\quad C - \sqrt{\frac{d}{k^*}} + 1 > 0\nonumber\\
  &\implies q \geq \frac{B}{\sqrt{\frac{d}{k^*}} + 1} \implies q \geq  \frac{8 \kappa^2 d}{\sqrt{\frac{d}{k^*}} + 1}  \label{eq:rhsboundq}
\end{align}
Where the last inequality follows from the expression of $B$ in \eqref{eq:eqforb} when $\sus=1$.

So the right hand side in \eqref{eq:rhsboundq} is also a minimal necessary value for $q$ in this case, though for a different reason than in the case $\sus>1$.

\end{proof}

\subsection{Proof of Corollary \ref{cor:specialq}}
\label{sec:proof_specialq}

\begin{proof}

  We first restrict the result of Theorem \ref{thrm:cvrate} to a particular $q$. By inspection of Proposition \ref{prop:zograd} (b), we choose  $q$ such that the part of  $\varepsilon_{F}$ that depends on $q$ becomes $1$: we believe this will allow to better understand the dependence between variables in our convergence rate result, although other choices of $q$ are possible. Therefore, we choose:
  \begin{equation}
    \label{eq:qval}
q^{\prime}  := \frac{2d}{\sus + 2}\left(\frac{(s-1)(\sus-1)}{d-1} + 3\right)     
  \end{equation}
  so that we obtain: $\varepsilon_{F}^{\prime} :=  1+2=3$ (from Proposition \ref{prop:zograd} (b)), which also implies :
  $$\eta^{\prime} := \frac{\nu_s}{(4\varepsilon_F^{\prime} +1 )L_{s'}^2} = \frac{\nu_s}{13L_{s'}^2}$$
  and:
  \begin{equation}
    \label{eq:newrho}
{\rho^{\prime}}^2:=  1 - \frac{2\nu_s^2}{(4  \varepsilon_F^{\prime} +1)L_{s'}^2} = 1 - \frac{2\nu_s^2}{13L_{s'}^2}    
\end{equation}

Now, regarding the value of $q$, we also note that any value of random directions $q^{\prime \prime} \geq q^{\prime}$ can be taken too, since the bound in Proposition \ref{prop:zograd} (b) would then still be verified for $ \varepsilon_{F}^{\prime}$ (that is, we would still have $\E\|\hat{\nabla}_{F} f_{\stoch}(\x)\|^{2} \leq \varepsilon_F^{\prime} \|\nabla_F f_{\stoch}(\x)\|^{2} +  \varepsilon_{F^c}^{\prime}  \|\nabla_{F^c} f_{\stoch}(\x)\|^{2}+   \varepsilon_{abs} \mu ^2$) (with $ \varepsilon_{F^c}^{\prime}$ the value of $\varepsilon_{F^c}$ for $q=q^{\prime}$).\\
Therefore, we will choose a value $q^{\prime \prime}$ so that our result is simpler. First, notice that $s\leq d \implies 1 - \frac{1}{s} \leq 1-\frac{1}{d}\implies \frac{s-1}{s} \leq \frac{d-1}{d} \implies \frac{s-1}{d-1} \leq \frac{s}{d}$. Therefore, if we take $q\geq2s + 6\frac{d}{\sus} $, we will also have $q\geq  \frac{2d}{\sus + 2}\left(\frac{(s-1)(\sus-1)}{d-1} + 3\right)   =q^{\prime}$.\\

Let us now impose a lower bound on $k$ that is slightly (twice) bigger than the lower bound from Theorem \ref{thrm:cvrate}. As will become clear below, this allows us to have a $\rho\gamma$ enough bounded away from 1, which guarantees a reasonable constant  in the $\OO$ notation for the query complexity (see the end of the proof).
Let us therefore take:
\begin{equation}
  \label{eq:goodk}
k \geq 2 k^* \frac{\rho^2}{(1-\rho^2)^2}  
\end{equation}

and plug the value of $\rho$ above into the expression:
  \begin{align*}
    k \geq 2 k^* \frac{{\rho^{\prime}}^2}{(1-{\rho^{\prime}}^2)^2}&\iff k \geq 2 k^* \frac{1 - \frac{2\nu_s^2}{13L_{s'}^2}}{(\frac{2\nu_s^2}{13L_{s'}^2})^2}\iff k \geq 2 k^*\left(\left(\frac{13L_{s'}^2}{2\nu_s^2}\right)^2 - \frac{13L_{s'}^2}{2\nu_s^2}\right) \\
    &\iff  k \geq 2 k^*(\frac{13}{2}\kappa^2)(\frac{13}{2}\kappa^2 - 1)    
  \end{align*}
  With $\kappa$ denoting $\frac{L_{s'}}{\nu_s}$. Therefore, if we take:

  \begin{equation*}
k \geq (86\kappa^4 - 12 \kappa^2 )k^*    
  \end{equation*}
  we will indeed verify the formula above $k \geq 2 k^*(\frac{13}{2}\kappa^2)(\frac{13}{2}\kappa^2 - 1)$.\\
We now turn to describing the query complexity of the algorithm:
To ensure that $(\gamma\rho)^t \|\x^{(0)} - \x^*\| \leq \varepsilon$, we need:
\begin{equation}
  \label{eq:rawcomplexity}
t  \geq \frac{1}{\log \frac{1}{\gamma \rho}}\log (\frac{1}{\varepsilon})\log(\|\x^{(0)} - \x^*\|)  
\end{equation}
with $\gamma \rho$ belonging to the interval $(0, 1)$.
Let us compute more precisely an upper bound to $\rho \gamma$ in this case, to show that it is reasonably enough bounded away from 1:
Taking $k$ as described in \eqref{eq:goodk}, and plugging that value into the expression of $\gamma$ from Theorem \ref{thrm:cvrate}, we obtain:
\begin{align*}
  \gamma^2 &=  1 + \left(    \frac{(1-\rho^2)^2}{2\rho^2}      + \sqrt{\left(4 +  \frac{(1-\rho^2)^2}{2\rho^2}\right)  \frac{(1-\rho^2)^2}{2\rho^2}}\right)/2  \\
           &\leq 1 + \frac{1}{\sqrt{2}}    \left(    \frac{(1-\rho^2)^2}{\rho^2}      + \sqrt{\left(4 +  \frac{(1-\rho^2)^2}{\rho^2}\right)  \frac{(1-\rho^2)^2}{\rho^2}}\right)/2  \\
  &\overset{(a)}{=} 1 + \frac{1}{\sqrt{2}} \frac{1 - \rho^2}{\rho^2}
\end{align*}
Where the simplification in (a) above follows similarly to \eqref{eq:smoothsimplification}.
Therefore, in that case, we have:
\begin{align*}
  \rho^2\gamma^2 &\leq \rho^2 + \frac{1}{\sqrt{2}} (1- \rho^2) = \frac{1}{\sqrt{2}} + \rho^2(1 - \frac{1}{\sqrt{2}}) \\
  &=  \frac{1}{\sqrt{2}} + (1 - \frac{2}{13\kappa^2})(1 - \frac{1}{\sqrt{2}})  = 1 - \frac{2 (1 - \frac{1}{\sqrt{2}})}{13 \kappa^2} \overset{(a)}{\leq} 1 - \frac{1}{26\kappa^2}
\end{align*}
Where (a) follows because $(1 - \frac{1}{\sqrt{2}}) \approx 0.29 \geq 1/4$
Therefore:
\begin{equation}
  \label{eq:1}
  \frac{1}{(\rho\gamma)^2} \geq \frac{1}{ 1 - \frac{1}{26\kappa^2}} 
\end{equation}

Given that $\log(\frac{1}{1 - x}) \geq x $ for all $x\in[0, 1)$, we have:
$$ \log\left(  \frac{1}{(\rho\gamma)^2} \right) \geq  \frac{1}{26\kappa^2}$$
  Therefore:
  $$ \frac{1}{\log(\frac{1}{\rho\gamma})}  =  \frac{2}{\log(\frac{1}{(\rho\gamma)^2})} \leq 52 \kappa^2$$
  Therefore, plugging this into \eqref{eq:rawcomplexity}, we obtain that with  $t \geq 52 \kappa^2 \log(\frac{1}{\varepsilon})\log(\|\x^{(0)} - \x^*\|) = \OO(\kappa^2\log(\frac{1}{\varepsilon}))$ iterations, we can get $(\gamma\rho)^t \|\x -\x^*\| \leq \varepsilon $.

  To obtain the query complexity (QC), we therefore just need to multiply the number of iterations by the number of queries per iteration $q=2s + 6 \frac{d}{\sus}$: to ensure  $(\gamma\rho)^t \|\x -\x^*\| \leq \varepsilon $, we need to query the zeroth-order oracle at least the following number of times: $(2s + 6 \frac{d}{\sus})  52 \kappa^2 \log(\frac{1}{\varepsilon}) \log(\|\x^{(0)} - \x^*\|)  = \OO((k + \frac{d}{\sus})\kappa \log(\frac{1}{\varepsilon}))$, since $s = 2 k + k^*$.

\subsection{Proof of Corollary \ref{cor:s2d}}
\label{sec:proof_s2d}

Almost all $f_{\stoch}$ are $L$-smooth, which is equivalent to saying that they are $(L, d)$-RSS. So we can directly plug $\sus=d$ in equation \eqref{eq:qval}, which gives a necessary value for $q$ of:
  \begin{equation}
    \label{eq:qvals2d}
 q  = \frac{2d}{d+2} (s+2)    
  \end{equation}
  Since any value of $q$ larger than the one in \eqref{eq:qvals2d} is valid, we choose $q \geq 2(s+2) (\geq \frac{2d}{d+2} (s+2)) $ for simplicity. The query complexity is obtained similarly as in the proof of Corollary \ref{cor:specialq} above, with that new value for $q$ (the number of iterations needed is unchanged from the proof of Corollary \ref{cor:specialq}), only the query  complexity $q$ per iteration changes), which means we need to query the zeroth-order oracle the following number of times:  $2(s + 2)  52 \kappa^2 \log(\frac{1}{\varepsilon}) \log(\|\x^{(0)} - \x^*\|)  = \OO(k\kappa \log(\frac{1}{\varepsilon}))$
\end{proof}

\section{Projection of the gradient estimator onto a sparse support}
\label{sec:appfig}

Below we plot the true gradient  $\nabla f(x)$ and its estimator  $\hat{\nabla} f(x)$ (for $q=1$), as well as their respective projections  $\nabla_F f(x)$ and  $\hat{\nabla}_F f(x)$, with $F=\{0, 1\}$ (i.e. $F$ is the hyperplane $z=0$), for $n_{\text{dir}}$ random directions. In Figure \ref{fig:3d}, due to the large number of random directions, we plot them as points not vectors. For simplicity, the figure is plotted for $\mu \rightarrow 0$, and $s_2=d$. We can see that even though gradient estimates $\hat{\nabla} f(x)$ are poor estimates of  $\nabla f(x)$, $\hat{\nabla}_F f(x)$ is a better estimate of $\nabla_F f(x)$. 
\begin{figure}[htbp]
  \centering
  \subfigure[$n_{\text{dir}}=1$]{\includegraphics[scale=0.8]{./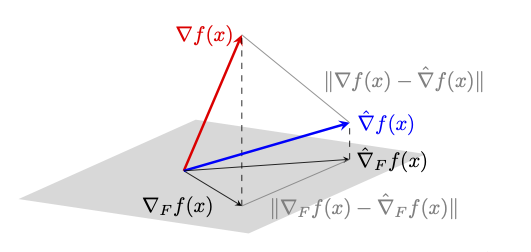}}\label{fig:vectors}
  \subfigure[$n_{\text{dir}}=10^6$]{\includegraphics[scale=0.55, trim={0cm 0.5cm 0cm 2.3cm},clip]{./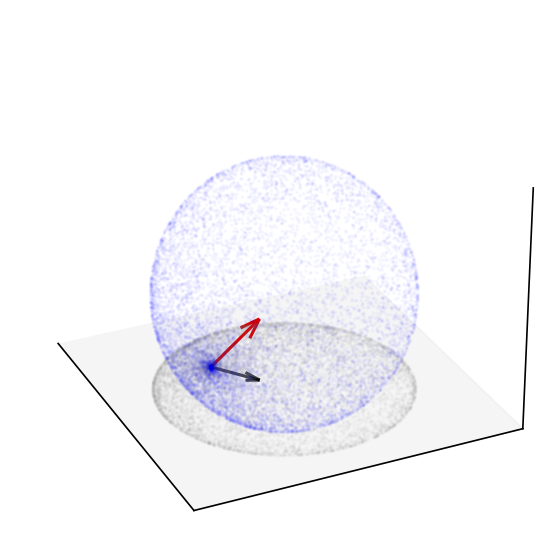}\label{fig:3d}}
  \caption{$\nabla f(x)$ and $\hat{\nabla} f(x)$ and their  projections  $\nabla_F f(x)$ and  $\hat{\nabla}_F f(x)$ onto $F$}
  \label{fig:fig}
\end{figure}
\begin{remark}
  An interesting fact that can be observed in Figure \ref{fig:3d} above is that when $\mu \rightarrow 0$ and $\sus=d$, the ZO gradient estimates belong to a sphere. This comes from the fact that, in that case, the ZO estimate using the random direction $\uu$ is actually a  directional derivative (scaled by d): $\hat{\nabla} f(\x) = d \langle \nabla f(\x), \uu\rangle \uu $, for which we have :
  \begin{align*}
    \|\hat{\nabla} f(\x) - \frac{d}{2}\nabla f(\x)\|^2 &= d^2(\langle \nabla f(\x), \uu\rangle)^2 \langle \uu , \uu \rangle + \frac{d^2}{4} \|\nabla f(\x)\|^2 \\
    &\quad- d^2 \langle \nabla f(\x) , \uu\rangle \langle  \uu, \nabla f(\x)\rangle \\
    &= \frac{d^2}{4} \|\nabla f(\x)\|^2
  \end{align*}

  (since $\|\uu\|=1$). That is, gradient estimates belong to a sphere of center $\frac{d}{2} \nabla f(\x)$ and radius $\frac{d}{2} \|\nabla f(\x)\|$. However, the distribution of  $ \hat{\nabla} f(\x)$ is not uniform on that sphere: it is more concentrated around $\bm{0}$ as we can observe in Figure \ref{fig:3d}.
\end{remark}

\section{Value of $\rho \gamma$ depending on $q$ and $k^*$}
\label{sec:value-rho-gamma}

In this section, we further illustrate the importance on the value of $q$ as discussed in Remark \ref{rem:firstcond}, by showing in Figure \ref{fig:rhogamma}  that if $q$ is too small, then there does not exist any $k$ that verifies the condition $k \geq \frac{k^*\rho^2}{(1 - \rho^2)^2}$, no matter how small is $k^*$ (i.e., even if $k^*=1$). However,  if $q$ is large enough, then there exist some $k^*$ such that this condition is true. To generate the curves below, we simply use the formulas for $\gamma = \gamma(k, k^* )$  and $\rho = \rho(s, q)$ with $s = 2k + k^*$ from Theorem \ref{thrm:cvrate}, and with $d=30000$ and $\sus=d$.

\begin{figure}[htp]
    \centering
    \subfigure[$q=200$]{\includegraphics[scale=0.28]{./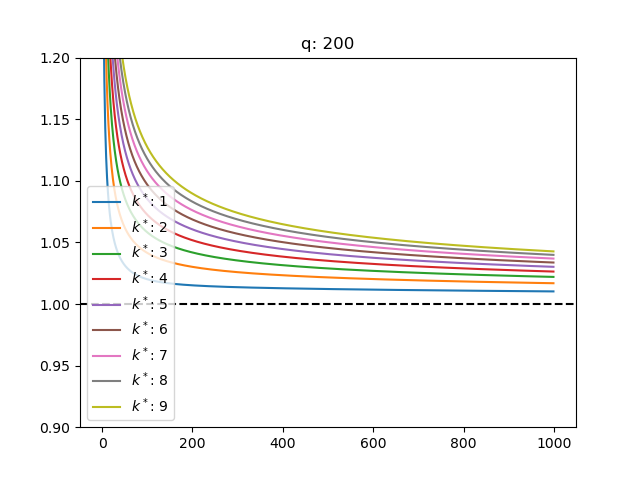}\label{fig:q1}}
    \subfigure[$q=5000$]{\includegraphics[scale=0.28]{./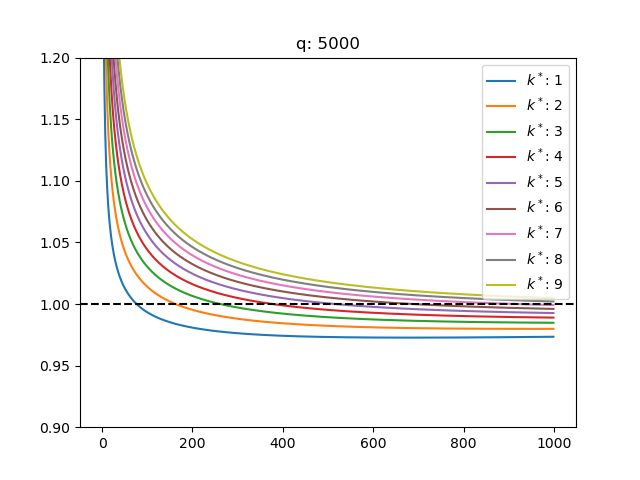}\label{fig:q2}}
    \subfigure[$q=30000$]{\includegraphics[scale=0.28]{./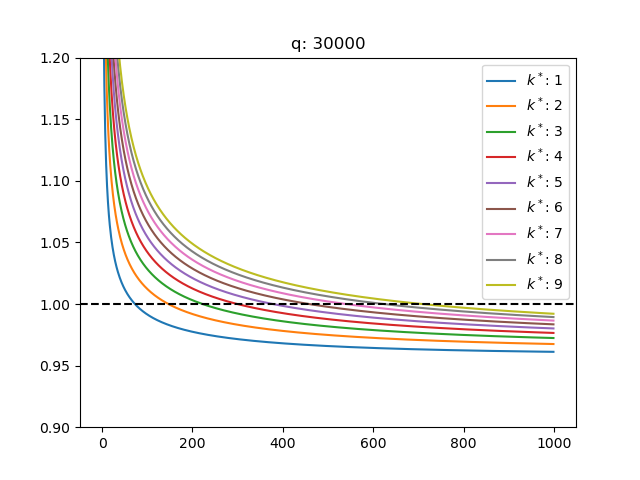}\label{fig:q3}}
      \caption{$\rho\gamma$ ($y$ axis) as a function of $k$ ($x$ axis) for several values of $q$ and $k^*$}\label{fig:rhogamma}
 \end{figure}

\section{Dimension independence/weak-dependence}
\label{sec:dimens-indep}
In this section, we show the dependence of SZOHT on the dimension. To that end, we consider minimizing the following synthetic problem:
$$\min_{\x} f(\x) \quad \text{s.t.} \quad \|\x\|_0 \leq k$$
with $k=500$, and $f$ chosen as: $f(\x) = \frac{1}{2}\|\x - \y\|^2$, with $\y_i = 0$ if $i<d-k^*$ and $\y_i = \frac{1}{(k^*-(d-i))}$ if $i > d-k^*$ with $k^*=5$. In other words, the $k^*$ last components of $\y$ are regularly spaced from $1/k^*$ to $1$: in a way, this simulates the recovery of a $k^*$-sparse vector $\y$  by observing only the squared deviation of some queries $\x$.
In that case, we can easily check that $f$ verifies the following properties:
\begin{itemize}
\item $f$ is  $L$-smooth with $L=1$, as well as $(L_{s'}, s')$-RSS for any $s'$ such that $1\leq s' \leq d$, with $L_{s'}=1$, and $(\nu_s, s)$-RSC with $s=2k+k^*$ and $\nu_s = 1$ (so $\kappa=\frac{L}{\nu_s} = \frac{L_{s'}}{\nu_s} = 1$)
\item $\y = \x^*= \arg\min_{\x} f(\x) \quad s.t. \quad \|\x\|_0 \leq k^* $
  \item $f(\y) = f(\x^*) =  0$
  \item $\nabla f(\y) = \bm{0}$ so $f$ is $\sigma$-FGN with $\sigma=0$
\end{itemize}
We also note that the above setting of $k$ and $k^*$ verifies $k \geq (86 \kappa^4 - 12 \kappa^2)k^*$ (since $\kappa=1$).
Finally, we initialize $\x^0$ such that ${\x^0}_i = 1/d$ if $d-k^* \geq i $  and $0$ otherwise. We choose this initialization and not $\x^0=\bm{0}$, just to ensure that $\nabla f(\x^0)_i \neq 0$ for any $i$: this way the optimization is really done over all $d$ variables, not just the $k^*$ last ones. In addition, this initialization ensures that $\|\x^0 - \x^*\|$ is constant no matter the $d$, which makes the convergence curves comparable.
We consider several settings of $\sus$ to showcase the dependence on the dimension below.
\paragraph{Dimension Independence}
\begin{itemize}
\item $\sus = d$: As from Corollary \ref{cor:s2d}, we take $q =  2(s + 2)$ with $s = 2k + k^*$ (i.e. $q = 2014$). We choose $\mu=1e-8$, to have the smallest possible system error due to zeroth-order approximations. As we can see in Figure \ref{fig:dimindep}, all curves are superimposed, which shows that the query complexity is indeed dimension independent, as described by Corollary \ref{cor:s2d}
\item $\sus = \OO(\frac{d}{k})$ (We choose $\sus=\lfloor\frac{d}{k}\rfloor$): As from Corollary \ref{cor:specialq}, we take $q  = 2s + 6\frac{d}{\sus} $ with $s = 2k + k^*$. In that case, from Corollary \ref{cor:specialq},  the query complexity will still be $\OO(k)$ (i.e. dimension independent), as a sum of two $\OO(k)$ terms, although larger than in the case $\sus=d$ above (since the constant from the $\OO$ notation in Corollary \ref{cor:specialq} will be larger here). We can observe that this is indeed the case in Figure \ref{fig:dimindepbis}.
\end{itemize}
\paragraph{Dimension weak-dependence}
We now turn to the case where $\sus$ is fixed. We choose $q$ as in Corollary \ref{cor:specialq} ($q  = 2s + 6\frac{d}{\sus} $ with $s = 2k + k^*$ ): the query now depends on $d$ in that case, as predicted by Corollary \ref{cor:specialq}, which can indeed be observed in Figure \ref{fig:dimweakdep}.
\begin{figure}[htp]
  \centering
  \begin{minipage}{0.28\textwidth}
    \centering
    \subfigure[$f(\x)$]{\includegraphics[scale=0.28]{./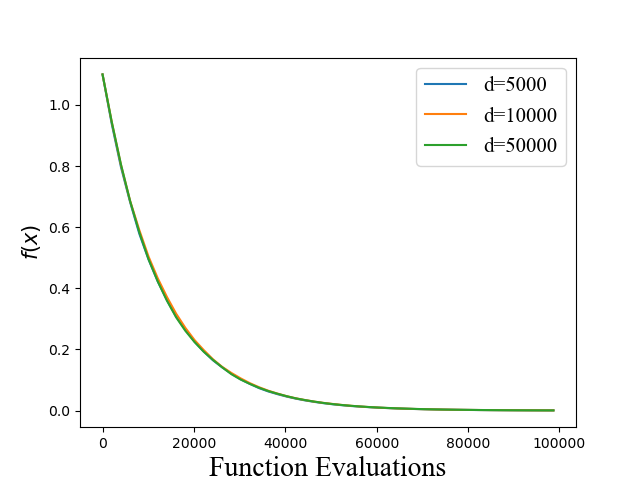}}
    \subfigure[$\|\x - \x^*\|$]{\includegraphics[scale=0.28]{./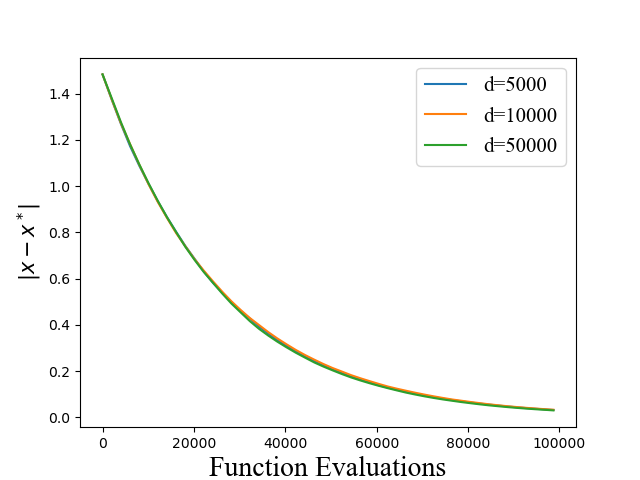}}
      \caption{$\sus=d$}\label{fig:dimindep}
  \end{minipage}
  \hfill
  \begin{minipage}{0.28\textwidth}
    \centering
    \subfigure[$f(\x)$]{\includegraphics[scale=0.28]{./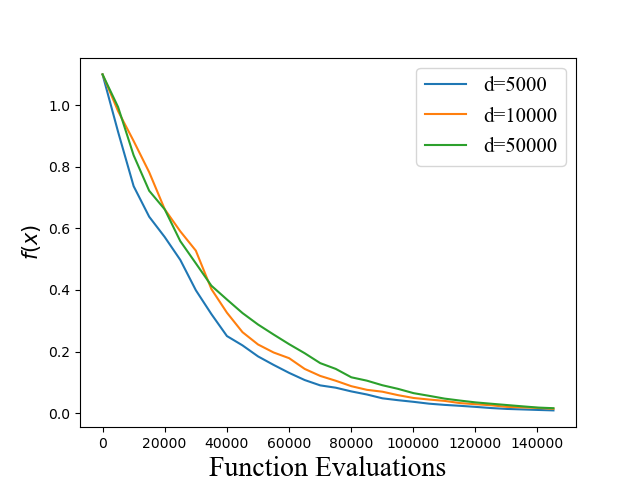}}
    \subfigure[$\|\x - \x^*\|$]{\includegraphics[scale=0.28]{./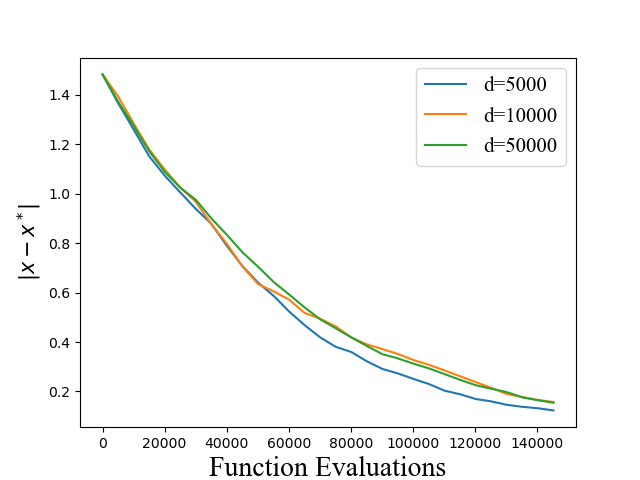}}
      \caption{$\sus = \lfloor \frac{d}{k} \rfloor$}\label{fig:dimindepbis}
      \end{minipage}
      \hfill
  \begin{minipage}{0.28\textwidth}
    \centering
    \subfigure[$f(\x)$]{\includegraphics[scale=0.28]{./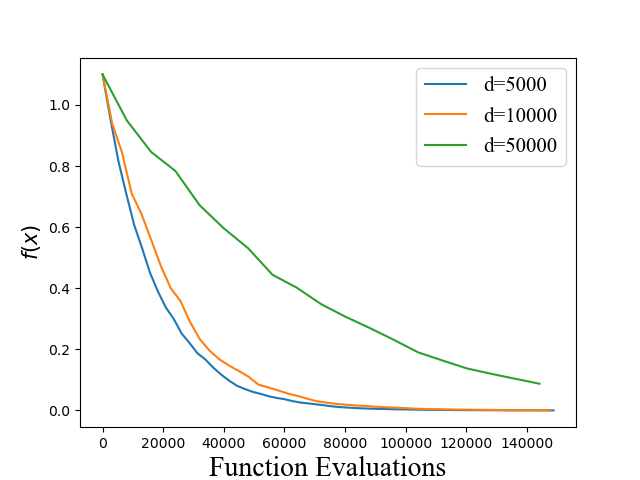}}
    \subfigure[$\|\x - \x^*\|$]{\includegraphics[scale=0.28]{./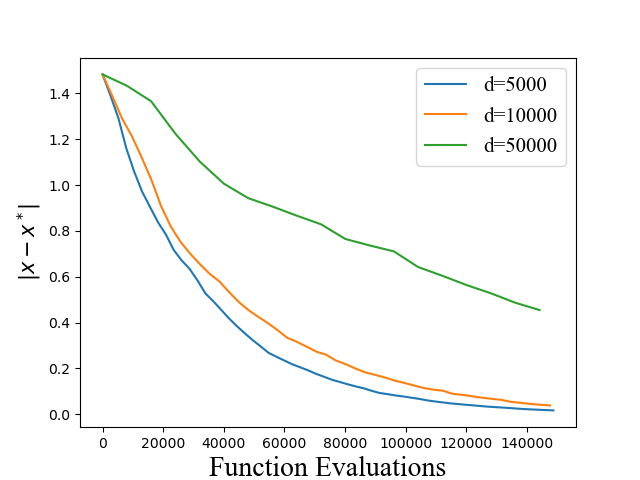}}
      \caption{$\sus = 50$}\label{fig:dimweakdep}
    \end{minipage}
    \caption{Dependence on the dimensionality of the query complexity}
 \end{figure}
 \section{Additional results on adversarial attacks}
\label{sec:addit-results-advers}
In this section, we provide additional results for the adversarial attacks problem in \ref{sec:zeroth-order-few}, in Figure \ref{fig:tableadv}. The parameters we used for SZOHT to generate that table are the same as in \ref{sec:zeroth-order-few}, except for MNIST, for which we choose $k=20$, $q=10$, and $\sus=10$, and for ImageNet, for which we choose $k=100000, \sus=20000$ and $q=100$. As we can see, SZOHT allows to obtain sparse attacks, contrary to the other algorithms, and with a smaller $\ell_2$ distance and a larger success rate, using less iterations: this shows that SZOHT allows to enforce sparsity, and efficiently exploits that sparsity in order to have a lower query complexity than vanilla sparsity constrained ZO algorithms.
\begin{figure}[htp]
  \centering
  \subfigure[MNIST]{\begin{tabular}{c c c c c}
  Method & ASR & $\ell_0$ dist. & $\ell_2$ dist.    & Iter \\
      \hhline{= = = = =}
  RSPGF & $78 \%$ & $100 \%$ & $10.9$ & 67 \\
    \midrule
  ZORO & $75 \%$ & $100 \%$ & $15.1$ & 550 \\
    \midrule
  ZSCG & $79 \%$ & $100 \%$ & $10.3$ & 252 \\
    \midrule
  \textbf{SZOHT} & $79 \%$ & $2.5 \%$ & $8.5$ & 36 \\
  \bottomrule
                    \end{tabular}}
                  \hfill
                                   \subfigure[CIFAR]{\begin{tabular}{c c c c c}
  Method & ASR &$\ell_0$ dist. & $\ell_2$ dist. & Iter \\
  \hhline{= = = = =}
  RSPGF & $83 \%$ & $100 \%$ & $4.1$ & 326 \\
    \midrule
  ZORO & $86 \%$ & $100 \%$ & $62.9$ & 592 \\
    \midrule
  ZSCG & $86 \%$ & $100 \%$ & $8.4$ & 126 \\
    \midrule
  \textbf{SZOHT} & $91 \%$ & $1.9 \%$ & $2.6$ & 26 \\
    \bottomrule
                                                     \end{tabular}}
                                                   \hfill
  \subfigure[ImageNet]{\begin{tabular}{c c c c c}
  Method & ASR & $\ell_0$ dist. & $\ell_2$ dist. & Iter. \\
  \hhline{= = = = =}
  RSPGF & $91 \%$ & $100 \%$ & $19.9$ & 137 \\
  \midrule
  ZORO & $90 \%$ & $100 \%$ & $111.9$ & 674 \\
    \midrule
  ZSCG & $76 \%$ & $100 \%$ & $111.3$ & 277 \\
    \midrule
  \textbf{SZOHT} & $95 \%$ & $37.3 \%$ & $10.5$ & 61 \\
    \bottomrule
                       \end{tabular}}
                \caption{Summary of results on adversarial attacks} \label{fig:tableadv}
\end{figure}

\end{document}